\let\oldtop\top
\renewcommand{\top}{\!\oldtop}
\def\eqref#1{equation~(\ref{#1})}
\def\floor#1{\left\lfloor #1 \right\rfloor}
\def\1{\bm{1}}
\newcommand{\norm}[1]{\left\| #1 \right\|_2}
\def\inner#1#2{\left\langle #1, #2 \right\rangle}
\def\eps{{\varepsilon}}
\def\vu{{\bm{u}}}
\def\vv{{\bm{v}}}
\def\vw{{\bm{w}}}
\def\vx{{\bm{x}}}
\def\vy{{\bm{y}}}
\def\vz{{\bm{z}}}
\def\mA{{\bf{A}}}
\def\gH{{\mathcal{H}}}
\def\gO{{\mathcal{O}}}
\def\gX{{\mathcal{X}}}
\def\gY{{\mathcal{Y}}}
\def\sR{{\mathbb{R}}}
\DeclareMathOperator*{\argmin}{arg\,min}
\DeclareMathOperator{\prox}{prox}
\theoremstyle{plain}
\newtheorem{thm}{Theorem}
\newtheorem{definition}{Definition}
\newtheorem{lemma}[thm]{Lemma}
\newtheorem*{remark}{Remark}
\newtheorem{corollary}[thm]{Corollary}
\def\Ddots{\mathinner{\mkern1mu\raise\p@
\vbox{\kern7\p@\hbox{.}}\mkern2mu
\raise4\p@\hbox{.}\mkern2mu\raise7\p@\hbox{.}\mkern1mu}}
\newcommand*{\rom}[1]{\lowercase\expandafter\@slowromancap\romannumeral #1@}
\title{DIPPA: An improved Method for Bilinear Saddle Point Problems}
\author{
 Guangzeng Xie 
  \thanks{Academy for Advanced Interdisciplinary Studies,
  Peking University; email: \texttt{smsxgz@pku.edu.cn}.}
   \and
 Yuze Han 
  \thanks{School of Mathematical Sciences,
  Peking University; email: 
  \texttt{hanyuze97@pku.edu.cn}.}
  \and
 Zhihua Zhang 
  \thanks{School of Mathematical Sciences,
  Peking University; email:
  \texttt{zhzhang@math.pku.edu.cn}.}
}
\date{\today}
\begin{document}

\maketitle

\begin{abstract}
    This paper studies bilinear saddle point problems $\min_{\vx} \max_{\vy} g(\vx) + \vx^{\top} \mA \vy - h(\vy)$, where the functions $g, h$ are smooth and strongly-convex.
    When the gradient and proximal oracle related to $g$ and $h$ are accessible,
    optimal algorithms have already been developed in the literature \cite{chambolle2011first, palaniappan2016stochastic}.
    % it is well-known that the upper complexity bound \cite{chambolle2011first, palaniappan2016stochastic} matches the lower complexity bound \cite{zhang2019lower} of first-order algorithms for solving bilinear saddle point problems. 
    However, the proximal operator is not always easy to compute, especially in constraint zero-sum matrix games \cite{zhang2020sparsified}.
    This work proposes a new algorithm which only requires the access to the gradients of $g, h$.
    Our algorithm achieves a complexity upper bound $\tilde{\mathcal{O}}\left( \frac{\|\mA\|_2}{\sqrt{\mu_x \mu_y}} + \sqrt[4]{\kappa_x \kappa_y (\kappa_x + \kappa_y)}  \right)$ which has optimal dependency on the coupling condition number $\frac{\|\mA\|_2}{\sqrt{\mu_x \mu_y}}$ up to logarithmic factors.
    % Our algorithm approximately solves two inexact proximal subproblems in each step, one is related to $g(\vx) - h(\vy)$ and the other one is based on $\vx^{\top} A \vy$. 

\end{abstract}

\section{Introduction}
We consider the convex-concave bilinear saddle point problem of the following form
\begin{align}\label{prob:bilinear}
    \min_{\vx \in \gX} \max_{\vy \in \gY} f(\vx, \vy) \triangleq g(\vx) + \inner{\vx}{\mA \vy} - h(\vy).
\end{align}
This formulation arises in several popular machine learning applications such as matrix games~\cite{carmon2019variance, carmon2020coordinate, ibrahim2019linear}, regularized empirical risk minimization~\cite{zhang2017stochastic,tan2018stochastic}, AUC maximization~\cite{ying2016stochastic,shen2018towards}, prediction and regression problems~\cite{taskar2005structured,xu2004maximum}, reinforcement learning~\cite{du2017stochastic,dai2018sbeed}.

We study the most fundamental setting where $g$ is $L_x$-smooth and $\mu_x$-strongly convex, and $h$ is $L_y$-smooth and $\mu_y$-strongly convex.
For the first-order algorithms which iterate with gradient and proximal point operation of $g, h$, it has been shown that the upper complexity bounds \cite{chambolle2011first, palaniappan2016stochastic} of $\gO\left( \left( \frac{\norm{\mA}}{\sqrt{\mu_x \mu_y}} + 1 \right) \log\left(\frac{1}{\eps}\right) \right)$ match the lower complexity bound \cite{zhang2019lower} for solving bilinear saddle point problems. However, finding the exact proximal point of $g$ and $h$ could be very costly and impractical in some actual implementations.
One example is matrix games with extra cost functions $g$ and $h$.
In this case, the total cost function of the first player is $g(\vx) + \inner{\vx}{\mA \vy} $, and that of the second player is $h(\vy) - \inner{\vx}{\mA \vy}$. The forms of $g$ and $h$ could be complicated.
Moreover, \citet{kanzow2016augmented} proposed an augmented Lagrangian-type algorithm for solving generalized Nash equilibrium problems. Applying this algorithm to solve equality constrained matrix games \cite{zhang2020sparsified}, we obtain a subproblem of the form (\ref{prob:bilinear}). Even if the constraints are linear in $\vx$ and $\vy$, the complexity of calculating the exact proximal point of $g$ and $h$ could be unacceptable.

In this paper, we consider first-order algorithms for solving the bilinear saddle point problem (\ref{prob:bilinear}) with assuming that only the gradients of $g$ and $h$ are available.
In this setting, \citet{ibrahim2019linear, zhang2019lower} proved a gradient complexity lower bound $\Omega\left(\left(\frac{\norm{\mA}}{\sqrt{\mu_x \mu_y}} + \sqrt{\kappa_x + \kappa_y} \right) \log\left(\frac{1}{\eps}\right) \right)$.
On the other hand, we can modify the algorithm in \cite{chambolle2011first} to adapt to this setting with employing Accelerated Gradient Descent (AGD) to approximately solve each proximal point of $g, h$. This inexact version achieves an upper bound of $\tilde\gO\left(\left(\sqrt{\frac{\norm{\mA} L}{\mu_x \mu_y}} + \sqrt{\kappa_x + \kappa_y}\right)\log\left( \frac{1}{\eps} \right)\right)$ where $L = \max \left\{ \norm{\mA}, L_x, L_y \right\}$ (see Section \ref{sec:aipfb} in Appendix for more details).
And recently, \citet{wang2020improved} showed a same upper bound in a more general case.
This upper bound is tight when $\vx$ and $\vy$ are approximately decoupled (i.e., $\norm{\mA} < \max\{\mu_x, \mu_y\}$) or the coupling matrix $A$ is dominant in Problem (\ref{prob:bilinear}) (i.e., $\norm{\mA} > \max\{L_x, L_y\}$).
However, in the intermediate state, this upper bound would no longer be tight (see Figure \ref{fig:compare} for illustration).

In this work, we propose a new algorithm Double Inexact Proximal Point Algorithm (Algorithm \ref{algo:dippa}) and prove a convergence rate of
\begin{align*}
    \tilde\gO\left(\left(\frac{\norm{\mA}}{\sqrt{\mu_x \mu_y}} + \sqrt[4]{\kappa_x \kappa_y (\kappa_x + \kappa_y)}\right)\log\left( \frac{1}{\eps} \right)\right).
\end{align*}
Our upper bound enjoys a tight dependency on the coupling condition number $\frac{\norm{\mA}}{\sqrt{\mu_x \mu_y}}$ while suffers from an extra factor $\sqrt[4]{\frac{\kappa_x \kappa_y}{\kappa_x + \kappa_y}}$ on the condition numbers of $g$ and $h$. Our method is better than previous upper bounds for $\norm{\mA} = \Omega(\sqrt{L_x \mu_y + L_y \mu_x})$ and matches lower bound for $\norm{\mA} = \Omega( \sqrt[4]{L_x L_y (L_x \mu_y + L_y \mu_x)})$ (see Figure \ref{fig:compare} for illustration).
However, our method does not perform well in the case of weak coupling where $\norm{\mA} = \gO(\sqrt{L_x \mu_y + L_y \mu_x})$.

\begin{figure}[t]
    \vskip 0.2in
    \begin{center}
        \centerline{\includegraphics[width=0.5\columnwidth]{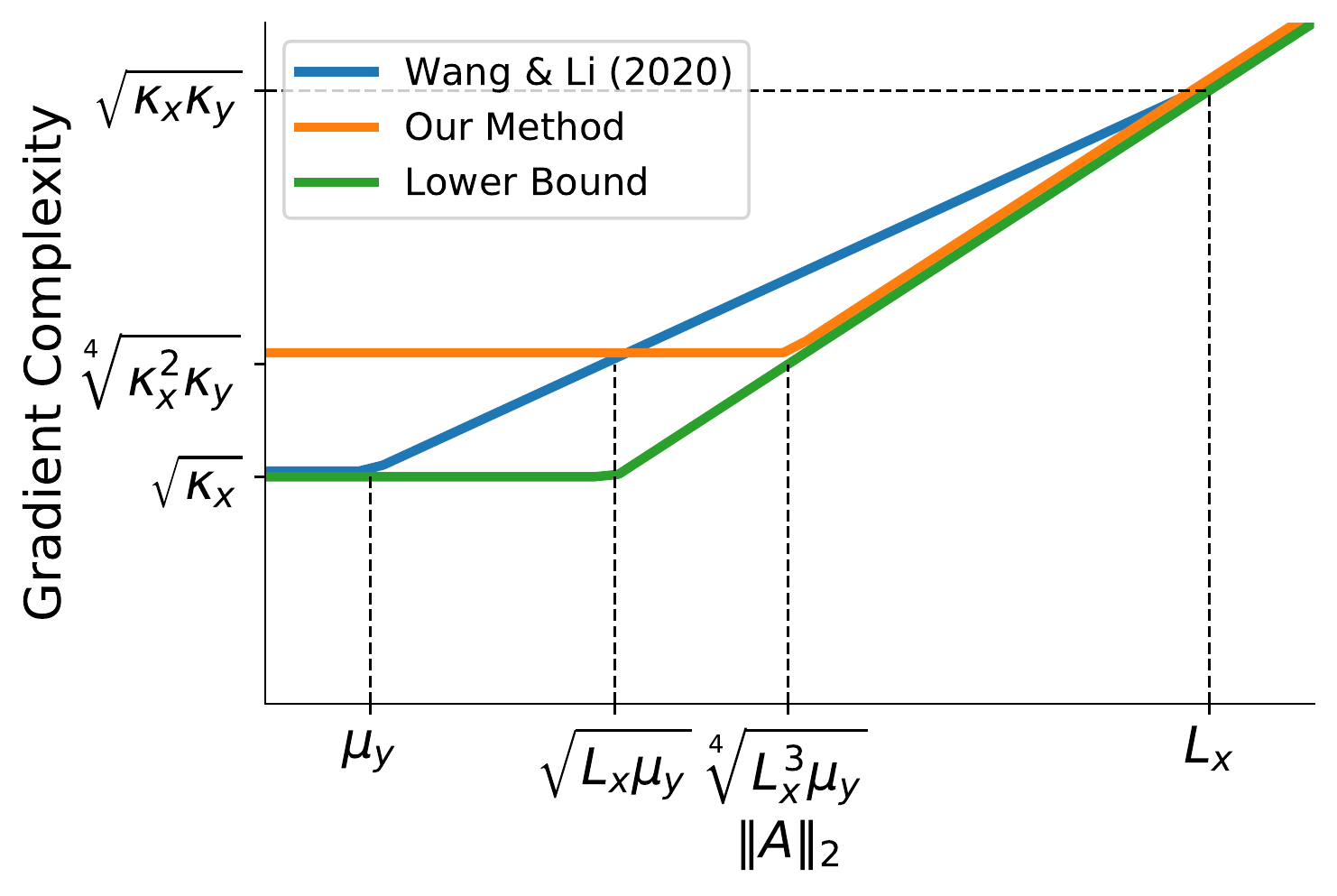}}
        \caption{Comparison of previous upper bounds \cite{wang2020improved}, lower bound \cite{zhang2019lower} and the results in this paper where $L_x = L_y, \mu_x < \mu_y$, ignoring logarithmic factors. We demonstrate the upper bounds and the lower bounds as a function of $\norm{A}$ with other fixed  parameters. }
        \label{fig:compare}
    \end{center}
    \vskip -0.2in
\end{figure}

The remainder of the paper is organized as follows.
We present preliminaries of the saddle point problems in Section \ref{sec:preliminary}.
Then we review some related work in Section \ref{sec:related} and two algorithm AGD and APFB which will be employed to solve subproblems of our method in Section \ref{sec:compnents}.
In Section \ref{sec:main}, we present the details of our method and provide a brief sketch of the analysis.
We conclude our work in Section \ref{sec:conclusion} and all the details of the proof can be found in Appendix.
Moreover, we also provide an inexact version of APFB in Section \ref{sec:aipfb} in Appendix.

\section{Preliminaries}\label{sec:preliminary}
In this paper, we use $\norm{A}$ to denote the spectral norm of $\mA$, i.e., the largest singular value of $\mA$.

Then we review some standard definitions of strong convexity and smoothness.
For a differentiable function $\varphi\colon \gX \to \sR$, $\varphi$ is said to be $\ell$-smooth if its gradient is $\ell$-Lipschitz continuous; that is, for any $\vx_1, \vx_2 \in \gX$, we have
\begin{align*}
    \|\nabla \varphi(\vx_1) - \nabla \varphi(\vx_2)\|_2 \leq \ell \norm{\vx_1 - \vx_2}.
\end{align*}
Moreover, $\varphi$ is said to be $\mu$-strongly convex, if for any $\vx_1, \vx_2 \in \gX$ we have
\begin{align*}
    \varphi(\vx_2) \ge \varphi(\vx_1) + \inner{\nabla \varphi(\vx_1)}{\vx_2 - \vx_1} + \frac{\mu}{2} \norm{\vx_2 - \vx_1}^2.
\end{align*}

% Let the proximal operator related to $\varphi$ at point $\vx$ be
% \begin{align*}
%     \prox_{\varphi}(\vx) = \min_{\vu \in \gX} \varphi(\vu) + \frac{1}{2} \norm{\vu - \vx}^2.
% \end{align*}

The following lemma is useful in our analysis.
\begin{lemma}\label{lem:smooth}
    Let $\varphi$ be $\ell$-smooth and $\mu$-strongly convex on $\sR^d$. Then for all $\vx_1, \vx_2 \in \sR^d$, one has
    \begin{align*}
        \inner{\nabla \varphi(\vx_1) - \nabla \varphi(\vx_2)}{\vx_1 - \vx_2}
        \ge \frac{\ell \mu}{\ell + \mu} \norm{\vx_1 - \vx_2}^2 + \frac{1}{\ell + \mu} \norm{\nabla \varphi(\vx_1) - \nabla \varphi(\vx_2)}^2.
    \end{align*}
\end{lemma}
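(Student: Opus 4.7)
The inequality is the classical strengthening of co-coercivity when one has both strong convexity and smoothness, and the standard route is via an auxiliary convex function.

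The plan is to reduce the claim to the Baillon–Haddad co-coercivity inequality for convex smooth functions. Specifically, I would introduce
\[
\psi(\vx) \triangleq \varphi(\vx) - \frac{\mu}{2}\|\vx\|_2^2.
\]
Because $\varphi$ is $\mu$-strongly convex and $\ell$-smooth, $\psi$ is convex and $(\ell - \mu)$-smooth (when $\ell = \mu$ the function is a quadratic and the inequality holds with equality, so I would treat that as a trivial edge case and assume $\ell > \mu$ below).

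Next I would invoke the co-coercivity inequality for the convex $(\ell-\mu)$-smooth function $\psi$,
\[
\inner{\nabla\psi(\vx_1)-\nabla\psi(\vx_2)}{\vx_1-\vx_2} \;\ge\; \frac{1}{\ell-\mu}\,\norm{\nabla\psi(\vx_1)-\nabla\psi(\vx_2)}^2.
\]
Substituting $\nabla\psi(\vx) = \nabla\varphi(\vx) - \mu\vx$ and writing $\vd \triangleq \vx_1-\vx_2$, $\vg \triangleq \nabla\varphi(\vx_1)-\nabla\varphi(\vx_2)$, this becomes
\[
\inner{\vg}{\vd} - \mu\norm{\vd}^2 \;\ge\; \frac{1}{\ell-\mu}\bigl(\norm{\vg}^2 - 2\mu\inner{\vg}{\vd} + \mu^2\norm{\vd}^2\bigr).
\]

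Finally I would clear the denominator, collect terms in $\inner{\vg}{\vd}$ on the left and terms in $\norm{\vg}^2$ and $\norm{\vd}^2$ on the right. A short calculation yields
\[
(\ell+\mu)\inner{\vg}{\vd} \;\ge\; \norm{\vg}^2 + \mu\ell\,\norm{\vd}^2,
\]
and dividing by $\ell+\mu$ gives exactly the stated bound. The only thing to be a little careful about is the step where I assert that $\psi$ is convex and $(\ell-\mu)$-smooth, which follows from the Hessian (or, more generally, from subtracting the quadratic lower bound of strong convexity and using the upper quadratic bound of smoothness); once that is in hand, the rest is purely algebraic and there is no real obstacle.
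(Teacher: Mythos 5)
Your proposal is correct and follows essentially the same route as the paper: subtract $\frac{\mu}{2}\|\cdot\|^2$ to form the convex, $(\ell-\mu)$-smooth function $\psi$, apply co-coercivity to $\psi$, then substitute $\nabla\psi = \nabla\varphi - \mu\,\mathrm{id}$ and rearrange. The paper implicitly divides by $\ell - \mu$ too, so your explicit note on the $\ell = \mu$ edge case is a small but harmless addition.
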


In this work, we are interested in the class of bilinear functions $f: \sR^{d_x} \times \sR^{d_y}$ of the form
\begin{align*}
    f(\vx, \vy) = g(\vx) + \inner{\vx}{\mA \vy} - h(y),
\end{align*}
where $g\colon \sR^{d_x} \to \sR$ is $L_x$-smooth and $\mu_x$-strongly convex, and $h\colon \sR^{d_y} \to \sR$ is $L_y$-smooth and $\mu_y$-strongly convex. And we denote $\kappa_x = L_x/\mu_x$ and $\kappa_y = L_y/\mu_y$.

Without loss of generality, we can assume that $L_x = L_y$. Otherwise, one can rescale the variables and take $\hat{f}(\vx, \vy) = f(\vx, \sqrt{L_x/L_y} \vy)$. It is not hard to check that this rescaling will not change condition numbers $\kappa_x, \kappa_y$ and coupling condition number $\frac{\norm{A}}{\sqrt{\mu_x \mu_y}}$.

Let the proximal operator related to $f$ at point $(\vx, \vy)$ be
\begin{align*}
    \prox_{f}(\vx, \vy) = \min_{\vu} \max_{\vv} f(\vu, \vv) + \frac{1}{2} \norm{\vu - \vx}^2 - \frac{1}{2} \norm{\vv - \vy}^2.
\end{align*}
If $f(\vx, \vy) = \inner{\vx}{\mA \vy}$, we use $\prox_{\mA}(\vx, \vy)$ for simplicity.

The optimal solution of the convex-concave minimax optimization problem $\min_{\vx} \max_{\vy} f(\vx, \vy)$ is the saddle point $(\vx^*, \vy^*)$ defined as follows.
\begin{definition}\label{def:saddle}
    $(\vx^*, \vy^*)$ is a saddle point of $f\colon \gX \times \gY \to \sR$ if for any $\vx \in \gX$ and $\vy \in \gY$, there holds
    \begin{align*}
        f(\vx^*,\vy) \leq f(\vx^*,\vy^*) \leq f(\vx,\vy^*).
    \end{align*}
\end{definition}

For strongly convex-strongly concave functions, it is well known that such a saddle point exists and is
unique. Given a tolerance $\eps > 0$, our goal is to find an $\eps$-saddle point which is defined as follows.
\begin{definition}
    $(\hat\vx, \hat\vy)$ is called an $\eps$-saddle point of $f$ if
    \begin{align*}
        \norm{\hat\vx - \vx^*} + \norm{\hat\vy - \vy^*} \le \eps.
    \end{align*}
\end{definition}

In this work, we focus on first-order algorithms which have access to oracle $\gH_{f}$.
For an inquiry on any point $(\vx, \vy)$, the oracle returns
\begin{align}\label{eq:oracle}
    \gH_{f}(\vx, \vy) \triangleq \{ \nabla g(\vx), \nabla h(\vy), \mA \vy, \mA^{\top} \vx \}.
\end{align}
Given an initial point $(\vx_0, \vy_0)$, at the $k$-th iteration,
a first-order algorithm calls the oracle on $(\vx_{k-1}, \vy_{k-1})$ and then obtains a new point $(\vx_{k}, \vy_{k})$. And $\vx_{k}$ and $\vy_{k}$ lie in two different vector spaces:
\begin{align*}
    \vx_{k} \in \mathrm{span} \{ & \vx_0, \nabla g(\vx_0), \dots, \nabla g(\vx_{k-1}), \mA \vy_0, \dots, \mA \vy_{k-1}  \},                             \\
    \vy_{k} \in \mathrm{span} \{ & \vy_0, \dots, \vy_k, \nabla h(\vy_0), \dots, \nabla h(\vy_{k-1}), \mA^{\top} \vx_0, \dots, \mA^{\top} \vx_{k-1}  \}.
\end{align*}

\section{Related Work}\label{sec:related}
% There is a long line of work on the convex-concave saddle point problem. 
There are many algorithms designed for the convex-concave saddle point problems, including extragradient (EG) algorithm \cite{korpelevich1976extragradient, tseng1995linear, mokhtari2019proximal, mokhtari2019unified},
reflected gradient descent ascent \cite{chambolle2011first, malitsky2015projected, yadav2017stabilizing},
optimistic gradient descent ascent (OGDA) \cite{daskalakis2018training, mokhtari2019proximal, mokhtari2019unified},
and other variants \cite{rakhlin2013online, rakhlin2013optimization,mertikopoulos2019optimistic}.

The bilinear case has also been studied extensively \cite{nesterov2005smooth, chambolle2011first, he2016accelerated}. And, \citet{kolossoski2017accelerated} introduced convergence results even when the feasible space is non-Euclidean and \citet{chen2014optimal, chen2017accelerated} proposed optimal algorithms for solving a special class of stochastic saddle point problems.
For a class of matrix games where $g, h$ are zero functions, \citet{carmon2019variance, carmon2020coordinate} developed variance reduction and Coordinate methods to solve the games.

For the strongly convex-strongly concave minimax problems, \citet{tseng1995linear, nesterov2006solving} provided upper bounds based on a variational inequality. Moreover, \citet{gidel2018variational, mokhtari2019unified} derived upper bounds for the OGDA algorithm. Recently, \citet{lin2020near, wang2020improved} proposed algorithms based on the approximately proximal technique and improved the upper bounds. On the other hand, \citet{ibrahim2019linear, zhang2019lower} established a lower complexity bound among all the first-order algorithms.
We provide a comparison between our results and existing results in the literature in Table \ref{table-comparison}.

\renewcommand{\arraystretch}{2}

\begin{table}[t]
    \caption{Comparison of gradient complexities to find an $\eps$-saddle point of Problem (\ref{prob:bilinear}) where $L_x = L_y$, $\kappa_x = L_x / \mu_x, \kappa_y = L_y / \mu_y$ and $L = \max\left\{\norm{A}, L_x\right\}$. The notations $\tilde\gO$ and $\tilde\Omega$ have ignored some logarithmic factors. }
    % depend on $1/\eps, \norm{A}, $ } 
    \label{table-comparison}
    \vskip 0.15in
    % \footnotesize
    \setlength{\tabcolsep}{3pt}
    \begin{center}
        \begin{tabular}{|c|c|}
            \hline
            References                           & Gradient Complexity                                                                                               \\ [0.1cm]
            \hline
            \makecell[c]{\citet{nesterov2006solving}                                                                                                                 \\ \citet{mokhtari2019unified}} & $ \tilde\gO\left(\frac{\norm{\mA}}{\mu_x} + \frac{\norm{\mA}}{\mu_y}   + \kappa_x + \kappa_y\right) $ \\[0.1cm]
            \hline
            \citet{lin2020near}                  & $ \tilde\gO\left(\frac{\norm{\mA}}{\sqrt{\mu_x \mu_y}} + \sqrt{\kappa_x \kappa_y}\right) $                        \\[0.1cm]
            \hline
            \citet{wang2020improved}             & \multirow{2}{*}{$\tilde\gO\left(\sqrt{\frac{\norm{\mA} L}{\mu_x \mu_y}} + \sqrt{\kappa_x + \kappa_y}\right)$}     \\[0.1cm]
            \cline{1-1}
            \makecell[c]{\citet{chambolle2011first}                                                                                                                  \\ Inexact version (Theorem \ref{thm:catalyst-aipfb}) }  &  \\[0.1cm]
            \hline
            This paper (Theorem \ref{thm:dippa}) & $\tilde\gO\left(\frac{\norm{\mA}}{\sqrt{\mu_x \mu_y}} + \sqrt[4]{\kappa_x \kappa_y (\kappa_x + \kappa_y)}\right)$ \\[0.1cm]
            \hline
            \makecell[c]{Lower bound                                                                                                                                 \\ \citet{ibrahim2019linear} \\ \citet{zhang2019lower} } & $\tilde\Omega\left(\frac{\norm{\mA}}{\sqrt{\mu_x \mu_y}} + \sqrt{\kappa_x + \kappa_y}\right)$  \\
            \hline
        \end{tabular}
    \end{center}
\end{table}
\renewcommand{\arraystretch}{1}

\section{Algorithm Components}\label{sec:compnents}
In this section, we present two main algorithm components. Both of them are crucial for our algorithms.

\subsection{Nesterov’s Accelerated Gradient Descent}
We present a version of Nesterov's Accelerated Gradient Descent (AGD) in Algorithm \ref{algo:agd} which is widely-used to
minimize an $\ell$-smooth and $\mu$-strongly convex function $f$ \cite{nesterov2018lectures}. Moreover, AGD is shown to be optimal among all the first-order algorithms for smooth and strongly convex optimization.
\begin{algorithm}[ht]
    % \caption{$\mathrm{AGD}(\varphi, \vx_0, \ell, \mu, T)$}\label{algo:agd}
    \caption{AGD}\label{algo:agd}
    \begin{algorithmic}[1]
        \STATE \textbf{Input:} function $\varphi$, initial point $\vx_0$, smoothness $\ell$, strongly convex module $\mu$, run-time $T$. \\[0.1cm]
        \STATE \textbf{Initialize:} $\tilde\vx_0 = \vx_0$, $\eta = 1/\ell$, $\kappa = \ell / \mu$ and $\theta = \frac{\sqrt{\kappa} - 1}{\sqrt{\kappa} + 1}$. \\[0.1cm]
        \STATE \textbf{for} $k = 1, \dots, T$ \textbf{do}\\[0.1cm]
        \STATE\quad $\vx_k = \tilde\vx_{k-1} - \eta \nabla \varphi(\tilde\vx_{k-1})$.\\[0.1cm]
        \STATE\quad $\tilde\vx_k = \vx_k + \theta (\vx_k - \vx_{k-1})$. \\[0.1cm]
        \STATE\textbf{end for} \\[0.1cm]
        \STATE \textbf{Output:} $\vx_T$.
    \end{algorithmic}
\end{algorithm}

The following theorem provides the convergence rate of AGD.
\begin{thm}\label{thm:agd}
    Assume that $\varphi$ is $\ell$-smooth and $\mu$-strongly convex. Then the output of Algorithm \ref{algo:agd} satisfies
    \begin{align*}
        \varphi(\vx_T) - \varphi(\vx^*) \le \frac{\ell + \mu}{2} \norm{\vx_0 - \vx^*}^2 \exp\left(-\frac{T}{\sqrt{\kappa}}\right),
    \end{align*}
    where $\kappa = \ell/\mu$ is the condition number, and $\vx^*$ is the unique global minimum of $\varphi$.
\end{thm}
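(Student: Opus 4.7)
The plan is a standard Lyapunov (potential-function) argument for accelerated gradient descent. I would introduce an auxiliary sequence $\vv_k$, initialized with $\vv_0 = \vx_0$ and updated so that the extrapolation point admits the convex decomposition $\tilde\vx_{k-1} = (1-1/\sqrt\kappa)\,\vx_{k-1} + (1/\sqrt\kappa)\,\vv_{k-1}$; matching this identity against $\tilde\vx_{k-1} = \vx_{k-1} + \theta(\vx_{k-1}-\vx_{k-2})$ with $\theta = (\sqrt\kappa-1)/(\sqrt\kappa+1)$ forces the recursion $\vv_k = \vx_k + \theta\sqrt\kappa\,(\vx_k - \vx_{k-1})$. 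Define the Lyapunov function
\begin{align*}
\Phi_k \triangleq \varphi(\vx_k) - \varphi(\vx^*) + \frac{\mu}{2}\norm{\vv_k - \vx^*}^2,
\end{align*}
and aim to prove the geometric contraction $\Phi_k \le (1 - 1/\sqrt\kappa)\,\Phi_{k-1}$ for every $k\ge 1$.

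To establish the one-step contraction I would combine three ingredients. First, apply the $\ell$-smooth descent lemma to the gradient step $\vx_k = \tilde\vx_{k-1} - (1/\ell)\nabla\varphi(\tilde\vx_{k-1})$, obtaining $\varphi(\vx_k) \le \varphi(\tilde\vx_{k-1}) - \frac{1}{2\ell}\norm{\nabla\varphi(\tilde\vx_{k-1})}^2$. Second, invoke $\mu$-strong convexity of $\varphi$ at $\tilde\vx_{k-1}$ evaluated against the two anchors $\vx^*$ and $\vx_{k-1}$, and form the convex combination with weights $1/\sqrt\kappa$ and $1-1/\sqrt\kappa$; because of the identity defining $\vv_{k-1}$, the linear part of the combined inequality collapses into an inner product $\inner{\nabla\varphi(\tilde\vx_{k-1})}{(1/\sqrt\kappa)(\vv_{k-1}-\vx^*)}$. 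Third, expand $\norm{\vv_k - \vx^*}^2$ using the explicit linear expression for $\vv_k - \vx^*$ in terms of $\vx_{k-1}-\vx^*$, $\vv_{k-1}-\vx^*$, and $\nabla\varphi(\tilde\vx_{k-1})$; the resulting cross term cancels the inner product from the second ingredient, while the quadratic gradient-norm term is absorbed by the $-\frac{1}{2\ell}\norm{\nabla\varphi(\tilde\vx_{k-1})}^2$ gain from the descent lemma. The specific choices $\eta = 1/\ell$ and $\theta = (\sqrt\kappa-1)/(\sqrt\kappa+1)$ are precisely what make all of these cancellations exact.

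Once $\Phi_k \le (1-1/\sqrt\kappa)\,\Phi_{k-1}$ is established, iterating $T$ times and using $(1-1/\sqrt\kappa)^T \le \exp(-T/\sqrt\kappa)$ gives $\Phi_T \le \exp(-T/\sqrt\kappa)\,\Phi_0$. For the initial value, $\vv_0 = \vx_0$ together with $\ell$-smoothness and $\nabla\varphi(\vx^*)=\vzero$ yield $\varphi(\vx_0) - \varphi(\vx^*) \le (\ell/2)\norm{\vx_0-\vx^*}^2$, hence $\Phi_0 \le \frac{\ell+\mu}{2}\norm{\vx_0-\vx^*}^2$. Dropping the nonnegative $(\mu/2)\norm{\vv_T-\vx^*}^2$ term on the left of the unrolled bound yields the stated conclusion. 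The main obstacle is the algebraic bookkeeping in the middle paragraph: one must verify that, with the specific auxiliary sequence and weights chosen above, the $\inner{\nabla\varphi(\tilde\vx_{k-1})}{\,\cdot\,}$ cross terms and the $\norm{\nabla\varphi(\tilde\vx_{k-1})}^2$ contributions from the three ingredients align for exact cancellation. This is routine but requires care, and it is what pins down every constant appearing in the algorithm.
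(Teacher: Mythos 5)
The paper does not actually prove Theorem~\ref{thm:agd}; it cites Nesterov's textbook, which uses estimating sequences. Your Lyapunov (potential-function) route is therefore genuinely different in flavor, and as a general plan it is a perfectly valid way to prove this bound. However, the specific parametrization in your sketch contains an error that would prevent the advertised cancellations from occurring, so as written the proof does not close.

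The issue is the convex decomposition weight. You impose $\tilde\vx_{k-1} = (1-1/\sqrt\kappa)\,\vx_{k-1} + (1/\sqrt\kappa)\,\vv_{k-1}$ and then derive $\vv_k = \vx_k + \theta\sqrt\kappa\,(\vx_k - \vx_{k-1})$, where $\theta\sqrt\kappa = \frac{\kappa - \sqrt\kappa}{\sqrt\kappa+1}$. But the weight that makes the Lyapunov argument close is $\frac{1}{\sqrt\kappa+1}$, not $\frac{1}{\sqrt\kappa}$; that is, the correct decomposition is
\[
\tilde\vx_{k-1} = \frac{\sqrt\kappa}{\sqrt\kappa+1}\,\vx_{k-1} + \frac{1}{\sqrt\kappa+1}\,\vv_{k-1},
\qquad
\vv_k = \vx_k + (\sqrt\kappa - 1)(\vx_k - \vx_{k-1}) = \vx_{k-1} + \sqrt\kappa\,(\vx_k - \vx_{k-1}),
\]
and one checks that this $\vv_k$ also satisfies the momentum-form recursion $\vv_k = (1-\tau)\vv_{k-1} + \tau\tilde\vx_{k-1} - \frac{\tau}{\mu}\nabla\varphi(\tilde\vx_{k-1})$ with $\tau = 1/\sqrt\kappa$. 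With this $\vv$, and mixing the two strong-convexity inequalities at $\tilde\vx_{k-1}$ with weights $\tau$ (anchor $\vx^*$) and $1-\tau$ (anchor $\vx_{k-1}$), the linear term becomes $\inner{\nabla\varphi(\tilde\vx_{k-1})}{\tau^2(\tilde\vx_{k-1}-\vx^*) + \tau(1-\tau)(\vv_{k-1}-\vx^*)}$, and both pieces cancel exactly against the cross terms from expanding $\frac{\mu}{2}\norm{\vv_k-\vx^*}^2$, while the $\frac{\tau^2}{2\mu}\norm{\nabla\varphi(\tilde\vx_{k-1})}^2 = \frac{1}{2\ell}\norm{\nabla\varphi(\tilde\vx_{k-1})}^2$ term from that expansion is exactly absorbed by the descent lemma. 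What survives is $-\tau\norm{(\vv_{k-1}-\vx^*) - (\tilde\vx_{k-1}-\vx^*)}^2 - \norm{\tilde\vx_{k-1}-\vx_{k-1}}^2 \le 0$, which gives $\Phi_k \le (1-\tau)\Phi_{k-1}$.

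With your $\vv_k = \vx_k + \theta\sqrt\kappa(\vx_k - \vx_{k-1})$ this breaks: the coefficient of $\inner{\nabla\varphi(\tilde\vx_{k-1})}{\vv_{k-1}-\vx^*}$ coming from expanding $\norm{\vv_k-\vx^*}^2$ is $-\frac{(\kappa+1)^2}{\kappa(\sqrt\kappa+1)^2}$ times $\tau$, which equals $-\tau$ only when $\kappa=1$, so the cross term does not cancel your $\tau\inner{\nabla\varphi(\tilde\vx_{k-1})}{\vv_{k-1}-\vx^*}$. Writing the residual quadratic form in the three vectors $(\vx_{k-1}-\vx^*,\,\vv_{k-1}-\vx^*,\,\nabla\varphi(\tilde\vx_{k-1})/\ell)$ and computing its $2\times2$ leading principal minor gives $-(\sqrt\kappa-1)^2(\sqrt\kappa+1)^2 < 0$ for $\kappa > 1$, so the form is indefinite and the one-step contraction cannot be established by this route. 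Your claim that the choices ``are precisely what make all of these cancellations exact'' is the gap: the matching that is actually forced is $\theta = \frac{\tau'(1-\tau)}{\tau}$, which for $\theta = \frac{\sqrt\kappa-1}{\sqrt\kappa+1}$ gives $\tau' = \frac{1}{\sqrt\kappa+1}$, not $\tau' = \frac{1}{\sqrt\kappa}$. Everything else in your plan --- descent lemma, $\tau$/$(1-\tau)$ mixing in strong convexity, unrolling $(1-1/\sqrt\kappa)^T\le\exp(-T/\sqrt\kappa)$, and the initialization bound $\Phi_0 \le \frac{\ell+\mu}{2}\norm{\vx_0-\vx^*}^2$ --- is correct and would go through once $\vv_k$ is fixed.
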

A standard analysis of Theorem \ref{thm:agd} based on estimating sequence can be found in \cite{nesterov2018lectures}.
AGD will be used as a basic component for acceleration in this paper.

\subsection{Accelerated Proximal Forward-Backward Algorithm}
The Accelerated Proximal Forward-Backward Algorithm (APFB, Algorithm \ref{algo:apfb}) is proposed by \citet{chambolle2011first} which is an optimal method when the proximal oracle of $g, h$ is available \cite{zhang2019lower}. APFB takes the alternating order of updating $\vx$ and $\vy$ and employs momentum steps as well as AGD which yields acceleration.

\begin{algorithm}[ht]
    \caption{APFB}\label{algo:apfb}
    \begin{algorithmic}[1]
        \STATE \textbf{Input:} function $g, h$, coupling matrix $\mA$, initial point $\vx_0, \vy_0$, strongly convex module $\mu_x, \mu_y$, run-time $T$. \\[0.1cm]
        \STATE \textbf{Initialize:} $\tilde\vx_0 = \vx_0$, $\gamma = \frac{1}{\norm{\mA}}\sqrt{\frac{\mu_y}{\mu_x}}$, $\sigma = \frac{1}{\norm{\mA}}\sqrt{\frac{\mu_x}{\mu_y}}$ and $\theta = \frac{\norm{\mA}}{\sqrt{\mu_x \mu_y} + \norm{\mA}}$. \\[0.1cm]
        \STATE \textbf{for} $k = 1, \cdots, T$ \textbf{do}\\[0.1cm]
        % \STATE\quad $\vy_k = \prox_{\sigma h} (\vy_{k-1} + \sigma A^{\top} \tilde\vx_{k-1})$. \\[0.1cm]
        \STATE\quad $\vy_k = \argmin_{\vv} h(\vv) + \frac{1}{2 \sigma} \norm{\vv - \vy_{k-1} - \sigma \mA^{\top} \tilde\vx_{k-1}}^2$. \\[0.1cm]
        % \STATE\quad $\vx_k = \prox_{\gamma g} (\vx_{k-1} - \gamma A \vy_k)$. \\[0.1cm]
        \STATE\quad $\vx_k = \argmin_{\vu} g(\vu) + \frac{1}{2\gamma} \norm{\vu - \vx_{k-1} + \gamma \mA \vy_k}^2$. \\[0.1cm]
        \STATE\quad $\tilde\vx_k = \vx_k + \theta (\vx_k - \vx_{k-1})$. \\[0.1cm]
        \STATE\textbf{end for} \\[0.1cm]
        \STATE \textbf{Output:} $\vx_T, \vy_T$.
    \end{algorithmic}
\end{algorithm}

A theoretical guarantee for the APFB algorithm is presented in the following theorem. The proof of Theorem \ref{thm:apfb} can be found in Appendix Section B.% \ref{app:apfb}.
\begin{thm}\label{thm:apfb}
    Assume that $g$ is $\mu_x$-strongly convex and $h$ is $\mu_y$-strongly convex. Then the output of Algorithm \ref{algo:apfb} satisfies
    \begin{small}\begin{align*}
            \mu_x \norm{\vx_T - \vx^*}^2 + \mu_y \norm{\vy_T - \vy^*}^2 \le \theta^{T-1}\frac{ \norm{\mA}}{\sqrt{\mu_x \mu_y}}
            \left( \mu_x \norm{\vx_0 - \vx^*}^2 + \mu_y \norm{\vy_0 - \vy^*} \right),
        \end{align*}\end{small}
    where $\vx^*, \vy^*$ is the unique saddle point of $f(\vx, \vy) = g(\vx) + \inner{\vx}{\mA \vy} - h(\vy)$ and $\theta = \frac{\norm{\mA}}{\sqrt{\mu_x \mu_y} + \norm{\mA}}$.
\end{thm}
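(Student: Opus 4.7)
The plan is to follow the classical primal-dual analysis in the style of Chambolle and Pock, specialized to the strongly convex-strongly concave setting. First I would write down the first-order optimality conditions for the two proximal subproblems in Algorithm \ref{algo:apfb}, namely
\begin{align*}
\nabla h(\vy_k) &= \mA^\top \tilde\vx_{k-1} + \tfrac{1}{\sigma}(\vy_{k-1} - \vy_k),\\
\nabla g(\vx_k) &= -\mA \vy_k + \tfrac{1}{\gamma}(\vx_{k-1} - \vx_k),
\end{align*}
and compare them with the saddle-point equations $\nabla h(\vy^*)=\mA^\top \vx^*$ and $\nabla g(\vx^*)=-\mA\vy^*$. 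Taking inner products of the differences with $\vy_k-\vy^*$ and $\vx_k-\vx^*$ and invoking $\mu_y$- and $\mu_x$-strong convexity (together with the three-point identity $\langle \vu-\vv,\vv-\vw\rangle=\tfrac{1}{2}\|\vu-\vw\|^2-\tfrac{1}{2}\|\vv-\vw\|^2-\tfrac{1}{2}\|\vu-\vv\|^2$) yields two one-step descent inequalities in the weighted norms $\tfrac{1}{2\gamma}\|\cdot\|^2$ and $\tfrac{1}{2\sigma}\|\cdot\|^2$.

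Second, I would sum the two inequalities. The bilinear cross terms collapse to $\langle \vx_k-\tilde\vx_{k-1},\,\mA(\vy_k-\vy^*)\rangle$, which by the momentum update equals $\langle \vx_k-\vx_{k-1},\mA(\vy_k-\vy^*)\rangle-\theta\langle \vx_{k-1}-\vx_{k-2},\mA(\vy_k-\vy^*)\rangle$. Splitting the second piece as $-\theta\langle \vx_{k-1}-\vx_{k-2},\mA(\vy_{k-1}-\vy^*)\rangle-\theta\langle \vx_{k-1}-\vx_{k-2},\mA(\vy_k-\vy_{k-1})\rangle$ exposes the telescoping structure. This motivates the Lyapunov function
\begin{align*}
\Phi_k \,\triangleq\, \tfrac{1}{2\gamma}\|\vx_k-\vx^*\|^2+\tfrac{1}{2\sigma}\|\vy_k-\vy^*\|^2+\langle \vx_k-\vx_{k-1},\,\mA(\vy_k-\vy^*)\rangle,
\end{align*}
for which the derivation above essentially rearranges into
\begin{align*}
\Phi_k+\tfrac{1}{2\gamma}\|\vx_k-\vx_{k-1}\|^2+\tfrac{1}{2\sigma}\|\vy_k-\vy_{k-1}\|^2+\mu_x\|\vx_k-\vx^*\|^2+\mu_y\|\vy_k-\vy^*\|^2 \le \theta\,\Phi_{k-1}+R_k,
\end{align*}
where $R_k=\theta\langle \vx_{k-1}-\vx_{k-2},\,\mA(\vy_k-\vy_{k-1})\rangle$ is the unwanted cross term. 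I would absorb $R_k$ via Cauchy-Schwarz, $|R_k|\le \theta\|\mA\|\|\vx_{k-1}-\vx_{k-2}\|\|\vy_k-\vy_{k-1}\|$, combined with an AM-GM split that is exactly feasible for the given choices $\gamma=\|\mA\|^{-1}\sqrt{\mu_y/\mu_x}$, $\sigma=\|\mA\|^{-1}\sqrt{\mu_x/\mu_y}$, and $\theta=\|\mA\|/(\sqrt{\mu_x\mu_y}+\|\mA\|)$. The parameter identities $\gamma\sigma\|\mA\|^2=1$, $(1+2\gamma\mu_x)\theta=(1+2\sigma\mu_y)\theta=1-\theta+\theta=1$ (equivalently $\tfrac{1}{\theta}=1+\tfrac{\sqrt{\mu_x\mu_y}}{\|\mA\|}$) are precisely what is needed to close the recursion as $\Phi_k\le \theta\,\Phi_{k-1}$.

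Third, unrolling the one-step contraction gives $\Phi_T\le \theta^{T-1}\Phi_1$. To translate this back to the bound in the statement, I would use Cauchy-Schwarz once more on the bilinear term in $\Phi_k$ to verify $\Phi_k\ge \tfrac{1}{4\gamma}\|\vx_k-\vx^*\|^2+\tfrac{1}{4\sigma}\|\vy_k-\vy^*\|^2$ (or a similarly clean lower bound), while upper-bounding $\Phi_1$ by $\tfrac{1}{\gamma}\|\vx_0-\vx^*\|^2+\tfrac{1}{\sigma}\|\vy_0-\vy^*\|^2$ (since $\vx_{-1}=\vx_0$ initializes the momentum term to zero). Finally, the weight conversions $\tfrac{1}{\gamma}=\|\mA\|\sqrt{\mu_x/\mu_y}$ and $\tfrac{1}{\sigma}=\|\mA\|\sqrt{\mu_y/\mu_x}$ transform $\tfrac{1}{\gamma}\|\cdot\|^2$ into $\tfrac{\|\mA\|}{\sqrt{\mu_x\mu_y}}\mu_x\|\cdot\|^2$ and similarly for the $\vy$-block, producing the extra factor $\|\mA\|/\sqrt{\mu_x\mu_y}$ that appears in the statement.

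I expect the main obstacle to be the bookkeeping in step two: the cross term $R_k$ must be controlled by the negative quadratic residuals $\tfrac{1}{2\gamma}\|\delta\vx_k\|^2+\tfrac{1}{2\sigma}\|\delta\vy_k\|^2$ produced by two successive iterations, so one must carry the previous step's $\|\vx_{k-1}-\vx_{k-2}\|^2$ term inside $\Phi_{k-1}$ (or equivalently add a slack $\tfrac{\theta}{2\gamma}\|\vx_k-\vx_{k-1}\|^2$ into the Lyapunov function) and check that the margin provided by the strong convexity gains $\mu_x\|\vx_k-\vx^*\|^2+\mu_y\|\vy_k-\vy^*\|^2$ is exactly enough to push the weighted norm coefficients from $\tfrac{1}{2\gamma},\tfrac{1}{2\sigma}$ up to $\tfrac{1}{2\theta\gamma},\tfrac{1}{2\theta\sigma}$. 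The parameter choices are calibrated so that this holds with equality, which is the heart of the argument.
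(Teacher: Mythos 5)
Your proposal is correct and follows essentially the same Chambolle--Pock style argument as the paper: strong-convexity-based one-step descent inequalities for the two prox subproblems, summation to expose the bilinear cross term $\langle \vx_k - \tilde\vx_{k-1}, \mA(\vy_k - \vy^*)\rangle$, the momentum update to generate the telescope, Cauchy--Schwarz with AM--GM (calibrated by $\gamma\sigma\norm{\mA}^2 = 1$) to absorb the unwanted piece $R_k$, and the weight conversions $1/\gamma = \norm{\mA}\sqrt{\mu_x/\mu_y}$, $1/\sigma = \norm{\mA}\sqrt{\mu_y/\mu_x}$ to obtain the stated bound. One small algebra slip worth noting: the exact identity that closes the recursion is $(1+\gamma\mu_x)\theta = (1+\sigma\mu_y)\theta = 1$ (not with the factor of $2$); your route via gradient monotonicity does produce the coefficient $\mu_x$ rather than $\mu_x/2$ in the descent inequality, but then you only need $(1+2\gamma\mu_x)\theta \ge 1$, which holds with slack, so the contraction at rate $\theta$ still goes through.
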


\section{Methodology}\label{sec:main}
In this section, we first consider balanced cases where $\kappa_x = \kappa_y$.
We introduce a Double Proximal Point Algorithm (DPPA, Algorithm \ref{algo:dppa}) with assuming that each subproblem can be solved exactly.
% and show its theoretical guarantee for solving bilinear saddle point problems (\ref{prob:bilinear}). 
Then we present an inexact version of DPPA as Double Inexact Proximal Point Algorithm (DIPPA, Algorithm \ref{algo:dippa}) with solving the subproblems iteratively and show its theoretical guarantee for solving balanced bilinear saddle point problems (\ref{prob:bilinear}).
At last, we apply Catalyst framework with DIPPA to solve unbalanced bilinear saddle point problems.

% and show its convergence rate and advantage over existing algorithms. 

\subsection{Double Proximal Point Algorithm for Balanced Cases}
We first consider balanced cases where $\mu_x = \mu_y$.
This implies $\kappa_x = \kappa_y$.
Our method is inspired from the algorithm Hermitian and skew-Hermitian splitting (HSS) \cite{Bai2003hermitian} which is designed for the non-Hermitian positive definite system of linear equations.
We present the DPPA algorithm for balanced cases in Algorithm \ref{algo:dppa}.
DPPA split the function $f$ into two parts $f = f_1 + f_2$ where $f_1(\vx, \vy) = g(\vx) - h(\vy)$ and $f_2(\vx, \vy) = \inner{\vx}{\mA \vy}$.
Moreover there are two proximal steps at each iteration of DPPA:
Line 6 performs a proximal step with respect to the function $f_1$, while Line 7 performs another proximal step related to the function $f_2$.

\begin{algorithm}[ht]
    \caption{DPPA for balanced cases}\label{algo:dppa}
    \begin{algorithmic}[1]
        \STATE \textbf{Input:} function $g, h$, coupling matrix $\mA$, initial point $(\vx_0, \vy_0)$, smoothness $L$, strongly convex module $\mu$, run-time $K$. \\[0.1cm]
        \STATE \textbf{Initialize:} $\alpha = 1/\sqrt{L \mu}$. \\[0.1cm]
        \STATE \textbf{for} $k = 1, \cdots, K$ \textbf{do}\\[0.1cm]
        \STATE\quad $\vz_{k} = \vx_{k-1} - \alpha \mA \vy_{k-1}$.\\[0.1cm]
        \STATE\quad $\vw_{k} = \vy_{k-1} + \alpha \mA^{\top} \vx_{k-1}$.\\[0.1cm]
        \STATE\quad $(\tilde\vx_{k}, \tilde\vy_{k}) = \prox_{\alpha (g - h)}(\vz_k, \vw_k)$. \\[0.1cm]
        % \STATE\quad $\tilde\vy_{k} = \prox_{\alpha h}(\vw_k)$. 
        \STATE\quad $(\vx_k, \vy_k) = \prox_{\alpha \mA} (2\tilde\vx_k - \vz_k, 2\tilde\vy_k - \vw_k)$. \\[0.1cm]
        \STATE\textbf{end for} \\[0.1cm]
        \STATE \textbf{Output:} $\vx_K, \vy_K$.
    \end{algorithmic}
\end{algorithm}

The theoretical guarantee for the algorithm DPPA in balanced cases is given in the following theorem.
\begin{thm}\label{thm:dppa}
    Assume that $g, h$ are both $L$-smooth and $\mu$-strongly convex. Denote $(\vx^*, \vy^*)$ to be the saddle point of the function $f(\vx, \vy) = g(\vx) + \vx^{\top} \mA \vy - h(\vy)$. Then the sequence $\{\vz_k, \vw_k\}_{k\ge 1}$ in Algorithm \ref{algo:dppa} satisfies
    \begin{align*}
        \norm{\vz_{k+1} - \vz^*}^2 + \norm{\vw_{k+1} - \vw^*}^2 \le \eta \left( \norm{\vz_k - \vz^*}^2 + \norm{\vw_k - \vw^*}^2 \right)
    \end{align*}
    where $\eta = \left(\frac{\sqrt{\kappa} - 1}{\sqrt{\kappa} + 1}\right)^2$, $\kappa = L/\mu$
    and $\vz^* = \vx^* - \alpha \mA \vy^*, \vw^* = \vy^* + \alpha \mA^{\top} \vx^*$.
\end{thm}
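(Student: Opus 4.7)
The plan is to recognize Algorithm \ref{algo:dppa} as Peaceman--Rachford splitting applied to the monotone inclusion that encodes the saddle-point equations of $f$. Let $T_1(\vx,\vy)=(\nabla g(\vx),\nabla h(\vy))$, which is the gradient of the separable $L$-smooth, $\mu$-strongly convex potential $G(\vx,\vy)=g(\vx)+h(\vy)$, and let $T_2(\vx,\vy)=(\mA\vy,-\mA^{\top}\vx)$, a bounded skew-symmetric linear operator. A direct computation of first-order optimality shows that $\prox_{\alpha(g-h)}$ coincides with the resolvent $J_1:=(I+\alpha T_1)^{-1}$ and $\prox_{\alpha\mA}$ with $J_2:=(I+\alpha T_2)^{-1}$, while Lines 4--5 compute $(\vz_k,\vw_k)=(I-\alpha T_2)(\vx_{k-1},\vy_{k-1})$. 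Using the identity $(I-\alpha T_2)\circ J_2=2J_2-I$, one pass of the loop acts on $(\vz_k,\vw_k)$ as the composition of reflected resolvents
\begin{align*}
(\vz_{k+1},\vw_{k+1}) \;=\; (2J_2-I)(2J_1-I)(\vz_k,\vw_k),
\end{align*}
and the saddle-point equations $\nabla g(\vx^*)+\mA\vy^*=0$, $\nabla h(\vy^*)-\mA^{\top}\vx^*=0$ confirm that $(\vz^*,\vw^*)$ is a fixed point of this map.

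The theorem then reduces to bounding the Lipschitz constant of $(2J_2-I)(2J_1-I)$. For the skew-symmetric part, $\inner{T_2\vu}{\vu}=0$ yields $\norm{(I+\alpha T_2)\vu}^2=\norm{\vu}^2+\alpha^2\norm{T_2\vu}^2=\norm{(I-\alpha T_2)\vu}^2$, so $2J_2-I=(I-\alpha T_2)(I+\alpha T_2)^{-1}$ is an isometry for every $\alpha$. It therefore suffices to show that $2J_1-I$ is a strict contraction with factor $\sqrt{\eta}=(\sqrt{\kappa}-1)/(\sqrt{\kappa}+1)$ at the choice $\alpha=1/\sqrt{L\mu}$.

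For this contraction estimate I would take any two inputs $\va_1,\va_2$, set $\vu_i=J_1(\va_i)$, $\vp=\vu_1-\vu_2$, $\vq=\nabla G(\vu_1)-\nabla G(\vu_2)$, so that $\va_1-\va_2=\vp+\alpha\vq$ and $(2J_1-I)(\va_1)-(2J_1-I)(\va_2)=\vp-\alpha\vq$. Applying Lemma \ref{lem:smooth} to $G$ gives $\inner{\vp}{\vq}\ge\frac{L\mu}{L+\mu}\norm{\vp}^2+\frac{1}{L+\mu}\norm{\vq}^2$. Plugging this into the expansions of $\norm{\vp\pm\alpha\vq}^2$ and substituting $\alpha=1/\sqrt{L\mu}$ makes the separate coefficients of $\norm{\vp}^2$ and $\norm{\vq}^2$ both collapse to the single ratio $\eta$, yielding $\norm{\vp-\alpha\vq}^2\le\eta\norm{\vp+\alpha\vq}^2$. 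Composing with the isometry of $2J_2-I$ and applying the resulting inequality with $\va_2=(\vz^*,\vw^*)$ delivers the claimed geometric decay.

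The main obstacle is this final balancing step: the bound from Lemma \ref{lem:smooth} controls $\vp$ and $\vq$ only separately, so one must verify that the step size $\alpha=1/\sqrt{L\mu}$ is precisely what equates the two componentwise rates and therefore can be read off as a single norm-to-norm contraction factor; any other choice of $\alpha$ would leave one of the two coefficients strictly larger than $\eta$ and degrade the overall rate.
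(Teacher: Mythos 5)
Your proposal is correct and follows essentially the same argument as the paper's proof: the paper also establishes, via the optimality conditions, that $(\vz_{k+1},\vw_{k+1})-(\vz^*,\vw^*)$ equals the reflected resolvent of the skew part applied to $(2\tilde\vx_k-\vz_k-2\vx^*+\vz^*,\,2\tilde\vy_k-\vw_k-2\vy^*+\vw^*)$, verifies the isometry through the cross-term cancellation, and reduces the contraction to precisely the inequality $\norm{\vp-\alpha\vq}^2\le\eta\norm{\vp+\alpha\vq}^2$ obtained from Lemma~\ref{lem:smooth} with $\alpha=1/\sqrt{L\mu}$. The only difference is that you phrase this in the abstract language of Peaceman--Rachford splitting and reflected resolvents, whereas the paper carries out the same computations directly on the gradient/prox identities.
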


Using the Theorem \ref{thm:dppa}, we directly obtain that $\norm{\vx_k - \vx^*}^2 + \norm{\vy_k - \vy^*}$ converges to 0 linearly. We present our result in Corollary \ref{coro:dppa}.
\begin{corollary}\label{coro:dppa}
    Based on the same notations and assumptions of Theorem \ref{thm:dppa}, the output of Algorithm \ref{algo:dppa} satisfies
    \begin{align*}
        \norm{\vx_K - \vx^*}^2 + \norm{\vy_K - \vy^*}
        \le \frac{\norm{\mA}^2 + L \mu}{L \mu} \left( \norm{\vx_0 - \vx^*}^2 + \norm{\vy_0 - \vy^*} \right) \exp\left(-\frac{2K}{\sqrt{\kappa}}\right).
    \end{align*}
\end{corollary}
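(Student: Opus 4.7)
The plan is to deduce Corollary \ref{coro:dppa} from Theorem \ref{thm:dppa} by chaining three ingredients: an algebraic identity relating the $(\vx,\vy)$-distance to the $(\vz,\vw)$-distance, an iterated application of the one-step contraction, and the standard logarithm inequality to turn $\eta^K$ into an exponential.

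First I would exploit the skew structure of the change of variables. Since $\vz=\vx-\alpha\mA\vy$, $\vw=\vy+\alpha\mA^{\top}\vx$, and $\vz^*,\vw^*$ are defined analogously from $(\vx^*,\vy^*)$, expanding the squared norms gives
\begin{align*}
    \norm{\vz-\vz^*}^2+\norm{\vw-\vw^*}^2 = \norm{\vx-\vx^*}^2+\norm{\vy-\vy^*}^2+\alpha^2\bigl(\norm{\mA(\vy-\vy^*)}^2+\norm{\mA^{\top}(\vx-\vx^*)}^2\bigr),
\end{align*}
because the two cross terms $\mp 2\alpha\inner{\vx-\vx^*}{\mA(\vy-\vy^*)}$ cancel. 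This yields at once the two inequalities I need: (i) $\norm{\vx-\vx^*}^2+\norm{\vy-\vy^*}^2\le\norm{\vz-\vz^*}^2+\norm{\vw-\vw^*}^2$, and (ii) $\norm{\vz-\vz^*}^2+\norm{\vw-\vw^*}^2\le(1+\alpha^2\norm{\mA}^2)(\norm{\vx-\vx^*}^2+\norm{\vy-\vy^*}^2)$. Substituting $\alpha=1/\sqrt{L\mu}$ gives the prefactor $(L\mu+\norm{\mA}^2)/(L\mu)$ at the initialization step, which is exactly the constant appearing in the corollary.

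Next I would iterate Theorem \ref{thm:dppa} to obtain $\norm{\vz_K-\vz^*}^2+\norm{\vw_K-\vw^*}^2\le\eta^K\bigl(\norm{\vz_0-\vz^*}^2+\norm{\vw_0-\vw^*}^2\bigr)$, then sandwich using the two directions above:
\begin{align*}
    \norm{\vx_K-\vx^*}^2+\norm{\vy_K-\vy^*}^2 \le \eta^K\,\frac{\norm{\mA}^2+L\mu}{L\mu}\bigl(\norm{\vx_0-\vx^*}^2+\norm{\vy_0-\vy^*}^2\bigr).
\end{align*}
Finally, converting the geometric factor into the stated exponential is a routine step: writing $\eta=\bigl(1-\tfrac{2}{\sqrt{\kappa}+1}\bigr)^2$ and using $\ln(1-t)\le -t$,
\begin{align*}
    \eta^K \le \exp\!\left(-\frac{4K}{\sqrt{\kappa}+1}\right) \le \exp\!\left(-\frac{2K}{\sqrt{\kappa}}\right),
\end{align*}
where the second inequality uses $\sqrt{\kappa}+1\le 2\sqrt{\kappa}$ for $\kappa\ge 1$.

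There is essentially no hard step here; everything is algebra once Theorem \ref{thm:dppa} is in hand. The only point where care is needed is the identity in the first paragraph, where one must notice that the cross terms vanish so that $\norm{\mA}$ enters only through the upper-bound constant and not as a loss in the contraction itself. I should also note that the statement in the corollary contains a missing square on $\norm{\vy_K-\vy^*}$ and $\norm{\vy_0-\vy^*}$ on the right-hand side, which I would correct when writing up the proof.
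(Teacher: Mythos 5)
Your proof mirrors the paper's: the same cross-term cancellation identity gives both the lower and upper sandwich, the same $K$-fold contraction gives $\eta^K$, and your route $\eta^K\le\exp\!\left(-\tfrac{4K}{\sqrt\kappa+1}\right)\le\exp\!\left(-\tfrac{2K}{\sqrt\kappa}\right)$ lands the same exponent that the paper reaches via $\tfrac{\sqrt\kappa-1}{\sqrt\kappa+1}\le 1-\tfrac{1}{\sqrt\kappa}\le\exp(-\tfrac{1}{\sqrt\kappa})$. The only slip is indexing: the algorithm defines $\vz_k$ from $(\vx_{k-1},\vy_{k-1})$, so the contraction chain runs from $\vz_1,\vw_1$ (bounded above by the prefactor times $\norm{\vx_0-\vx^*}^2+\norm{\vy_0-\vy^*}^2$) to $\vz_{K+1},\vw_{K+1}$ (which upper-bounds $\norm{\vx_K-\vx^*}^2+\norm{\vy_K-\vy^*}^2$), still yielding $\eta^K$; there is no $\vz_0,\vw_0$.
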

\begin{proof}
    Firstly, with $\alpha = \frac{1}{\sqrt{L \mu}}$, we note that
    \begin{align*}
        \norm{\vz_k - \vz^*}^2 + \norm{\vw_k - \vw^*}^2
         & = \norm{\vx_{k-1} - \vx^*}^2 + \norm{\vy_{k-1} - \vy^*}^2
        + \alpha^2 \left(\norm{\mA^{\top}(\vx_{k-1} - \vx^*)}^2 + \norm{\mA (\vy_{k-1} - \vy^*)}^2\right)                           \\
         & \le \left( 1 + \frac{\norm{\mA}^2}{L\mu} \right) \left( \norm{\vx_{k-1} - \vx^*}^2 + \norm{\vy_{k-1} - \vy^*}^2 \right).
    \end{align*}

    Therefore, we can conclude that
    \begin{align*}
        \norm{\vx_{T} - \vx^*}^2 + \norm{\vy_T - \vy^*}^2
         & \le \norm{\vz_{T+1} - \vz^*}^2 + \norm{\vw_{T+1} - \vw^*}^2                                                                                           \\
         & \le \eta^{T} \left( \norm{\vz_{1} - \vz^*}^2 + \norm{\vw_{1} - \vw^*}^2 \right)                                                                       \\
         & \le \exp\left(-\frac{2T}{\sqrt{\kappa}}\right) \frac{\norm{\mA}^2 + L \mu}{L \mu} \left( \norm{\vx_{0} - \vx^*}^2 + \norm{\vy_{0} - \vy^*}^2 \right),
    \end{align*}
    where we have used that $\frac{\sqrt{\kappa} - 1}{\sqrt{\kappa} + 1} \le 1 - \frac{1}{\sqrt{\kappa}} \le \exp\left(-\frac{1}{\sqrt{\kappa}}\right)$.
\end{proof}

\subsection{Double Inexact Proximal Point Algorithm for Balanced Cases}
We provide the details of Double Inexact Proximal Point Algorithm (DIPPA) for balanced cases in Algorithm \ref{algo:dippa}.
We iteratively solve the two proximal steps in DPPA.
More specifically, we may employ AGD to approximately find the proximal point of the function $f_1(\vx, \vy) = g(\vx) - h(\vy)$ where the variables $\vx$ and $\vy$ are completely decoupled.
The second proximal point subproblem (\ref{prob:sub2}) can be solved by APFB since the proximal operator of $\tilde{g}_k$ and $\tilde{h}_k$ is easy to obtain. We note that subproblem (\ref{prob:sub2}) is quadratic, and finding the saddle point of $\tilde{f}_k$ is equivalent to solving the system of linear equations with coefficient matrix
$
    \begin{bmatrix}
        I                  & \alpha \mA \\
        -\alpha \mA^{\top} & I
    \end{bmatrix}
$, which also can be solved by some Krylov subspace methods \cite{Greenbaum1997iterative, Concus2007generalized}.
% In each time-step, we 
\begin{algorithm}[tb]
    \caption{DIPPA for balanced cases}\label{algo:dippa}
    \begin{algorithmic}[1]
        \STATE \textbf{Input:} function $g, h$, coupling matrix $\mA$, initial point $(\vx_0, \vy_0)$, smoothness $L$, strongly convex module $\mu$, run-time $K$, tolerance sequences $\{\eps_k\}_{k\ge 1}$ and $\{\delta_k\}_{k \ge 1}$. \\[0.1cm]
        \STATE \textbf{Initialize:} $\alpha = 1/\sqrt{L \mu}$. \\[0.1cm]
        \STATE \textbf{for} $k = 1, \cdots, K$ \textbf{do}\\[0.1cm]
        \STATE\quad $\vz_{k} = \vx_{k-1} - \alpha \mA \vy_{k-1}$.\\[0.1cm]
        \STATE\quad $\vw_{k} = \vy_{k-1} + \alpha \mA^{\top} \vx_{k-1}$.\\[0.1cm]
        \STATE\quad Let $G_k(\vx) = g(\vx) + \frac{1}{2\alpha} \norm{\vx - \vz_k }^2$.\\[0.15cm]
        % and $H_k(\vy) = h(\vy) + \frac{\alpha}{2} \norm{\vy - \vy_k - \frac{1}{\alpha} B^{\top} \vx_k}^2$. \\[0.15cm]
        \STATE\quad Find $\tilde\vx_{k}$ such that $G_k(\tilde\vx_{k}) - \min_{\vx} G_k(\vx) \le \eps_k.$ \\[0.15cm]
        \STATE\quad Let $H_k(\vy) = h(\vy) + \frac{1}{2\alpha} \norm{\vy - \vw_k}^2$.\\[0.15cm]
        \STATE\quad Find $\tilde\vy_{k}$ such that $H_k(\tilde\vy_{k}) - \min_{\vy} H_k(\vy) \le \eps_k.$ \\[0.15cm]
        \STATE\quad Obtain $(\vx_{k}, \vy_{k})$ to be $\delta_k$-saddle point of the following problem
        \begin{small}
            \begin{equation}\label{prob:sub2}
                \begin{split}
                    \min_{\vx} \max_{\vy} &~ \tilde{f}_k(\vx, \vy) = \tilde{g}_k(\vx) + \inner{\vx}{\mA \vy} - \tilde{h}_k(\vy) \\
                    &\triangleq\frac{1}{2\alpha} \norm{\vx - 2\tilde\vx_{k} + \vz_k}^2 + \inner{\vx}{\mA \vy}
                    - \frac{1}{2\alpha} \norm{\vy - 2\tilde\vy_{k} + \vw_k}^2.
                \end{split}
            \end{equation}
        \end{small}
        \STATE\textbf{end for} \\[0.1cm]
        \STATE \textbf{Output:} $\vx_K, \vy_K$.
    \end{algorithmic}
\end{algorithm}

The convergence rate of the algorithm DIPPA in balanced cases is provided in the following theorem.
\begin{thm}\label{thm:dippa-out}
    Assume that $g, h$ are both $L$-smooth and $\mu$-strongly convex.
    Denote $(\vx^*, \vy^*)$ is the saddle point of the function $f = g(\vx) + \vx^{\top} \mA \vy - h(\vy)$.
    Set
    \begin{align*}
        \rho = \frac{1}{2\sqrt{\kappa}}, ~~~
         & C_0 = \norm{\vx_{0} - \vx^*}^2 + \norm{\vy_{0} - \vy^*}^2,                              \\
        \eps_k = \frac{C_0 \mu }{16} (1 - \rho)^{k+1}, ~~
         & \delta_k = \frac{C_0 L \mu }{2(1 + \sqrt{\kappa})(L\mu + \norm{A}^2)} (1 - \rho)^{k+1}.
    \end{align*}
    Then the output of Algorithm \ref{algo:dippa} satisfies
    \begin{align*}
        \norm{\vx_K - \vx^*}^2 + \norm{\vy_K - \vy^*} \le C \left( 1 - \rho \right)^{K} \left(\norm{\vx_{0} - \vx^*}^2 + \norm{\vy_{0} - \vy^*}^2\right),
    \end{align*}
    where
    $
        C = 4\sqrt{\kappa} + 1 + \frac{\norm{A}^2}{L \mu}
    $
    and $\kappa = L/\mu$.
\end{thm}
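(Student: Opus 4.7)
The plan is to show DIPPA inherits the linear convergence of exact DPPA (Theorem~\ref{thm:dppa}) up to a modest rate degradation, by bounding per-iteration errors and choosing the tolerances $\eps_k,\delta_k$ small enough to be absorbed. I would track the Lyapunov quantity $\Phi_k := \norm{\vz_k-\vz^*}^2 + \norm{\vw_k-\vw^*}^2$, let $(\hat\vx_k,\hat\vy_k)$ denote the one-step \emph{exact} DPPA output starting from the current inexact iterate $(\vx_{k-1},\vy_{k-1})$, and aim for a recursion $\sqrt{\Phi_{k+1}} \le \sqrt{\eta}\sqrt{\Phi_k} + (\mathrm{perturbation})_k$ in which the perturbation decays geometrically faster than the exact rate $\sqrt{\eta} = (\sqrt{\kappa}-1)/(\sqrt{\kappa}+1)$.

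First I would bound the per-step deviation $E_k := \norm{(\vx_k,\vy_k)-(\hat\vx_k,\hat\vy_k)}$ by inserting the intermediate point $(\vx_k^\star,\vy_k^\star)$ equal to the \emph{exact} saddle of subproblem~\eqref{prob:sub2} built from the (inexact) $\tilde\vx_k,\tilde\vy_k$. Since $G_k$ and $H_k$ are $(\mu+1/\alpha)\ge\sqrt{L\mu}$-strongly convex, the $\eps_k$ function-value gap gives $\norm{\tilde\vx_k-\tilde\vx_k^*}^2,\norm{\tilde\vy_k-\tilde\vy_k^*}^2 \le 2\eps_k/\sqrt{L\mu}$. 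The skew-symmetric block structure of the bilinear coupling makes the resolvent $(I+\alpha J)^{-1}$, with $J = \bigl(\begin{smallmatrix} 0 & \mA \\ -\mA^\top & 0 \end{smallmatrix}\bigr)$, have operator norm at most $1$, so $\prox_{\alpha\mA}$ is a non-expansion in its center argument, yielding $\norm{(\vx_k^\star,\vy_k^\star)-(\hat\vx_k,\hat\vy_k)}^2 \le 16\eps_k/\sqrt{L\mu}$. Combined with the $\delta_k$-saddle guarantee $\norm{(\vx_k,\vy_k)-(\vx_k^\star,\vy_k^\star)} \le \delta_k$ and the triangle inequality, one obtains $E_k^2 \lesssim \delta_k^2 + \eps_k/\sqrt{L\mu}$.

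Next I transfer the bound to $(\vz,\vw)$: since $(\vz_{k+1},\vw_{k+1})$ is a linear function of $(\vx_k,\vy_k)$ with operator norm $\sqrt{1+\alpha^2\norm{\mA}^2} = \sqrt{1+\norm{\mA}^2/(L\mu)}$, Theorem~\ref{thm:dppa} and the triangle inequality yield
\begin{align*}
\sqrt{\Phi_{k+1}} \;\le\; \sqrt{\eta}\,\sqrt{\Phi_k} \;+\; \sqrt{1+\tfrac{\norm{\mA}^2}{L\mu}}\;E_k.
\end{align*}
With the prescribed $\eps_k,\delta_k \propto (1-\rho)^{k+1}$, the perturbation decays as $(1-\rho)^{(k+1)/2}$. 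The choice $\rho = 1/(2\sqrt{\kappa})$ is calibrated so that $\eta \le 1-1/\sqrt{\kappa}\le(1-\rho)^2$, hence $\sqrt{\eta}/(1-\rho)^{1/2} \le (1-\rho)^{1/2} < 1$; dividing the recursion by $(1-\rho)^{(k+1)/2}$ turns it into a stable contraction on $B_k := \sqrt{\Phi_k}/(1-\rho)^{k/2}$ whose geometric sum bounds $B_K$ by a constant of order $\sqrt{C}$, giving $\Phi_K \le C\cdot C_0\cdot(1-\rho)^K$. Finally, expanding $\vz_{K+1}-\vz^*$ and $\vw_{K+1}-\vw^*$ in terms of $(\vx_K-\vx^*,\vy_K-\vy^*)$ makes the cross terms cancel, so $\norm{\vx_K-\vx^*}^2+\norm{\vy_K-\vy^*}^2 \le \Phi_{K+1}$, and one more step of the recursion produces the stated $C = 4\sqrt{\kappa}+1+\norm{\mA}^2/(L\mu)$.

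The main obstacle is calibrating the two error budgets—$\eps_k$ from the decoupled prox (solved by AGD) and $\delta_k$ from the bilinear prox (solved by APFB)—so that the accumulated perturbation decays at the prescribed rate $1-\rho$ without being inflated by factors of $\norm{\mA}/\sqrt{L\mu}$ along the way. The non-expansivity of $\prox_{\alpha\mA}$ inherited from skew-symmetry is critical: without it, the first-stage error $\eps_k$ would enter the second-stage subproblem amplified by $\norm{\mA}$, which would destroy the dependence on the coupling condition number that motivates the entire construction.
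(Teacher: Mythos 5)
Your proposal follows essentially the same route as the paper's proof: compare the inexact iterate to the exact DPPA step, use strong convexity of $G_k, H_k$ to convert the function-value gap $\eps_k$ into a distance bound, insert the exact saddle $(\vx_k^*,\vy_k^*)$ of subproblem~\eqref{prob:sub2}, and propagate the accumulated perturbation through the contraction of Theorem~\ref{thm:dppa}. The two places where you diverge from the paper in execution are: (i) you track $\sqrt{\Phi_k}$ and use the triangle inequality, whereas the paper tracks $\Phi_k$ and uses weighted inequalities of the form $\norm{a+b}^2 \le (1+\beta)\norm{a}^2 + (1+1/\beta)\norm{b}^2$ with $\beta$ tuned to $(\sqrt{\kappa}-1)/2$ so that the contraction factor degrades from $\eta$ to exactly $\sqrt{\eta}$ and then to $1-2\rho$; and (ii) you invoke the non-expansivity of $\prox_{\alpha\mA}$ explicitly, whereas the paper uses the skew-symmetry only implicitly, through the cross-term cancellation $\norm{\vx-\alpha\mA\vy}^2+\norm{\vy+\alpha\mA^\top\vx}^2 = \norm{\vx+\alpha\mA\vy}^2+\norm{\vy-\alpha\mA^\top\vx}^2$ combined with the substitution $\vx_k^* + \alpha\mA\vy_k^* = 2\tilde\vx_k - \vz_k$.

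One gap worth flagging: your route does not actually reproduce the stated constant $C = 4\sqrt{\kappa} + 1 + \norm{\mA}^2/(L\mu)$, and you should not expect it to ``pop out'' after one more recursion step. After forming $E_k$ and multiplying by $\norm{M} = \sqrt{1+\norm{\mA}^2/(L\mu)}$, the $\eps_k$-contribution to $\Phi_{k+1}$ carries the factor $1+\norm{\mA}^2/(L\mu)$; summing the geometric series then yields a constant of the multiplicative form $\gO\bigl((1+\sqrt{\kappa})(1+\norm{\mA}^2/(L\mu))\bigr)$. The paper's additive form arises precisely because the $\eps_k$-error is transferred via the identity $\vx_k^* + \alpha\mA\vy_k^* = 2\tilde\vx_k - \vz_k$ (yielding $\norm{\vx_k^*-\vx^*+\alpha\mA(\vy_k^*-\vy^*)}^2 = \norm{2\tilde\vx_k - \vz_k - 2\vx^*+\vz^*}^2$) \emph{before} any $\norm{\mA}$-dependent bound is applied, so that only the $\delta_k$-term picks up the $1+\norm{\mA}^2/(L\mu)$ factor. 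This discrepancy is harmless for the complexity bound (it only changes the logarithmic factor) but does mean the theorem as stated is not quite what your calculation produces; to match the paper's constant you would need to route the $\eps_k$-error through the exact resolvent identity rather than through the operator-norm bound on $M$.
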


Then we upper bound the complexity of solving subproblems to analyze the total complexity of DIPPA.
\begin{lemma}\label{lem:dippa:inner}
    Consider the same assumption and the same definitions of $\rho$, $\eps_k$, $\delta_k$ and $C$ in Theorem \ref{thm:dippa-out}.
    In order to find $\eps_k$-optimal points $\tilde\vx_k$ ($\tilde\vy_k$) of $G_k$ ($H_k$), we need to run $\mathrm{AGD}$ $K_1$ steps, where
    \begin{align*}
        K_1 = \floor{\sqrt[4]{\kappa} \log\left(\frac{32 C (\sqrt{L} + \sqrt{\mu})^2 }{\mu(1 - \rho)}\right)} + 1.
    \end{align*}
    And in order to obtain $\delta_k$-saddle point $(\vx_k, \vy_k)$ of $\tilde{f}_k$, we need to run $\mathrm{APFB}$ $K_2$ steps, where
    \begin{align*}
        K_2 = \floor{ \left( \frac{\norm{\mA}}{\sqrt{L\mu}} + 1 \right) \log\left(\frac{20 C (1 + \sqrt{\kappa}) (L \mu + \norm{A}^2) }{L\mu (1 - \rho)}\right) } + 2.
    \end{align*}
\end{lemma}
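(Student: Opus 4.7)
The plan is to apply Theorem \ref{thm:agd} to each of $G_k$ and $H_k$ and Theorem \ref{thm:apfb} to the subproblem $\tilde f_k$, and then to verify that the declared iteration counts $K_1, K_2$ are large enough to push the subproblem errors below $\eps_k$ and $\delta_k$ respectively.

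On the AGD side, I would first observe that $G_k(\vx) = g(\vx) + \frac{1}{2\alpha}\norm{\vx - \vz_k}^2$ is $(L + 1/\alpha)$-smooth and $(\mu + 1/\alpha)$-strongly convex. Substituting $\alpha = 1/\sqrt{L\mu}$ yields smoothness $\sqrt{L}(\sqrt{L}+\sqrt{\mu})$, strong convexity $\sqrt{\mu}(\sqrt{L}+\sqrt{\mu})$, condition number $\sqrt\kappa$, and $\ell+\mu' = (\sqrt{L}+\sqrt{\mu})^2$. Theorem \ref{thm:agd} then provides, from a warm start $\vx_k^{(0)}$,
\begin{align*}
G_k(\tilde\vx_k) - \min G_k \le \frac{(\sqrt{L}+\sqrt{\mu})^2}{2}\norm{\vx_k^{(0)} - \vx_k^\star}^2 \exp(-K_1/\sqrt[4]{\kappa}).
\end{align*}
Plugging in the claimed $K_1$ forces the exponential factor below $\frac{\mu(1-\rho)}{32 C(\sqrt{L}+\sqrt{\mu})^2}$; matching this against $\eps_k = \frac{C_0\mu}{16}(1-\rho)^{k+1}$ reduces the task to a uniform warm-start bound of the form $\norm{\vx_k^{(0)} - \vx_k^\star}^2 \lesssim C\,C_0$. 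The same template handles $H_k$.

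For the APFB side, subproblem (\ref{prob:sub2}) has pure quadratic $\tilde g_k, \tilde h_k$ each $1/\alpha = \sqrt{L\mu}$-strongly convex, so I would invoke Theorem \ref{thm:apfb} with $\mu_x = \mu_y = \sqrt{L\mu}$ and contraction factor $\theta = \norm{\mA}/(\sqrt{L\mu}+\norm{\mA})$ to obtain
\begin{align*}
\norm{\vx_{K_2} - \vx_k^\sharp}^2 + \norm{\vy_{K_2} - \vy_k^\sharp}^2 \le \theta^{K_2-1}\frac{\norm{\mA}}{\sqrt{L\mu}}\bigl(\norm{\vx_k^{(0)} - \vx_k^\sharp}^2 + \norm{\vy_k^{(0)} - \vy_k^\sharp}^2\bigr).
\end{align*}
Using $-\log\theta \ge 1/(1+\norm{\mA}/\sqrt{L\mu})$, the declared $K_2$ brings $\theta^{K_2-1}$ below a suitable multiple of $\frac{L\mu(1-\rho)}{(1+\sqrt{\kappa})(L\mu+\norm{\mA}^2)}$. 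Combining this with a uniform warm-start bound together with the inequality $\norm{\vx}+\norm{\vy} \le \sqrt{2(\norm{\vx}^2+\norm{\vy}^2)}$ then converts the squared-error estimate into the target $\delta_k$-saddle point in the sense of the definition.

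The hard part will be producing the uniform warm-start bounds that make both calculations above go through. I would obtain these by combining Theorem \ref{thm:dppa-out} (which controls $\norm{\vx_{k-1}-\vx^*}^2 + \norm{\vy_{k-1}-\vy^*}^2$) with the first-order characterization $\vx_k^\star = \vz_k - \alpha\nabla g(\vx_k^\star)$ for the AGD subproblem and the analogous saddle-point equations for $(\vx_k^\sharp, \vy_k^\sharp)$, then applying triangle inequalities and the bound $\alpha\norm{\mA} = \norm{\mA}/\sqrt{L\mu}$ to pass from iterate distances to warm-start distances. Once these uniform bounds are in place, the remainder is elementary algebra: solve the two exponential rate inequalities for the smallest admissible $K_1$ and $K_2$, which recovers exactly the declared closed forms.
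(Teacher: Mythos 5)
Your overall strategy is the same as the paper's: invoke Theorem~\ref{thm:agd} on $G_k, H_k$ and Theorem~\ref{thm:apfb} on $\tilde f_k$, then supply warm-start distance bounds. You also correctly compute the condition number of $G_k$ (namely $\sqrt{\kappa}$) and identify $(\sqrt{L}+\sqrt{\mu})^2$ as the relevant $\ell+\mu'$. But there are two places where the proposal, as written, does not quite close the argument.

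First, the phrase ``a uniform warm-start bound $\norm{\vx_k^{(0)} - \vx_k^\star}^2 \lesssim C\,C_0$'' is wrong in letter, and the intended meaning is left dangerously vague. The bound you need is the \emph{geometrically decaying} estimate $\norm{\vx_{k-1} - \tilde\vx_k^*}^2 \le 4\,C\,C_0\,(1-\rho)^{k}$: this exactly cancels the $(1-\rho)^{k+1}$ in $\eps_k$ and is what makes a $k$-independent $K_1$ possible at all. A truly $k$-uniform warm-start bound would force $K_1$ to grow like $\log(1/\eps_k) \sim k$, contradicting the claimed constant $K_1$. The paper obtains this decay not from the statement of Theorem~\ref{thm:dippa-out} but from the quantity tracked \emph{inside} its proof, namely $a_k = \norm{\vz_k - \vz^*}^2 + \norm{\vw_k - \vw^*}^2 \le C\,C_0(1-\rho)^k$, combined with the nonexpansiveness of the proximal map (Lemma~\ref{lem:prox:Lipschitz}), which gives $\norm{\tilde\vx_k^* - \vx^*} \le \norm{\vz_k - \vz^*}$ since $\vx^*$ is itself the $\alpha g$-prox at $\vz^*$. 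Your alternative route through Theorem~\ref{thm:dippa-out}'s statement plus the identity $\norm{\vz_k - \vz^*}^2 + \norm{\vw_k - \vw^*}^2 \le (1 + \norm{\mA}^2/(L\mu))\bigl(\norm{\vx_{k-1}-\vx^*}^2 + \norm{\vy_{k-1}-\vy^*}^2\bigr)$ is valid asymptotically but drags in an extra $(1+\norm{\mA}^2/(L\mu))/(1-\rho)$ factor, so it will not reproduce the precise closed forms $K_1, K_2$ stated in the lemma. Your ``first-order characterization + triangle inequality'' sketch is essentially Lemma~\ref{lem:prox:Lipschitz} unfolded, but to make it work you must take inner products (not norms) with $\vx_k^\star - \vx^*$ and use monotonicity of $\nabla g$; a naive triangle inequality fails because $\alpha L = \sqrt{\kappa} \ge 1$.

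Second, on the APFB side you propose converting the squared-error bound into the additive $\delta_k$-saddle criterion of Definition~3 via $\norm{\vx}+\norm{\vy} \le \sqrt{2(\norm{\vx}^2+\norm{\vy}^2)}$. The paper does not do this: in Lemma~\ref{lem:dippa:inner} and in the proof of Theorem~\ref{thm:dippa-out} it consistently treats ``$\delta_k$-saddle point of $\tilde f_k$'' as $\norm{\vx_k - \vx_k^*}^2 + \norm{\vy_k - \vy_k^*}^2 \le \delta_k$ (a squared criterion). Inserting the $\sqrt{2}$ conversion would require redefining $\delta_k$ (to $\delta_k^2/2$) or changing $K_2$, and would break consistency with how $\delta_k$ is consumed in the proof of Theorem~\ref{thm:dippa-out}. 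You should keep the squared-distance convention here. You also need, for this subproblem, a bound on the initialization distance $\norm{\vx_{k-1} - \vx_k^*}^2 + \norm{\vy_{k-1} - \vy_k^*}^2$; the paper gets it by combining the same $a_k$-decay with the auxiliary estimate $\norm{\vx_k^* - \vx^*}^2 + \norm{\vy_k^* - \vy^*}^2 \le \tfrac{16}{\mu}\eps_k + \tfrac{\sqrt\kappa - 1}{\sqrt\kappa + 1} a_k$ derived from the Pythagorean splitting of the saddle-point equations; ``triangle inequalities plus $\alpha\norm{\mA}$'' alone will not yield the stated constant $20$ inside the logarithm of $K_2$.
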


Now we can provide the upper bound of total complexity of Algorithm \ref{algo:dippa} for solving balanced bilinear saddle point problems.
\begin{thm}\label{thm:dippa:0}
    The total queries to Oracle (\ref{eq:oracle}) needed by Algorithm \ref{algo:dippa} to produce $\eps$-saddle point of $f$ is at most
    \begin{align*}
        \tilde\gO\left( \left( \frac{\norm{\mA}}{\mu} + \kappa^{3/4} \right) \log\left(\frac{\norm{\vx_{0} - \vx^*}^2 + \norm{\vy_{0} - \vy^*}^2}{\eps}\right) \right),
    \end{align*}
    where the notation $\tilde\gO$ have omitted some logarithmic factors depending on $\kappa$ and $\frac{\norm{\mA}^2}{L \mu}$.
\end{thm}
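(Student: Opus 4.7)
The plan is to compose the outer-loop contraction rate from Theorem \ref{thm:dippa-out} with the per-subproblem budgets from Lemma \ref{lem:dippa:inner}, and then verify the algebraic identity $\sqrt{\kappa}\cdot\|\mA\|/\sqrt{L\mu}=\|\mA\|/\mu$ that collapses the expected cross term into the stated $\|\mA\|/\mu$ contribution. In other words, total complexity $=$ (outer iterations) $\times$ (oracle calls per outer iteration), followed by simplification.

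First, I would bound the number of outer iterations needed. To obtain an $\eps$-saddle point it suffices to enforce $\|\vx_K-\vx^*\|^2+\|\vy_K-\vy^*\|^2\le \eps^2/2$. By Theorem \ref{thm:dippa-out}, this squared error decays at rate $(1-\rho)^K$ with $\rho=1/(2\sqrt{\kappa})$ and initial multiplicative factor $C=\tilde{\mathcal{O}}(\sqrt{\kappa}+\|\mA\|^2/(L\mu))$. Using $1-\rho\le e^{-\rho}$ and setting $C(1-\rho)^K D_0\le \eps^2/2$ with $D_0=\|\vx_0-\vx^*\|^2+\|\vy_0-\vy^*\|^2$, I get
\begin{align*}
K = \tilde{\mathcal{O}}\!\left(\sqrt{\kappa}\,\log\!\left(\tfrac{D_0}{\eps}\right)\right),
\end{align*}
where the hidden logs absorb $\log C = \tilde{\mathcal{O}}(\log \kappa + \log(\|\mA\|^2/(L\mu)))$.

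Second, I would tally the oracle queries per outer iteration. Lemma \ref{lem:dippa:inner} gives $K_1 = \tilde{\mathcal{O}}(\kappa^{1/4})$ AGD steps for each of $G_k$ and $H_k$; each AGD step costs exactly one call to $\nabla g$ (resp.\ $\nabla h$). The bilinear subproblem (\ref{prob:sub2}) is handled by APFB in $K_2=\tilde{\mathcal{O}}(\|\mA\|/\sqrt{L\mu}+1)$ iterations; the critical point is that $\tilde g_k$ and $\tilde h_k$ are pure quadratics, so their proximal operators admit a closed-form evaluation and each APFB step costs only one multiplication by $\mA$ and one by $\mA^{\top}$, i.e., $\mathcal{O}(1)$ queries of the oracle (\ref{eq:oracle}). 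Hence each outer iteration consumes $\tilde{\mathcal{O}}(\kappa^{1/4}+\|\mA\|/\sqrt{L\mu})$ oracle calls. Multiplying through,
\begin{align*}
K\cdot (K_1+K_2)
= \tilde{\mathcal{O}}\!\left(\sqrt{\kappa}\Bigl(\kappa^{1/4}+\tfrac{\|\mA\|}{\sqrt{L\mu}}\Bigr)\log\tfrac{D_0}{\eps}\right)
= \tilde{\mathcal{O}}\!\left(\Bigl(\kappa^{3/4}+\tfrac{\|\mA\|}{\mu}\Bigr)\log\tfrac{D_0}{\eps}\right),
\end{align*}
using $\sqrt{\kappa}/\sqrt{L\mu}=\sqrt{L/\mu}/\sqrt{L\mu}=1/\mu$. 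This is exactly the claim.

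The one place to be careful is bookkeeping the logarithmic factors: the inner tolerances $\eps_k,\delta_k$ shrink geometrically in $k$, so a naive estimate of $\log(1/\eps_k),\log(1/\delta_k)$ would introduce an extra factor linear in $k$ and hence a spurious $\sqrt{\kappa}$ factor. The main technical point is therefore to invoke Lemma \ref{lem:dippa:inner} precisely as stated: the bounds $K_1,K_2$ there are \emph{uniform} in $k$ because the geometric decay rate of $\eps_k,\delta_k$ was tuned to $\rho$ in the proof of Theorem \ref{thm:dippa-out}. Once this uniformity is used, the remainder of the argument is the direct multiplication sketched above.
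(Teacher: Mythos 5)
Your proposal is correct and follows essentially the same route as the paper: multiply the outer iteration count $K=\tilde{\mathcal O}(\sqrt{\kappa}\log(D_0/\eps))$ from Theorem \ref{thm:dippa-out} by the uniform per-iteration budgets $K_1=\tilde{\mathcal O}(\kappa^{1/4})$ and $K_2=\tilde{\mathcal O}(\|\mA\|/\sqrt{L\mu}+1)$ from Lemma \ref{lem:dippa:inner}, then simplify via $\sqrt{\kappa}\cdot\|\mA\|/\sqrt{L\mu}=\|\mA\|/\mu$. The only minor slip is writing $K(K_1+K_2)$ where the paper has $K(2K_1+K_2)$ since both $G_k$ and $H_k$ need $K_1$ AGD steps, but this is a constant factor and does not affect the bound.
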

\begin{proof}
    By Theorem \ref{thm:dippa-out}, in order to produce $\eps$-saddle point of $f$, we only need to run DIPPA $K$ steps, where
    \begin{align*}
        K = \floor{2 \sqrt{\kappa} \log \left( \frac{C (\norm{\vx_{0} - \vx^*}^2 + \norm{\vy_{0} - \vy^*}^2)}{\eps} \right)} + 1.
    \end{align*}
    Together with Lemma \ref{lem:dippa:inner}, the total complexity is upper bounded by
    \begin{align*}
        K(2 K_1 + K_2) = \tilde\gO \left( \sqrt{\kappa} \left( \frac{\norm{\mA}}{\sqrt{L\mu}} + \sqrt[4]{\kappa} \right) \log\left(\frac{\norm{\vx_{0} - \vx^*}^2 + \norm{\vy_{0} - \vy^*}^2}{\eps}\right) \right).
    \end{align*}
\end{proof}

\subsection{Catalyst-DIPPA for Unbalanced Cases}
Catalyst \cite{lin2018catalyst, yang2020catalyst} is a successful framework to accelerate existing first-order algorithms.
We present the details of Catalyst-DIPPA in Algorithm \ref{algo:catalyst-dippa}.
The idea is to repeatedly solve the following auxiliary balanced saddle point problems using DIPPA:
\begin{align*}
    \min_{\vx} \max_{\vy} f_k(\vx, \vy) \triangleq f(\vx, \vy) + \frac{\beta}{2} \norm{\vx - \tilde\vx_k}^2,
\end{align*}
where $\beta = \frac{L_x(\mu_y - \mu_x)}{L_x - \mu_y}$.
We remark that the function $f_k$ is balanced: the condition number corresponding to $\vy$ is   $\kappa_y$ and the condition number related to $\vx$ is
\begin{align*}
    \frac{L_x + \beta}{\mu_x + \beta} = \frac{L_x(L_x - \mu_y) + L_x (\mu_y - \mu_x)}{\mu_x(L_x - \mu_y) + L_x (\mu_y - \mu_x)} = \kappa_y,
\end{align*}
where we have recalled that $L_x = L_y$.
With the rescaling technique, we can apply DIPPA to solve the following saddle point problem
\begin{align*}
    \min_{\vx}\max_{\vy} \hat{f}_k(\vx, \vy) \triangleq f_k\left(\sqrt{\frac{L_x}{L_x + \beta}}\vx, \vy\right).
\end{align*}
Note that the coupling matrix of $\hat{f}_k$ is $\sqrt{\frac{L_x}{L_x + \beta}} \mA$. So the total gradient complexity of Catalyst-DIPPA is
% \begin{small}
\begin{align*}
     & \quad \tilde\gO\left( \sqrt{\frac{\mu_x + \beta}{\mu_x}} \right) \tilde\gO\left( \sqrt{\frac{L_x}{L_x + \beta}} \frac{\norm{\mA}}{\mu_y} + \kappa_y^{3/4} \right) \\
     & = \tilde\gO\left(\frac{\norm{\mA}}{\sqrt{\mu_x \mu_y}} + \sqrt{\kappa_x}\sqrt[4]{\kappa_y} \right).
\end{align*}
% \end{small}

\begin{algorithm}[tb]
    \caption{Catalyst-DIPPA for unbalanced cases}\label{algo:catalyst-dippa}
    \begin{algorithmic}[1]
        \STATE \textbf{Input:} function $f$, initial point $(\vx_0, \vy_0)$, smoothness $L_x = L_y$, strongly convex module $\mu_x < \mu_y$, run-time $K$, accuracy sequence $\{\eps_k\}_{k\ge 1}$. \\[0.1cm]
        \STATE \textbf{Initialize:} $\beta = \frac{L_x(\mu_y - \mu_x)}{L_x - \mu_y}$, $q = \frac{\mu_x}{\mu_x + \beta}$, $\theta = \frac{1 - \sqrt{q}}{1 + \sqrt{q}}$ and $\tilde\vx_0 = \vx_0$. \\[0.1cm]
        \STATE \textbf{for} $k = 1, \cdots, K$ \textbf{do}\\[0.1cm]
        \STATE\quad Let $f_k(\vx, \vy) = f(\vx, \vy) + \frac{\beta}{2} \norm{\vx - \tilde\vx_k}^2$.\\[0.15cm]
        \STATE\quad Obtain $(\vx_{k}, \vy_k)$ to be $\eps_k$-saddle point of $f_k$ by applying DIPPA.
        \STATE\quad $\tilde\vx_{k} = \vx_{k} + \theta (\vx_{k} - \vx_{k-1})$.
        \STATE\textbf{end for} \\[0.1cm]
        \STATE \textbf{Output:} $\vx_K, \vy_K$.
    \end{algorithmic}
\end{algorithm}

We formally state the convergence rate in the following theorem.
\begin{thm}\label{thm:dippa}
    Assume that $g(\vx)$ is $L_x$-smooth and $\mu_x$-strongly convex, $h(\vy)$ is $L_y$-smooth and $\mu_y$-strongly convex and $L_x = L_y$. The total queries to Oracle (\ref{eq:oracle}) needed by Algorithm \ref{algo:catalyst-dippa} to produce $\eps$-saddle point of $f(\vx, \vy) = g(\vx) + \inner{\vx}{\mA \vy} - h(\vy)$ is at most
    \begin{align*}
        \tilde\gO\left( \left(\frac{\norm{\mA}}{\sqrt{\mu_x \mu_y}} + \sqrt[4]{\kappa_x \kappa_y (\kappa_x + \kappa_y)}\right) \log\left(\frac{1}{\eps}\right) \right),
    \end{align*}
    where $\kappa_x = L_x / \mu_x, \kappa_y = L_y / \mu_y$ and the notation $\tilde\gO$ have omitted some logarithmic factors depending on $\kappa_x, \kappa_y$ and $\frac{\norm{A}}{\sqrt{\mu_x \mu_y}}$.
\end{thm}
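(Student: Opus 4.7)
The plan is to combine two layers of analysis: an outer Catalyst loop that reduces the unbalanced problem to a sequence of balanced subproblems, and an inner invocation of DIPPA (via Theorem \ref{thm:dippa:0}) to solve each subproblem. The rescaling argument in Section \ref{sec:preliminary} lets me assume $L_x = L_y$, and without loss of generality I take $\mu_x < \mu_y$. The choice $\beta = L_x(\mu_y - \mu_x)/(L_x - \mu_y)$ is made precisely so that the regularized function $f_k(\vx,\vy) = f(\vx,\vy) + \tfrac{\beta}{2}\|\vx - \tilde\vx_k\|^2$ has condition numbers $(L_x+\beta)/(\mu_x+\beta) = \kappa_y$ on the $\vx$ block and $\kappa_y$ on the $\vy$ block, so $f_k$ is balanced. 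A straightforward coordinate rescaling $\vx \mapsto \sqrt{L_x/(L_x+\beta)}\,\vx$ produces $\hat f_k$ with identical smoothness in both blocks and coupling matrix $\sqrt{L_x/(L_x+\beta)}\,\mA$; it is to $\hat f_k$ that DIPPA is applied.

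For the inner loop, Theorem \ref{thm:dippa:0} applied to $\hat f_k$ shows that producing an $\eps_k$-saddle point of $f_k$ costs
\begin{align*}
\tilde{\gO}\!\left( \sqrt{\tfrac{L_x}{L_x+\beta}}\,\tfrac{\norm{\mA}}{\mu_y} + \kappa_y^{3/4} \right)\log\!\left(\tfrac{1}{\eps_k}\right)
\end{align*}
gradient queries, since the effective coupling condition number for $\hat f_k$ is $\sqrt{L_x/(L_x+\beta)}\|\mA\|/\mu_y$ and the effective condition number is $\kappa_y$. Using $L_x+\beta = L_x(L_x-\mu_x)/(L_x-\mu_y)$ the factor $\sqrt{L_x/(L_x+\beta)}$ is $O(1)$, so each inner solve costs $\tilde{\gO}(\norm{\mA}/\mu_y + \kappa_y^{3/4})\log(1/\eps_k)$.

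For the outer loop I would invoke the Catalyst accelerated convergence rate for strongly convex–strongly concave saddle points established by \citet{lin2018catalyst,yang2020catalyst}: with $q = \mu_x/(\mu_x+\beta)$, momentum parameter $\theta = (1-\sqrt{q})/(1+\sqrt{q})$, and an appropriately chosen geometrically decaying accuracy sequence $\{\eps_k\}$, the outer iterates produce an $\eps$-saddle point of $f$ in $\tilde{\gO}(\sqrt{1/q}\log(1/\eps))$ iterations. Computing $\mu_x + \beta = \mu_y(L_x - \mu_x)/(L_x - \mu_y)$, the outer factor becomes $\sqrt{(\mu_x+\beta)/\mu_x} = \Theta(\sqrt{\mu_y/\mu_x})$ in the relevant regime $\mu_x,\mu_y \ll L_x$ (and a constant factor otherwise, so the bound still holds). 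Multiplying the outer iteration count by the inner cost gives
\begin{align*}
\tilde{\gO}\!\left(\sqrt{\tfrac{\mu_y}{\mu_x}}\Bigl(\tfrac{\norm{\mA}}{\mu_y} + \kappa_y^{3/4}\Bigr)\log\tfrac{1}{\eps}\right) = \tilde{\gO}\!\left(\tfrac{\norm{\mA}}{\sqrt{\mu_x\mu_y}} + \sqrt{\tfrac{\mu_y}{\mu_x}}\,\kappa_y^{3/4}\right)\log\tfrac{1}{\eps}.
\end{align*}
Since $L_x = L_y$ and $\mu_x < \mu_y$ imply $\kappa_x > \kappa_y$, so $\kappa_x + \kappa_y \in [\kappa_x, 2\kappa_x]$, a direct calculation shows $\sqrt{\mu_y/\mu_x}\,\kappa_y^{3/4} = L^{3/4}\mu_x^{-1/2}\mu_y^{-1/4} = \Theta(\sqrt[4]{\kappa_x\kappa_y(\kappa_x+\kappa_y)})$, which yields the claimed bound.

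The main obstacle is the rigorous application of the Catalyst framework in the saddle-point setting: one has to verify that with warm-starting $\tilde\vx_k$ and accuracy sequence $\eps_k$ chosen to decay as $(1-\sqrt q)^k$ (up to a polynomial prefactor in the distance to the saddle point), the composite iteration preserves linear convergence, and that the accumulated logarithmic factors from both the outer accuracy schedule and the inner DIPPA tolerance sequence can be absorbed into the $\tilde\gO$ notation. Once the black-box Catalyst guarantee of \citet{yang2020catalyst} is invoked, the algebraic simplification of the two complexity factors is routine.
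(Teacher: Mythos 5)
Your proposal follows the paper's own route: the same choice $\beta = L_x(\mu_y-\mu_x)/(L_x-\mu_y)$ to equalize the two block condition numbers at $\kappa_y$, the same coordinate rescaling of the $\vx$-block, the same inner invocation of Theorem~\ref{thm:dippa:0}, and the same black-box appeal to the Catalyst outer rate from \citet{yang2020catalyst}. The only place you are slightly looser than the paper is that you discard the factor $\sqrt{L_x/(L_x+\beta)} = \sqrt{(L_x-\mu_y)/(L_x-\mu_x)} \le 1$ on the rescaled coupling term before multiplying by the outer factor $\sqrt{(\mu_x+\beta)/\mu_x} = \sqrt{\mu_y/\mu_x}\,\sqrt{(L_x-\mu_x)/(L_x-\mu_y)}$; keeping it gives the exact cancellation $\sqrt{(\mu_x+\beta)/\mu_x}\cdot\sqrt{L_x/(L_x+\beta)}\,\|\mA\|/\mu_y = \|\mA\|/\sqrt{\mu_x\mu_y}$, so the coupling term is clean without any ``relevant regime'' caveat. (The residual $\sqrt{(L_x-\mu_x)/(L_x-\mu_y)}$ factor then survives only on the $\kappa_y^{3/4}$ term, which is the same looseness present in the paper's own sketch when $\kappa_y$ is close to $1$.)
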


\section{Conclusion}
\label{sec:conclusion}
In this paper, we have proposed a novel algorithm DIPPA to solve bilinear saddle point problems.
Our method does not need any additional information about proximal operation of $g, h$ and achieves a tight dependency on the coupling condition number.
There is still a gap between the upper bounds and lower bounds of first-order algorithms for solving bilinear saddle point problems. We wish our technique can be used in a more general case other than the bilinear case.

\bibliographystyle{icml2021}
\bibliography{reference.bib}

\begin{thebibliography}{41}
\providecommand{\natexlab}[1]{#1}
\providecommand{\url}[1]{\texttt{#1}}
\expandafter\ifx\csname urlstyle\endcsname\relax
  \providecommand{\doi}[1]{doi: #1}\else
  \providecommand{\doi}{doi: \begingroup \urlstyle{rm}\Url}\fi

\bibitem[Bai et~al.(2003)Bai, Golub, and Ng]{Bai2003hermitian}
Bai, Z., Golub, G.~H., and Ng, M.~K.
\newblock Hermitian and skew-hermitian splitting methods for non-hermitian
  positive definite linear systems.
\newblock \emph{{SIAM} J. Matrix Anal. Appl.}, 24\penalty0 (3):\penalty0
  603--626, 2003.

\bibitem[Carmon et~al.(2019)Carmon, Jin, Sidford, and Tian]{carmon2019variance}
Carmon, Y., Jin, Y., Sidford, A., and Tian, K.
\newblock Variance reduction for matrix games.
\newblock In \emph{Advances in Neural Information Processing Systems}, pp.\
  11381--11392, 2019.

\bibitem[Carmon et~al.(2020)Carmon, Jin, Sidford, and
  Tian]{carmon2020coordinate}
Carmon, Y., Jin, Y., Sidford, A., and Tian, K.
\newblock Coordinate methods for matrix games.
\newblock \emph{FOCS}, 2020.

\bibitem[Chambolle \& Pock(2011)Chambolle and Pock]{chambolle2011first}
Chambolle, A. and Pock, T.
\newblock A first-order primal-dual algorithm for convex problems with
  applications to imaging.
\newblock \emph{Journal of mathematical imaging and vision}, 40\penalty0
  (1):\penalty0 120--145, 2011.

\bibitem[Chen et~al.(2014)Chen, Lan, and Ouyang]{chen2014optimal}
Chen, Y., Lan, G., and Ouyang, Y.
\newblock Optimal primal-dual methods for a class of saddle point problems.
\newblock \emph{SIAM Journal on Optimization}, 24\penalty0 (4):\penalty0
  1779--1814, 2014.

\bibitem[Chen et~al.(2017)Chen, Lan, and Ouyang]{chen2017accelerated}
Chen, Y., Lan, G., and Ouyang, Y.
\newblock Accelerated schemes for a class of variational inequalities.
\newblock \emph{Mathematical Programming}, 165\penalty0 (1):\penalty0 113--149,
  2017.

\bibitem[Concus \& Golub(2007)Concus and Golub]{Concus2007generalized}
Concus, P. and Golub, G.~H.
\newblock A generalized conjugate gradient method for non-symmetric systems of
  linear equations.
\newblock In \emph{Milestones in Matrix Computation - Selected Works of Gene H.
  Golub, with Commentaries}, pp.\  68--78. Oxford University Press, 2007.

\bibitem[Dai et~al.(2018)Dai, Shaw, Li, Xiao, He, Liu, Chen, and
  Song]{dai2018sbeed}
Dai, B., Shaw, A., Li, L., Xiao, L., He, N., Liu, Z., Chen, J., and Song, L.
\newblock {SBEED}: Convergent reinforcement learning with nonlinear function
  approximation.
\newblock In \emph{ICML}, 2018.

\bibitem[Daskalakis et~al.(2018)Daskalakis, Ilyas, Syrgkanis, and
  Zeng]{daskalakis2018training}
Daskalakis, C., Ilyas, A., Syrgkanis, V., and Zeng, H.
\newblock Training gans with optimism.
\newblock In \emph{International Conference on Learning Representations}, 2018.

\bibitem[Du et~al.(2017)Du, Chen, Li, Xiao, and Zhou]{du2017stochastic}
Du, S.~S., Chen, J., Li, L., Xiao, L., and Zhou, D.
\newblock Stochastic variance reduction methods for policy evaluation.
\newblock In \emph{ICML}, 2017.

\bibitem[Gidel et~al.(2019)Gidel, Berard, Vignoud, Vincent, and
  Lacoste-Julien]{gidel2018variational}
Gidel, G., Berard, H., Vignoud, G., Vincent, P., and Lacoste-Julien, S.
\newblock A variational inequality perspective on generative adversarial
  networks.
\newblock In \emph{International Conference on Learning Representations}, 2019.

\bibitem[Greenbaum(1997)]{Greenbaum1997iterative}
Greenbaum, A.
\newblock \emph{Iterative methods for solving linear systems}, volume~17 of
  \emph{Frontiers in applied mathematics}.
\newblock {SIAM}, 1997.
\newblock ISBN 978-0-89871-396-1.

\bibitem[He \& Monteiro(2016)He and Monteiro]{he2016accelerated}
He, Y. and Monteiro, R.~D.
\newblock An accelerated hpe-type algorithm for a class of composite
  convex-concave saddle-point problems.
\newblock \emph{SIAM Journal on Optimization}, 26\penalty0 (1):\penalty0
  29--56, 2016.

\bibitem[Ibrahim et~al.(2019)Ibrahim, Azizian, Gidel, and
  Mitliagkas]{ibrahim2019linear}
Ibrahim, A., Azizian, W., Gidel, G., and Mitliagkas, I.
\newblock Linear lower bounds and conditioning of differentiable games.
\newblock \emph{arXiv preprint arXiv:1906.07300}, 2019.

\bibitem[{Kanzow} \& {Steck}(2016){Kanzow} and {Steck}]{kanzow2016augmented}
{Kanzow}, C. and {Steck}, D.
\newblock Augmented lagrangian methods for the solution of generalized nash
  equilibrium problems.
\newblock \emph{Siam Journal on Optimization}, 26\penalty0 (4):\penalty0
  2034--2058, 2016.

\bibitem[Kolossoski \& Monteiro(2017)Kolossoski and
  Monteiro]{kolossoski2017accelerated}
Kolossoski, O. and Monteiro, R.~D.
\newblock An accelerated non-euclidean hybrid proximal extragradient-type
  algorithm for convex--concave saddle-point problems.
\newblock \emph{Optimization Methods and Software}, 32\penalty0 (6):\penalty0
  1244--1272, 2017.

\bibitem[Korpelevich(1976)]{korpelevich1976extragradient}
Korpelevich, G.~M.
\newblock The extragradient method for finding saddle points and other
  problems.
\newblock \emph{Matecon}, 12:\penalty0 747--756, 1976.

\bibitem[Lin et~al.(2018)Lin, Mairal, and Harchaoui]{lin2018catalyst}
Lin, H., Mairal, J., and Harchaoui, Z.
\newblock Catalyst acceleration for first-order convex optimization: from
  theory to practice.
\newblock \emph{Journal of Machine Learning Research}, 18\penalty0
  (212):\penalty0 1--54, 2018.

\bibitem[Lin et~al.(2020)Lin, Jin, and Jordan]{lin2020near}
Lin, T., Jin, C., and Jordan, M.~I.
\newblock Near-optimal algorithms for minimax optimization.
\newblock In \emph{Conference on Learning Theory}, pp.\  2738--2779. PMLR,
  2020.

\bibitem[Malitsky(2015)]{malitsky2015projected}
Malitsky, Y.
\newblock Projected reflected gradient methods for monotone variational
  inequalities.
\newblock \emph{SIAM Journal on Optimization}, 25\penalty0 (1):\penalty0
  502--520, 2015.

\bibitem[Mertikopoulos et~al.(2019)Mertikopoulos, Lecouat, Zenati, Foo,
  Chandrasekhar, and Piliouras]{mertikopoulos2019optimistic}
Mertikopoulos, P., Lecouat, B., Zenati, H., Foo, C.-S., Chandrasekhar, V., and
  Piliouras, G.
\newblock Optimistic mirror descent in saddle-point problems: Going the extra
  (gradient) mile.
\newblock In \emph{ICLR 2019-7th International Conference on Learning
  Representations}, pp.\  1--23, 2019.

\bibitem[Mokhtari et~al.(2019{\natexlab{a}})Mokhtari, Ozdaglar, and
  Pattathil]{mokhtari2019proximal}
Mokhtari, A., Ozdaglar, A., and Pattathil, S.
\newblock Proximal point approximations achieving a convergence rate of
  ${O}(1/k)$ for smooth convex-concave saddle point problems: Optimistic
  gradient and extra-gradient methods.
\newblock \emph{arXiv preprint:1906.01115}, 2019{\natexlab{a}}.

\bibitem[Mokhtari et~al.(2019{\natexlab{b}})Mokhtari, Ozdaglar, and
  Pattathil]{mokhtari2019unified}
Mokhtari, A., Ozdaglar, A., and Pattathil, S.
\newblock A unified analysis of extra-gradient and optimistic gradient methods
  for saddle point problems: Proximal point approach.
\newblock \emph{arXiv preprint:1901.08511}, 2019{\natexlab{b}}.

\bibitem[Nesterov(2005)]{nesterov2005smooth}
Nesterov, Y.
\newblock Smooth minimization of non-smooth functions.
\newblock \emph{Mathematical programming}, 103\penalty0 (1):\penalty0 127--152,
  2005.

\bibitem[Nesterov(2018)]{nesterov2018lectures}
Nesterov, Y.
\newblock \emph{Lectures on convex optimization}, volume 137.
\newblock Springer, 2018.

\bibitem[Nesterov \& Scrimali(2006)Nesterov and Scrimali]{nesterov2006solving}
Nesterov, Y. and Scrimali, L.
\newblock Solving strongly monotone variational and quasi-variational
  inequalities.
\newblock \emph{Available at SSRN 970903}, 2006.

\bibitem[Palaniappan \& Bach(2016)Palaniappan and
  Bach]{palaniappan2016stochastic}
Palaniappan, B. and Bach, F.
\newblock Stochastic variance reduction methods for saddle-point problems.
\newblock In \emph{NIPS}, 2016.

\bibitem[Rakhlin \& Sridharan(2013{\natexlab{a}})Rakhlin and
  Sridharan]{rakhlin2013online}
Rakhlin, A. and Sridharan, K.
\newblock Online learning with predictable sequences.
\newblock In \emph{Conference on Learning Theory}, pp.\  993--1019. PMLR,
  2013{\natexlab{a}}.

\bibitem[Rakhlin \& Sridharan(2013{\natexlab{b}})Rakhlin and
  Sridharan]{rakhlin2013optimization}
Rakhlin, A. and Sridharan, K.
\newblock Optimization, learning, and games with predictable sequences.
\newblock In \emph{Proceedings of the 26th International Conference on Neural
  Information Processing Systems-Volume 2}, pp.\  3066--3074,
  2013{\natexlab{b}}.

\bibitem[Shen et~al.(2018)Shen, Mokhtari, Zhou, Zhao, and
  Qian]{shen2018towards}
Shen, Z., Mokhtari, A., Zhou, T., Zhao, P., and Qian, H.
\newblock Towards more efficient stochastic decentralized learning: Faster
  convergence and sparse communication.
\newblock In \emph{ICML}, 2018.

\bibitem[Tan et~al.(2018)Tan, Zhang, Ma, and Liu]{tan2018stochastic}
Tan, C., Zhang, T., Ma, S., and Liu, J.
\newblock Stochastic primal-dual method for empirical risk minimization with
  {O}(1) per-iteration complexity.
\newblock In \emph{NIPS}, 2018.

\bibitem[Taskar et~al.(2005)Taskar, Lacoste-Julien, and
  Jordan]{taskar2005structured}
Taskar, B., Lacoste-Julien, S., and Jordan, M.
\newblock Structured prediction via the extragradient method.
\newblock In \emph{NIPS}, 2005.

\bibitem[Tseng(1995)]{tseng1995linear}
Tseng, P.
\newblock On linear convergence of iterative methods for the variational
  inequality problem.
\newblock \emph{Journal of Computational and Applied Mathematics}, 60\penalty0
  (1-2):\penalty0 237--252, 1995.

\bibitem[Wang \& Li(2020)Wang and Li]{wang2020improved}
Wang, Y. and Li, J.
\newblock Improved algorithms for convex-concave minimax optimization.
\newblock \emph{Advances in Neural Information Processing Systems}, 2020.

\bibitem[Xu et~al.(2004)Xu, Neufeld, Larson, and Schuurmans]{xu2004maximum}
Xu, L., Neufeld, J., Larson, B., and Schuurmans, D.
\newblock Maximum margin clustering.
\newblock In \emph{NIPS}, 2004.

\bibitem[Yadav et~al.(2017)Yadav, Shah, Xu, Jacobs, and
  Goldstein]{yadav2017stabilizing}
Yadav, A., Shah, S., Xu, Z., Jacobs, D., and Goldstein, T.
\newblock Stabilizing adversarial nets with prediction methods.
\newblock \emph{arXiv preprint arXiv:1705.07364}, 2017.

\bibitem[Yang et~al.(2020)Yang, Zhang, Kiyavash, and He]{yang2020catalyst}
Yang, J., Zhang, S., Kiyavash, N., and He, N.
\newblock A catalyst framework for minimax optimization.
\newblock In \emph{Advances in Neural Information Processing Systems 33: Annual
  Conference on Neural Information Processing Systems 2020, NeurIPS 2020},
  2020.

\bibitem[Ying et~al.(2016)Ying, Wen, and Lyu]{ying2016stochastic}
Ying, Y., Wen, L., and Lyu, S.
\newblock Stochastic online {AUC} maximization.
\newblock In \emph{NIPS}, 2016.

\bibitem[Zhang \& Sandholm(2020)Zhang and Sandholm]{zhang2020sparsified}
Zhang, B. and Sandholm, T.
\newblock Sparsified linear programming for zero-sum equilibrium finding.
\newblock In \emph{International Conference on Machine Learning}, pp.\
  11256--11267. PMLR, 2020.

\bibitem[Zhang et~al.(2019)Zhang, Hong, and Zhang]{zhang2019lower}
Zhang, J., Hong, M., and Zhang, S.
\newblock On lower iteration complexity bounds for the saddle point problems.
\newblock \emph{arXiv preprint:1912.07481}, 2019.

\bibitem[Zhang \& Xiao(2017)Zhang and Xiao]{zhang2017stochastic}
Zhang, Y. and Xiao, L.
\newblock Stochastic primal-dual coordinate method for regularized empirical
  risk minimization.
\newblock \emph{The Journal of Machine Learning Research}, 18\penalty0
  (1):\penalty0 2939--2980, 2017.

\end{thebibliography}

\newpage
\appendix
\onecolumn
\section{Technique Lemmas}

We first present some equivalent statements of the definition of smoothness.
\begin{lemma}\label{lem:smooth:def}
    Let $\varphi$ be convex on $\sR^d$. then following conditions below, holding for all $\vx_1, \vx_2 \in \sR^d$, are equivalent:
    \begin{enumerate}[label=(\roman*)]
        \item $\norm{\nabla \varphi(\vx_1) - \nabla \varphi(\vx_2)} \le \ell \norm{\vx_1 - \vx_2}$, 
        \item $\varphi(\vx_2) - \varphi(\vx_1) - \inner{\nabla \varphi(\vx_1)}{\vx_2 - \vx_1} \le \frac{\ell}{2} \norm{\vx_1 - \vx_2}^2$, 
        \item $\inner{\nabla \varphi(\vx_1) - \nabla \varphi(\vx_2)}{\vx_1 - \vx_2} \ge \frac{1}{\ell} \norm{\nabla \varphi(\vx_1) - \nabla \varphi(\vx_2)}^2$.
    \end{enumerate}
    
\end{lemma}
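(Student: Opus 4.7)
The plan is to establish the cycle of implications $(i) \Rightarrow (ii) \Rightarrow (iii) \Rightarrow (i)$, which is the standard route for such equivalences and only requires convexity of $\varphi$ plus Cauchy--Schwarz at two steps.

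For $(i) \Rightarrow (ii)$, I would use the fundamental theorem of calculus along the segment from $\vx_1$ to $\vx_2$:
\begin{align*}
\varphi(\vx_2) - \varphi(\vx_1) - \inner{\nabla\varphi(\vx_1)}{\vx_2-\vx_1}
= \int_0^1 \inner{\nabla\varphi(\vx_1 + t(\vx_2-\vx_1)) - \nabla\varphi(\vx_1)}{\vx_2-\vx_1}\, dt.
\end{align*}
Applying Cauchy--Schwarz inside the integral and bounding the gradient difference by $\ell t\norm{\vx_2-\vx_1}$ via $(i)$ gives the bound $\tfrac{\ell}{2}\norm{\vx_1-\vx_2}^2$, which is exactly $(ii)$. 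Convexity is not even needed here.

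The main obstacle is $(ii) \Rightarrow (iii)$, which is the co-coercivity step and is the one place where convexity of $\varphi$ is essential. I would introduce the two auxiliary functions
\begin{align*}
\varphi_i(\vz) = \varphi(\vz) - \inner{\nabla\varphi(\vx_i)}{\vz}, \qquad i=1,2.
\end{align*}
Each $\varphi_i$ is convex (as a sum of $\varphi$ and a linear function), satisfies the same quadratic upper bound $(ii)$ because the linear term cancels in the difference, and has a global minimum at $\vx_i$ since $\nabla\varphi_i(\vx_i)=0$. Applying $(ii)$ to $\varphi_1$ at the candidate point $\vx_2 - \tfrac{1}{\ell}\nabla\varphi_1(\vx_2)$ yields a one-step descent bound, and then lower-bounding by $\varphi_1(\vx_1)$ (the minimum) produces
\begin{align*}
\varphi_1(\vx_1) \le \varphi_1(\vx_2) - \tfrac{1}{2\ell}\norm{\nabla\varphi(\vx_2) - \nabla\varphi(\vx_1)}^2.
\end{align*}
Doing the symmetric argument with $\varphi_2$ and adding the two inequalities makes the $\varphi$-values cancel and produces exactly $(iii)$. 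The main subtlety is to carry out the cancellation cleanly and to justify that the descent point minimizes the quadratic upper model of $\varphi_i$.

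Finally, $(iii) \Rightarrow (i)$ is immediate from Cauchy--Schwarz: from $(iii)$,
\begin{align*}
\tfrac{1}{\ell}\norm{\nabla\varphi(\vx_1)-\nabla\varphi(\vx_2)}^2 \le \inner{\nabla\varphi(\vx_1)-\nabla\varphi(\vx_2)}{\vx_1-\vx_2} \le \norm{\nabla\varphi(\vx_1)-\nabla\varphi(\vx_2)}\,\norm{\vx_1-\vx_2},
\end{align*}
and dividing by $\tfrac{1}{\ell}\norm{\nabla\varphi(\vx_1)-\nabla\varphi(\vx_2)}$ (the trivial case of equal gradients being handled separately) yields $(i)$. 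This closes the cycle and completes the proof.
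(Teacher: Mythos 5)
Your proposal is correct and follows essentially the same route as the paper: fundamental theorem of calculus plus Cauchy--Schwarz for $(i)\Rightarrow(ii)$, the auxiliary function $\psi(\vz)=\varphi(\vz)-\inner{\nabla\varphi(\vx_1)}{\vz}$ (you call it $\varphi_1$, and the paper invokes the symmetric version implicitly rather than naming $\varphi_2$) minimized at $\vx_1$ with the one-step descent bound, then summing the two symmetric inequalities for $(ii)\Rightarrow(iii)$, and Cauchy--Schwarz for $(iii)\Rightarrow(i)$.
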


\begin{proof}
    $ (i) \Rightarrow (ii)$: 
    Just note that
    \begin{align*}
        \varphi(\vx_2) - \varphi(\vx_1) - \inner{\nabla \varphi(\vx_1)}{\vx_2 - \vx_1} &= \int_0^1 \inner{\nabla \varphi(\vx_1 + t (\vx_2 - \vx_1)) - \nabla \varphi(\vx_1)}{\vx_2 - \vx_1} d t \\
        &\le \int_0^1 \ell t \norm{\vx_2 - \vx_1}^2 d t = \frac{\ell}{2} \norm{\vx_2 - \vx_1}^2,
    \end{align*}
    where the inequality follows from $(i)$ and Cauchy–Schwarz inequality. 
    
    $(ii) \Rightarrow (iii)$: 
    Consider the function $\psi(\vx) = \varphi(\vx) - \inner{\nabla \varphi(\vx_1)}{\vx}$ defined on $\sR^d$. \\
    It is easy to check that $\psi$ is convex and satisfies condition $(ii)$. Furthermore, the optimal point of $\psi$ is $\vx_1$, which implies
    \begin{equation}\label{eq:smooth:1}
    \begin{aligned}
        \psi(\vx_1) = \min_{\vx \in \sR^d} \psi(\vx) &\le \min_{\vx \in \sR^d} \left\{ \psi(\vx_2) + \inner{\nabla \psi(\vx_2)}{\vx - \vx_2} + \frac{\ell}{2} \norm{\vx - \vx_2}^2 \right\} \\
        % &= \min_{r \ge 0} \left\{ \phi(\vx_2) + r \norm{\nabla \phi(\vx_2)} + \frac{\ell r^2}{2} \right\} \\
        &= \psi(\vx_2) - \frac{1}{2\ell} \norm{\nabla \psi(\vx_2)}^2,
    \end{aligned}
    \end{equation}
    where the optimal point of the second problem is $\vx_2 - \frac{1}{\ell} {\nabla \varphi(\vx_2)}$. 
    
    Following from the definition of $\psi$ and Equation (\ref{eq:smooth:1}), we have
    \begin{align}
        \notag \varphi(\vx_1) - &\inner{\nabla \varphi(\vx_1)}{\vx_1} \le \varphi(\vx_2) - \inner{\nabla \varphi(\vx_1)}{\vx_2} - \frac{1}{2 \ell} \norm{\nabla \varphi(\vx_2) - \nabla \varphi(\vx_1)}^2, ~~\text{i.e.,} \\
        \label{eq:smooth:2} &\varphi(\vx_2) - \varphi(\vx_1) - \inner{\nabla \varphi(\vx_1)}{\vx_2 - \vx_1} \ge \frac{1}{2 \ell} \norm{\nabla \varphi(\vx_2) - \nabla \varphi(\vx_1)}^2.
    \end{align}
    
    Similarly, there also holds
    \begin{align}\label{eq:smooth:3}
        \varphi(\vx_1) - \varphi(\vx_2) - \inner{\nabla \varphi(\vx_2)}{\vx_1 - \vx_2} \ge \frac{1}{2 \ell} \norm{\nabla \varphi(\vx_2) - \nabla \varphi(\vx_1)}^2.
    \end{align}
    
    Adding both sides of Equation (\ref{eq:smooth:2}) and (\ref{eq:smooth:3}) together, we know that $\varphi$ satisfies condition $(iii)$.
    
    $(iii) \Rightarrow (i)$:
    By Cauchy–Schwarz inequality and $(ii)$, we have
    \begin{align*}
        \norm{\nabla \varphi(\vx_1) - \nabla \varphi(\vx_2)}^2 &\le \ell \inner{\nabla \varphi(\vx_1) - \nabla \varphi(\vx_2)}{\vx_1 - \vx_2} \\
        &\le \ell \norm{\nabla \varphi(\vx_1) - \nabla \varphi(\vx_2)} \norm{\vx_1 - \vx_2},
    \end{align*}
    which is our desired result.
\end{proof}

Now, we are ready to prove Lemma \ref{lem:smooth}.
\begin{proof}[Proof of Lemma \ref{lem:smooth}]
    Consider function $\psi(\vx) = \varphi(\vx) - \frac{\mu}{2} \norm{\vx}^2$. By $\mu$-strongly convexity of $\varphi$, for any $\vx_1, \vx_2 \in \sR^d$, we know that 
    \begin{align*}
        &\quad \psi(\vx_2) - \psi(\vx_1) - \inner{\nabla \psi(\vx_1)}{\vx_2 - \vx_1} \\
        &= \varphi(\vx_2) - \varphi(\vx_1) - \inner{\nabla \varphi(\vx_1)}{\vx_2 - \vx_1} - \frac{\mu}{2} \left( \norm{\vx_2}^2 - \norm{\vx_1}^2 - 2\inner{\vx_1}{\vx_2 - \vx_1} \right) \\
        &= \varphi(\vx_2) - \varphi(\vx_1) - \inner{\nabla \varphi(\vx_1)}{\vx_2 - \vx_1} - \frac{\mu}{2} \norm{\vx_2 - \vx_1}^2 \ge 0,
    \end{align*}
    which implies that $\psi$ is convex. 
    
    On the other hand, by $\ell$-smoothness of $\varphi$ and Condition $(ii)$ in Lemma \ref{lem:smooth:def}, there holds
    \begin{align*}
        &\quad \psi(\vx_2) - \psi(\vx_1) - \inner{\nabla \psi(\vx_1)}{\vx_2 - \vx_1} \\
        &= \varphi(\vx_2) - \varphi(\vx_1) - \inner{\nabla \varphi(\vx_1)}{\vx_2 - \vx_1} - \frac{\mu}{2} \norm{\vx_2 - \vx_1}^2 \\
        &\le \frac{\ell - \mu}{2} \norm{\vx_2 - \vx_1}^2,
    \end{align*}
    which implies that $\psi$ is $(\ell - \mu)$-smooth.
    Consequently, following from Condition $(iii)$ in Lemma \ref{lem:smooth:def}, we have
    \begin{align*}
        \inner{\nabla \psi(\vx_1) - \nabla \psi(\vx_2)}{\vx_1 - \vx_2} \ge \frac{1}{\ell - \mu} \norm{\nabla \psi(\vx_1) - \nabla \psi(\vx_2)}^2, ~~~~\text{i.e.,} \\
        \inner{\nabla \varphi(\vx_1) - \nabla \varphi(\vx_2)}{\vx_1 - \vx_2} - \mu \norm{\vx_1 - \vx_2}^2 \ge \frac{1}{\ell - \mu} \norm{\nabla \varphi(\vx_1) - \nabla \varphi(\vx_2) - \mu(\vx_1 - \vx_2)}^2.
    \end{align*}
    By rearranging above inequality, we get that
    \begin{align*}
        \left(1 + \frac{2\mu}{\ell - \mu}\right)\inner{\nabla \varphi(\vx_1) - \nabla \varphi(\vx_2)}{\vx_1 - \vx_2} 
        \ge \left(\mu + \frac{\mu^2}{\ell - \mu}\right) \norm{\vx_1 - \vx_2}^2 + \frac{1}{\ell - \mu} \norm{\nabla \varphi(\vx_1) - \nabla \varphi(\vx_2)}^2,
    \end{align*}
    that is 
    \begin{align*}
        \frac{\ell + \mu}{\ell - \mu} \inner{\nabla \varphi(\vx_1) - \nabla \varphi(\vx_2)}{\vx_1 - \vx_2} 
        \ge \frac{\ell \mu}{\ell - \mu} \norm{\vx_1 - \vx_2}^2 + \frac{1}{\ell - \mu} \norm{\nabla \varphi(\vx_1) - \nabla \varphi(\vx_2)}^2.
    \end{align*}
\end{proof}

We then show Lipschitz continuity of the proximal operator with respect to strongly convex functions. 
\begin{lemma}\label{lem:prox:Lipschitz}
    Let $\varphi$ be convex on $\gX$. For all $\vx_1, \vx_2 \in \gX$, define 
    \begin{align*}
        \vu_i = \argmin_{\vu \in \gX} \varphi(\vu) + \frac{1}{2} \norm{\vu - \vx_i}, ~~ i = 1,2.
    \end{align*}
    Then there holds
    \begin{align*}
        \norm{\vu_1 - \vu_2} \le \norm{\vx_1 - \vx_2}.
    \end{align*}
\end{lemma}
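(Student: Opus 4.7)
The plan is to use the first-order optimality condition for each proximal subproblem. Interpreting the definition in its standard form $\vu_i = \argmin_{\vu \in \gX} \varphi(\vu) + \frac{1}{2}\norm{\vu - \vx_i}^2$ (so that the objective is strongly convex and $\vu_i$ is unique), the variational inequality characterization of a constrained minimizer gives a subgradient $\vg_i \in \partial \varphi(\vu_i)$ such that $\inner{\vg_i + (\vu_i - \vx_i)}{\vv - \vu_i} \ge 0$ for every $\vv \in \gX$.

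Next, I would evaluate this inequality at $\vv = \vu_2$ for $i = 1$ and at $\vv = \vu_1$ for $i = 2$, and add the two. After cancellation, the result is $\inner{(\vx_1 - \vx_2) - (\vu_1 - \vu_2)}{\vu_1 - \vu_2} \ge \inner{\vg_1 - \vg_2}{\vu_1 - \vu_2}$. The right-hand side is nonnegative by the monotonicity of the convex subdifferential, which yields
\begin{align*}
    \inner{\vx_1 - \vx_2}{\vu_1 - \vu_2} \ge \norm{\vu_1 - \vu_2}^2.
\end{align*}
This is the firm nonexpansiveness of the proximal operator; the desired $1$-Lipschitz bound then follows in one line from Cauchy--Schwarz, since $\norm{\vu_1 - \vu_2}^2 \le \norm{\vx_1 - \vx_2}\,\norm{\vu_1 - \vu_2}$ gives $\norm{\vu_1 - \vu_2} \le \norm{\vx_1 - \vx_2}$.

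There is essentially no hard step here. The only subtlety is that $\gX$ is a constrained domain, so the optimality condition must be written as a variational inequality rather than as $0 \in \partial(\cdot)$; implicit in this is the (harmless) convexity of $\gX$. If $\varphi$ is nonsmooth, $\partial\varphi$ should be read as the convex subdifferential, for which the monotonicity $\inner{\vg_1 - \vg_2}{\vu_1 - \vu_2} \ge 0$ is a textbook fact. No results beyond basic convex analysis are needed.
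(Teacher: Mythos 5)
Your proof is correct and arrives at exactly the paper's pivotal inequality, the firm nonexpansiveness $\inner{\vx_1 - \vx_2}{\vu_1 - \vu_2} \ge \norm{\vu_1 - \vu_2}^2$, followed by the same Cauchy--Schwarz finish. The only difference is how you get there: you invoke the variational-inequality optimality condition at each $\vu_i$, add the two, and then use monotonicity of $\partial\varphi$ to discard the $\inner{\vg_1 - \vg_2}{\vu_1 - \vu_2}$ term; the paper instead adds the two quadratic-growth inequalities $\Phi_i(\vu_j) \ge \Phi_i(\vu_i) + \tfrac{1}{2}\norm{\vu_i - \vu_j}^2$ coming from $1$-strong convexity of $\Phi_i(\vu) = \varphi(\vu) + \tfrac{1}{2}\norm{\vu - \vx_i}^2$, so that $\varphi(\vu_1)$ and $\varphi(\vu_2)$ cancel directly. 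These are the two standard textbook derivations of firm nonexpansiveness; yours makes the role of the constraint set $\gX$ and the subdifferential monotonicity explicit (which is a plus for a general nonsmooth $\varphi$ on a convex $\gX$), while the paper's is marginally shorter because it never needs to name a subgradient. You also correctly read through the paper's typo, treating the penalty as $\tfrac{1}{2}\norm{\vu - \vx_i}^2$ rather than the un-squared $\tfrac{1}{2}\norm{\vu - \vx_i}$ as written in the lemma statement; the paper's own proof confirms the squared version is intended.
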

\begin{proof}
    % Denote $\vu_i = \prox_{\varphi}(\vx_i)$ for $i = 1, 2$. \\
    By strongly convexity of the functions $\Phi_i(\vu) \triangleq \varphi(\vu) + \frac{1}{2} \norm{\vu - \vx_i}^2$, we have
    \begin{align*}
        \varphi(\vu_2) + \frac{1}{2} \norm{\vu_2 - \vx_1}^2 &\ge \varphi(\vu_1) + \frac{1}{2} \norm{\vu_1 - \vx_1}^2 + \frac{1}{2} \norm{\vu_1 - \vu_2}^2, \\
        \varphi(\vu_1) + \frac{1}{2} \norm{\vu_1 - \vx_2}^2 &\ge \varphi(\vu_2) + \frac{1}{2} \norm{\vu_2 - \vx_2}^2 + \frac{1}{2} \norm{\vu_1 - \vu_2}^2.
    \end{align*}
    
    With adding both side of above two inequalities, we obtain that
    \begin{align*}
        \frac{1}{2} \norm{\vu_2 - \vx_1}^2 + \frac{1}{2} \norm{\vu_1 - \vx_2}^2 &\ge \frac{1}{2} \norm{\vu_1 - \vx_1}^2 + \frac{1}{2} \norm{\vu_2 - \vx_2}^2 + \norm{\vu_1 - \vu_2}^2, ~~\text{i.e.,} \\
        -\inner{\vu_2}{\vx_1} - \inner{\vu_1}{\vx_2} &\ge -\inner{\vu_1}{\vx_1} -\inner{\vu_2}{\vx_2} + \norm{\vu_1 - \vu_2}^2, ~~\text{i.e.,} \\
        \inner{\vu_1 - \vu_2}{\vx_1 - \vx_2} &\ge \norm{\vu_1 - \vu_2}^2.
    \end{align*}
    Then following from Cauchy–Schwarz inequality, there holds 
    \begin{align*}
        \norm{\vu_1 - \vu_2}^2 \le \norm{\vu_1 - \vu_2} \norm{\vx_1 - \vx_2},
    \end{align*}
    which implies that
    \begin{align*}
        \norm{\vu_1 - \vu_2} \le \norm{\vx_1 - \vx_2}.
    \end{align*}
\end{proof}

\section{Proof of Theorem \ref{thm:apfb}}
\label{app:apfb}
\begin{proof}
    Note that 
    \begin{align*}
        \vy_k = \argmin_{\vy} H_k(\vy) \triangleq h(\vy) + \frac{1}{2 \sigma} \norm{\vy - \vy_{k-1} - \sigma \mA^{\top} \tilde\vx_{k-1}}^2.
    \end{align*}
    By $(\mu_y + 1/\sigma)$-strongly convexity of $H_k$, we know that
    \begin{align*}
        &\quad h(\vy^*) + \frac{1}{2\sigma} \norm{\vy^* - \vy_{k-1} - \sigma \mA^{\top} \tilde\vx_{k-1}}^2 \\
        &\ge h(\vy_k) + \frac{1}{2\sigma} \norm{\vy_k - \vy_{k-1} - \sigma \mA^{\top} \tilde\vx_{k-1}}^2 + \left( \frac{\mu_y}{2} + \frac{1}{2\sigma}\right) \norm{\vy_k - \vy^*}^2,
    \end{align*}
    that is 
    \begin{align}
        \notag &\quad h(\vy^*) + \frac{1}{2\sigma} \norm{\vy_{k-1} - \vy^*}^2 + \inner{\vy_k - \vy^*}{\mA^{\top} \tilde\vx_{k-1}} \\
        \label{apfb:proof:1} &\ge h(\vy_k) + \frac{1}{2\sigma} \norm{\vy_k - \vy_{k-1}}^2 + \left( \frac{\mu_y}{2} + \frac{1}{2\sigma}\right) \norm{\vy_k - \vy^*}^2.
    \end{align}
    
    Similarly, by 
    \begin{align*}
        \vx_k = \argmin_{\vx} g(\vx) + \frac{1}{2 \gamma} \norm{\vx - \vx_{k-1} + \gamma \mA \vy_{k}}^2,
    \end{align*}
    we have
    \begin{align*}
        &\quad g(\vx^*) + \frac{1}{2\gamma} \norm{\vx^* - \vx_{k-1} + \gamma \mA \vy_k}^2 \\
        &\ge g(\vx_k) + \frac{1}{2\gamma} \norm{\vx_k - \vx_{k-1} + \gamma \mA \vy_k}^2 + \left( \frac{\mu_x}{2} + \frac{1}{2\gamma}\right) \norm{\vx_k - \vx^*}^2,
    \end{align*}
    which implies 
    \begin{align}
        \notag &\quad g(\vx^*) + \frac{1}{2\gamma} \norm{\vx_{k-1} - \vx^*}^2 - \inner{\vx_k - \vx^*}{\mA \vy_k} \\
        \label{apfb:proof:2} &\ge g(\vx_k) + \frac{1}{2\gamma} \norm{\vx_k - \vx_{k-1}}^2 + \left( \frac{\mu_x}{2} + \frac{1}{2\gamma}\right) \norm{\vx_k - \vx^*}^2.
    \end{align}
    
    Then we add both sides of the inequalities (\ref{apfb:proof:1}) and (\ref{apfb:proof:2}). Thus we have
    \begin{equation}\label{apfb:proof:3}
    \begin{aligned}
        &\quad \frac{1}{2\gamma} \norm{\vx_{k-1} - \vx^*}^2 + \frac{1}{2\sigma} \norm{\vy_{k-1} - \vy^*}^2 \\
        &\ge \left( \frac{\mu_x}{2} + \frac{1}{2\gamma}\right) \norm{\vx_k - \vx^*}^2 + \left( \frac{\mu_y}{2} + \frac{1}{2\sigma}\right) \norm{\vy_k - \vy^*}^2 \\
        &\quad + \frac{1}{2\gamma} \norm{\vx_k - \vx_{k-1}}^2 + \frac{1}{2\sigma} \norm{\vy_k - \vy_{k-1}}^2 \\
        &\quad + g(\vx_k) + h(\vy_k) - g(\vx^*) - h(\vy^*) + \inner{\vx_k - \vx^*}{\mA \vy_k} - \inner{\vy_k - \vy^*}{\mA^{\top} \tilde\vx_{k-1}}.
    \end{aligned}
    \end{equation}
    
    Observe that
    \begin{align}\label{apfb:proof:4}
        f(\vx_k, \vy^*) - f(\vx^*, \vy_k) 
        = g(\vx_k) + \inner{\vx_k}{\mA \vy^*} - h(\vy^*) - g(\vx^*) - \inner{\vx^*}{\mA \vy_k} + h(\vy_k).
    \end{align}
    Plugging Equality (\ref{apfb:proof:4}) into Inequality (\ref{apfb:proof:3}), we have
    \begin{equation}\label{apfb:proof:5}
    \begin{aligned}
        &\quad \frac{1}{2\gamma} \norm{\vx_{k-1} - \vx^*}^2 + \frac{1}{2\sigma} \norm{\vy_{k-1} - \vy^*}^2 \\
        &\ge \left( \frac{\mu_x}{2} + \frac{1}{2\gamma}\right) \norm{\vx_k - \vx^*}^2 + \left( \frac{\mu_y}{2} + \frac{1}{2\sigma}\right) \norm{\vy_k - \vy^*}^2 
        + \frac{1}{2\gamma} \norm{\vx_k - \vx_{k-1}}^2 + \frac{1}{2\sigma} \norm{\vy_k - \vy_{k-1}}^2 \\
        &\quad + f(\vx_k, \vy^*) - f(\vx^*, \vy_k) + \inner{\vx_k - \tilde\vx_{k-1}}{\mA (\vy_k - \vy^*)}.
        % &\ge \left( \frac{\mu_x}{2} + \frac{1}{2\gamma}\right) \norm{\vx_k - \vx^*}^2 + \left( \frac{\mu_y}{2} + \frac{1}{2\sigma}\right) \norm{\vy_k - \vy^*}^2 
        % + \frac{1}{2\gamma} \norm{\vx_k - \vx_{k-1}}^2 + \frac{1}{2\sigma} \norm{\vy_k - \vy_{k-1}}^2 \\
        % &\quad + \inner{\vx_k - \vx_{k-1} - \theta (\vx_{k-1} - \vx_{k-2})}{\vy_k - \vy^*}.
    \end{aligned}
    \end{equation}
    
    With recalling the definition of $\tilde\vx_{k-1}$, the last term of Inequality (\ref{apfb:proof:5}) can be rewritten as
    \begin{equation}\label{apfb:proof:6}
    \begin{aligned}
        &\quad \inner{\vx_k - \tilde\vx_{k-1}}{\mA (\vy_k - \vy^*)} \\
        &= \inner{\vx_k - \vx_{k-1} - \theta (\vx_{k-1} - \vx_{k-2})}{\mA(\vy_k - \vy^*)} \\
        &= \inner{\vx_k - \vx_{k-1}}{\mA(\vy_k - \vy^*)} - \theta \inner{\vx_{k-1} - \vx_{k-2}}{\mA(\vy_{k-1} - \vy^*)} - \theta \inner{\vx_{k-1} - \vx_{k-2}}{\mA(\vy_k - \vy_{k-1})}.
    \end{aligned}
    \end{equation}
    
    Furthermore, we have
    \begin{equation}\label{apfb:proof:7}
    \begin{aligned}
        &\quad - \theta \inner{\vx_{k-1} - \vx_{k-2}}{\mA(\vy_k - \vy_{k-1})} \\
        &\ge -\theta \norm{\mA} \norm{\vx_{k-1} - \vx_{k-2}} \norm{\vy_k - \vy_{k-1}} \\
        &\ge -\frac{\theta}{2} \norm{\mA} \sqrt{\frac{\mu_x}{\mu_y}} \norm{\vx_{k-1} - \vx_{k-2}}^2 -\frac{\theta}{2} \norm{\mA} \sqrt{\frac{\mu_y}{\mu_x}} \norm{\vy_k - \vy_{k-1}}^2 \\
        &\ge - \frac{\theta}{2\gamma}\norm{\vx_{k-1} - \vx_{k-2}}^2 - \frac{1}{2\sigma} \norm{\vy_k - \vy_{k-1}}^2,
    \end{aligned}
    \end{equation}
    where we have recalled that $\gamma = \frac{1}{\norm{\mA}} \sqrt{\frac{\mu_y}{\mu_x}}, \sigma = \frac{1}{\norm{\mA}} \sqrt{\frac{\mu_x}{\mu_y}}$ and $\theta < 1$. 
    Similarly, there also holds
    \begin{align}\label{apfb:proof:8}
        \inner{\vx_k - \vx_{k-1}}{\mA(\vy_k - \vy^*)} \ge - \frac{1}{2\gamma} \norm{\vx_k - \vx_{k-1}}^2 - \frac{1}{2\sigma} \norm{\vy_k - \vy^*}^2.
    \end{align}
    
    Plugging Equation (\ref{apfb:proof:6}) and (\ref{apfb:proof:7}) into Inequality (\ref{apfb:proof:5}), we know that
    \begin{align*}
        &\quad \frac{1}{2\gamma} \norm{\vx_{k-1} - \vx^*}^2 + \frac{1}{2\sigma} \norm{\vy_{k-1} - \vy^*}^2 + \frac{\theta}{2\gamma}\norm{\vx_{k-1} - \vx_{k-2}}^2 + \theta \inner{\vx_{k-1} - \vx_{k-2}}{\mA(\vy_{k-1} - \vy^*)} \\
        &\ge \left( \frac{\mu_x}{2} + \frac{1}{2\gamma}\right) \norm{\vx_k - \vx^*}^2 + \left( \frac{\mu_y}{2} + \frac{1}{2\sigma}\right) \norm{\vy_k - \vy^*}^2 
        + \frac{1}{2\gamma} \norm{\vx_k - \vx_{k-1}}^2 \\
        &\quad + f(\vx_k, \vy^*) - f(\vx^*, \vy_k) + \inner{\vx_k - \vx_{k-1}}{\mA(\vy_k - \vy^*)}. 
    \end{align*}

    Therefore we have
    \begin{align*}
        &\quad \frac{\mu_x}{2} \norm{\vx_k - \vx^*}^2 + \frac{\mu_y}{2} \norm{\vy_k - \vy^*}^2 \\
        &\le \left( \frac{\mu_x}{2} + \frac{1}{2\gamma}\right) \norm{\vx_k - \vx^*}^2 + \left( \frac{\mu_y}{2} + \frac{1}{2\sigma}\right) \norm{\vy_k - \vy^*}^2 
        + \frac{1}{2\gamma} \norm{\vx_k - \vx_{k-1}}^2 + \inner{\vx_k - \vx_{k-1}}{\mA(\vy_k - \vy^*)} \\
        &\le \theta^{k} \left( \left( \frac{\mu_x}{2} + \frac{1}{2\gamma}\right) \norm{\vx_0 - \vx^*}^2 + \left( \frac{\mu_y}{2} + \frac{1}{2\sigma}\right) \norm{\vy_0 - \vy^*}^2 \right) \\
        &= \frac{\norm{\mA} \theta^{k-1} }{2}  \left( \sqrt{\frac{\mu_x}{\mu_y}}\norm{\vx_0 - \vx^*}^2 + \sqrt{\frac{\mu_y}{\mu_x}}\norm{\vy_0 - \vy^*}^2  \right).
    \end{align*}
\end{proof}
\section{Proof of Theorem \ref{thm:dppa}}
\begin{proof}
    Since $(\vx^*, \vy^*)$ is the saddle point of $f$, there holds $\nabla f(\vx, \vy) = 0$, that is 
    \begin{equation}\label{eq:dppa:saddle}
    \begin{aligned}
        \nabla g(\vx^*) + \mA \vy^* = 0, ~~~
        \nabla h(\vy^*) - \mA^{\top} \vx^* = 0.
    \end{aligned}
    \end{equation}
    Note that $(\vx_k, \vy_k) = \prox_{\alpha \mA} (2 \tilde\vx_k - \vz_k, 2\tilde\vy_k - \vw_k)$ which implies that
    \begin{align*}
        (\vx_k, \vy_k) = \arg\min_{\vx}\max_{\vy} \tilde{f}_k(\vx, \vy) \triangleq \frac{1}{2} \norm{\vx - 2 \tilde\vx_k + \vz_k}^2 - \frac{1}{2} \norm{\vy - 2\tilde\vy_k + \vw_k} + \inner{\vx}{\mA \vy}.
    \end{align*}
    % Letting the gradient of $\tilde{f}_k$ equal to zero, we have
    Hence, we have $\nabla \tilde{f}_k (\vx_k, \vy_k) = 0$, that is 
    \begin{equation}\label{eq:dppa:prox2}
    \begin{aligned}
        \vx_k - (2 \tilde\vx_k - \vz_k) + \alpha \mA \vy_k = 0, \\
        \vy_k - (2\tilde\vy_k - \vw_k) - \alpha \mA^{\top} \vx_k = 0.
    \end{aligned}
    \end{equation}
    Similarly, according to $(\tilde\vx_k, \tilde\vy_k) = \prox_{\alpha (g - h)}(\vz_k, \vw_k)$, we have
    \begin{equation}\label{eq:dppa:prox1}
    \begin{aligned}
        \alpha \nabla g(\tilde\vx_k) + \tilde\vx_k - \vz_k = 0, \\
        \alpha \nabla h(\tilde\vy_k) + \tilde\vy_k - \vw_k = 0.
    \end{aligned}
    \end{equation}
    
    Therefore, we can conclude that 
    \begin{align*}
        &\quad \norm{\vz_{k+1} - \vz^*}^2 + \norm{\vw_{k+1} - \vw^*}^2 \\
        &= \norm{\vx_k - \vx^* - \alpha \mA (\vy_k - \vy^*)}^2 + \norm{\vy_k - \vy^* + \alpha \mA^{\top} (\vx_k - \vx^*)}^2 \\
        &= \norm{\vx_k - \vx^*}^2 + \norm{\vy_k - \vy^*}^2 + \alpha^2 \norm{\mA (\vy_k - \vy^*)}^2 + \alpha^2 \norm{\mA^{\top} (\vx_k - \vx^*)}^2 \\
        &= \norm{\vx_k - \vx^* + \alpha \mA (\vy_k - \vy^*)}^2 + \norm{\vy_k - \vy^* - \alpha \mA^{\top} (\vx_k - \vx^*)}^2 \\
        &= \norm{2 \tilde\vx_k - \vz_k -\vx^* - \alpha \mA \vy^* }^2 + \norm{2 \tilde\vy_k - \vw_k - \vy^* + \alpha \mA^{\top} \vx^*}^2 \\
        &= \norm{\tilde\vx_k - \alpha \nabla g(\tilde\vx_k) - \vx^* + \alpha \nabla g(\vx^*)}^2 + \norm{\tilde\vy_k - \alpha \nabla h(\tilde\vy_k) - \vy^* + \alpha \nabla h(\vy^*)}^2,
    \end{align*}
    where the forth equality is based on Equation (\ref{eq:dppa:prox2}) and the last equality follows from Equation (\ref{eq:dppa:saddle}) and (\ref{eq:dppa:prox1}). 
    
    On the other hand, by Equation (\ref{eq:dppa:saddle}) and (\ref{eq:dppa:prox1}), we also have
    \begin{align*}
        \norm{\vz_{k} - \vz^*}^2 + \norm{\vw_{k} - \vw^*}^2 
        = \norm{\tilde\vx_k + \alpha \nabla g(\tilde\vx_k) - \vx^* - \alpha \nabla g(\vx^*)}^2 + \norm{\tilde\vy_k + \alpha \nabla h(\tilde\vy_k) - \vy^* - \alpha \nabla h(\vy^*)}^2
    \end{align*}
    
    Now, We only need to prove that 
    \begin{align}\label{dppa:proof}
        \norm{\tilde\vx_k - \alpha \nabla g(\tilde\vx_k) - \vx^* + \alpha \nabla g(\vx^*)}^2 \le \eta \norm{\tilde\vx_k + \alpha \nabla g(\tilde\vx_k) - \vx^* - \alpha \nabla g(\vx^*)}^2.
    \end{align}
    In fact, above inequality is equivalent to 
    \begin{align}\label{dppa:proof:2}
        (1 - \eta) \left(\norm{\tilde\vx_k - \vx^*}^2 + \alpha^2 \norm{\nabla g(\tilde\vx_k) - \nabla g(\vx^*)}^2 \right) 
        \le 2(1 + \eta)\alpha \inner{\tilde\vx_k - \vx^*}{\nabla g(\tilde\vx_k) - \nabla g(\vx^*)}.
    \end{align}
    Recalling the definition of $\eta$, we have $\frac{1 - \eta}{2(1 + \eta)} = \frac{\sqrt{L \mu}}{L + \mu}$. 
    Then together with $\alpha = 1/\sqrt{L \mu}$, inequality (\ref{dppa:proof:2}) is just
    \begin{align*}
        \frac{L \mu}{L + \mu} \norm{\tilde\vx_k - \vx^*}^2 + \frac{1}{L + \mu} \norm{\nabla g(\tilde\vx_k) - \nabla g(\vx^*)}^2 \le \inner{\tilde\vx_k - \vx^*}{\nabla g(\tilde\vx_k) - \nabla g(\vx^*)},
    \end{align*}
    which holds according to Lemma \ref{lem:smooth}.
    \end{proof}
    
\begin{remark}
    By the proof of inequality (\ref{dppa:proof}), there also holds
    \begin{align}\label{eq:dppa:3}
        \norm{2 \tilde\vx_k - \vz_k - 2 \vx^* + \vz^* }^2 \le \eta \norm{\vz_{k} - \vz^*}^2.
    \end{align}
\end{remark}
\section{Proof of Theorem \ref{thm:dippa-out}}
\begin{proof}
    Denote $\tilde\vx_{k}^* = \argmin_{\vx} G_k(\vx)$. 
    Note that 
    \begin{align*}
        &\quad \norm{2\tilde\vx_{k} - \vz_k - 2\vx^* + \vz^*}^2 \\
        &\le 4 (1 + \beta) \norm{\tilde\vx_k - \tilde\vx_k^*}^2 + (1 + 1/\beta) \norm{2 \tilde\vx_k^* - \vz_k - 2 \vx^* + \vz^*}^2 \\
        &\le 4 (1 + \beta) \norm{\tilde\vx_k - \tilde\vx_k^*}^2 + (1 + 1/\beta) \left(\frac{\sqrt{\kappa} - 1}{\sqrt{\kappa} + 1}\right)^2 \norm{\vz_k -\vz^*}^2 \\
        &\le 2 (\sqrt{\kappa} + 1) \norm{\tilde\vx_k - \tilde\vx_k^*}^2 + \frac{\sqrt{\kappa} - 1}{\sqrt{\kappa} + 1} \norm{\vz_k -\vz^*}^2,
    \end{align*}
    where $\beta = \frac{\sqrt{\kappa} - 1}{2}$ and the second inequality is according to Equation (\ref{eq:dppa:3}).
    
    Observe that $G_k$ is $(\mu + 1/\alpha)$-strongly convex, hence we have
    \begin{small}
    \begin{align*}
        \left( \frac{\mu + \sqrt{L\mu}}{2} \right) \norm{\tilde\vx_{k} - \tilde\vx_{k}^*}^2 \le G_k(\tilde\vx_{k}) - G_k(\tilde\vx_{k}^*) \le \eps_k.
    \end{align*}
    \end{small}
    Therefore, there holds
    \begin{equation}\label{dippa:proof:out}
    \begin{aligned}
        \norm{2\tilde\vx_{k} - \vz_k - 2\vx^* + \vz^*}^2 
        \le \frac{4}{\mu} \eps_k + \frac{\sqrt{\kappa} - 1}{\sqrt{\kappa} + 1} \norm{\vz_k -\vz^*}^2. 
    \end{aligned}
    \end{equation}

    On the other hand, let $(\vx_{k}^*, \vy_{k}^*)$ be the saddle point of $\tilde{f}_k$, which satisfies
    \begin{align}\label{eq:dippa:prox2}
        \begin{cases}
            \vx_{k}^* + \alpha \mA \vy_{k}^* = 2\tilde\vx_{k} - \vz_k, \\
            \vy_{k}^* - \alpha \mA^{\top} \vx_{k}^* = 2\tilde\vy_{k} - \vw_k.
        \end{cases}
    \end{align}
    Then we have 
    \begin{align*}
        &\quad \norm{\vx_{k} - \vx^* + \alpha \mA (\vy_{k} - \vy^*)}^2 \\
        &\le (1 + \sqrt{\kappa}) \norm{\vx_{k} - \vx_{k}^* + \alpha \mA (\vy_{k} - \vy_{k}^*)}^2 
        + (1 + 1/\sqrt{\kappa}) \norm{\vx^*_{k} - \vx^* + \alpha \mA (\vy^*_{k} - \vy^*)}^2 \\
        &\le (1 + \sqrt{\kappa}) \left( 2 \norm{\vx_{k} - \vx_{k}^*}^2 + \frac{2 \norm{\mA}^2}{L \mu} \norm{\vy_{k} - \vy_{k}^*}^2 \right) 
        + \frac{8}{\mu} \eps_k + \frac{\sqrt{\kappa} - 1}{\sqrt{\kappa}} \norm{\vz_k - \vz^*}^2,
    \end{align*}
    where the second inequality is according to Equation (\ref{dippa:proof:out}) and (\ref{eq:dippa:prox2}).

    Similarly, we also have
    \begin{align*}
        &\quad \norm{\vy_{k} - \vy^* - \alpha \mA^{\top} (\vx_{k} - \vx^*)}^2 \\
        &\le (1 + \sqrt{\kappa}) \left( 2 \norm{\vy_{k} - \vy_{k}^*}^2 + \frac{2 \norm{\mA}^2}{L \mu} \norm{\vx_{k} - \vx_{k}^*}^2 \right) 
        + \frac{8}{\mu} \eps_k + \frac{\sqrt{\kappa} - 1}{\sqrt{\kappa}} \norm{\vw_k - \vw^*}^2
    \end{align*}

    Therefore, we can conclude that 
    \begin{align*}
        &\quad \norm{\vz_{k+1} - \vz^*}^2 + \norm{\vw_{k+1} - \vw^*}^2 \\
        &= \norm{\vx_k - \vx^* - \alpha \mA (\vy_k - \vy^*)}^2 + \norm{\vy_k - \vy^* + \alpha \mA^{\top} (\vx_k - \vx^*)}^2 \\
        &= \norm{\vx_k - \vx^* + \alpha \mA (\vy_k - \vy^*)}^2 + \norm{\vy_k - \vy^* - \alpha \mA^{\top} (\vx_k - \vx^*)}^2 \\
        &\le 2 (1 + \sqrt{\kappa}) \left(1 + \frac{\norm{\mA}^2}{L \mu}\right) \delta_k + \frac{16}{\mu} \eps_k + \frac{\sqrt{\kappa} - 1}{\sqrt{\kappa}} \left(\norm{\vz_k - \vz^*}^2 + \norm{\vw_k - \vw^*}^2\right) \\
        &\le 2 C_0 (1 - \rho)^{k+1} + (1 - 2\rho) \left(\norm{\vz_{k} - \vz^*}^2 + \norm{\vw_{k} - \vw^*}^2\right),
    \end{align*}
    where we have recalled the definition of $\eps_k$ and $\delta_k$.
    
    Let $a_k = \norm{\vz_{k} - \vz^*}^2 + \norm{\vw_{k} - \vw^*}^2$. Then we have
    \begin{align*}
        \frac{a_{k+1}}{(1 - 2 \rho)^{k+1}} - \frac{a_{k}}{(1 - 2 \rho)^{k}} &\le 2 C_0 \left( \frac{1 - \rho}{1 - 2 \rho} \right)^{k+1}, ~~i.e., \\
        \frac{a_{k+1}}{(1 - 2 \rho)^{k+1}} - \frac{a_1}{1 - 2\rho} &\le 2 C_0 \sum_{i = 2}^{k+1} \left( \frac{1 - \rho}{1 - 2 \rho} \right)^{i} = 2 C_0 \left( \frac{1 - \rho}{1 - 2 \rho} \right)^2 \frac{\left( \frac{1 - \rho}{1 - 2 \rho} \right)^{k} - 1}{ \left( \frac{1 - \rho}{1 - 2 \rho} \right) - 1} \\
        & \le \frac{2 C_0 (1 - \rho)^2}{\rho(1 - 2 \rho)} \left( \frac{1 - \rho}{1 - 2 \rho} \right)^{k}.
    \end{align*}
    
    Consequently, we have
    \begin{align*}
        &\quad \norm{\vx_{K} - \vx^*}^2 + \norm{\vy_{K} - \vy^*}^2 
        \le \norm{\vz_{K+1} - \vz^*}^2 + \norm{\vw_{K+1} - \vw^*}^2 \\
        &\le \frac{2 C_0}{\rho} (1 - \rho)^{K + 2} + a_1 (1 - 2 \rho)^{K} 
        \le (4 C_0 \sqrt{\kappa} + a_1) (1 - \rho)^{K+1},
    \end{align*}
    where the last inequality is according to $(1 - 2 \rho)^{k} \le (1 - \rho)^{k+1}$ for $k \ge 1$ and $\rho = \frac{1}{2 \sqrt{\kappa}}$
    Then, together with 
    \begin{align*}
        a_1 \le \frac{\norm{\mA}^2 + L \mu}{L \mu} \left( \norm{\vx_{0} - \vx^*}^2 + \norm{\vy_{0} - \vy^*}^2 \right) 
        = \frac{C_0 (\norm{\mA}^2 + L \mu) }{L \mu}
    \end{align*}
    we obtain the desired result.
\end{proof}

\section{Proof of Lemma \ref{lem:dippa:inner}}
\begin{proof}
    Denote $\tilde\vx_{k}^* = \argmin_{\vx} G_k(\vx) = \argmin_{\vx} g(\vx) + \frac{1}{2\alpha} \norm{\vx - \vz_k}^2$. \\
    Observe that $\vx^* = \argmin_{\vx} g(\vx) + \frac{1}{2\alpha} \norm{\vx - \vz^*}^2$. 
    Then by Lemma \ref{lem:prox:Lipschitz}, we have
    \begin{align*}
        \norm{\tilde\vx_{k}^* - \vx^*}^2 \le \norm{\vz_k - \vz^*}^2.
    \end{align*}
    Hence, we have
    \begin{align*}
        \norm{\vx_{k-1} - \tilde\vx_{k}^*}^2 
        \le 2 \norm{\vx_{k-1} - \vx^*}^2 + 2 \norm{\tilde\vx_{k}^* - \vx^*}^2 
        \le 4 \left(\norm{\vz_{k} - \vz^*}^2 + \norm{\vw_k - \vw^*}^2\right).
    \end{align*}
    Note that the condition number of function $G_k$ is 
    \begin{align*}
        \frac{L + 1/\alpha}{\mu + 1/\alpha} = \sqrt{\kappa}.
    \end{align*}
    
    % Set 
    % \begin{align*}
    %     K_1 = 
    % \end{align*}
    Suppose the sequence $\{\tilde\vx_{k, t}\}_{t = 0}^{K_1}$ is obtained by AGD for optimizing $G_k$ where $\tilde\vx_{k, 0} = \vx_{k -1}, \tilde\vx_{k, K_1} = \tilde\vx_k$. Then following from Theorem \ref{thm:agd}, there holds
    \begin{align*}
        G_k(\tilde\vx_k) - G_k(\tilde\vx^*) &\le \frac{L + \mu + 2/\alpha}{2} \norm{\vx_{k-1} - \tilde\vx_k^*}^2 \exp\left(- \frac{K_1}{\sqrt[4]{\kappa}}\right) \\
        &\le 2 C_0 (L + \mu + 2\sqrt{L \mu}) C  (1 - \rho)^k \frac{\mu(1 - \rho)}{32 C (\sqrt{L} + \sqrt{\mu})^2 } \\
        &\le \frac{C_0 \mu}{16} (1 - \rho)^{k+1} = \eps_k.
    \end{align*}
    
    Similarly, we also need to run AGD $K_1$ steps for optimizing $H_k$ with initial point $\vy_{k-1}$.
    
    Now, we turn to consider $\tilde{f}_k$. Let $(\vx_{k}^*, \vy_{k}^*)$ be the saddle point of $\tilde{f}_k$. Then we have
    \begin{align*}
        \norm{\vx^*_{k} - \vx^* + \alpha \mA (\vy^*_{k} - \vy^*)}^2 &\le \frac{8}{\mu} \eps_k + \frac{\sqrt{\kappa} - 1}{\sqrt{\kappa} + 1} \norm{\vz_k - \vz^*}^2, \\
        \norm{\vy^*_{k} - \vy^* - \alpha \mA^{\top} (\vx^*_{k} - \vx^*)}^2 &\le \frac{8}{\mu} \eps_k + \frac{\sqrt{\kappa} - 1}{\sqrt{\kappa} + 1} \norm{\vw_k - \vw^*}^2.
    \end{align*}
    
    Therefore, we have
    \begin{align*}
        &\quad \norm{\vx_{k-1} - \vx_{k}^*}^2 + \norm{\vy_{k-1} - \vy_k^*} \\
        &\le 2 \left(\norm{\vx_{k-1} - \vx^*}^2 + \norm{\vy_{k-1} - \vy^*}^2\right) + 2 \left(\norm{\vx_{k}^* - \vx^*}^2 + \norm{\vy_k^* - \vy^*}^2\right) \\
        &\le 4 \left(\norm{\vz_k - \vz^*}^2 + \norm{\vw_k - \vw^*}^2\right) + \frac{16}{\mu} \eps_k \\
        &\le C_0 (4 C (1 - \rho)^k + (1 - \rho)^{k+1}).
    \end{align*}
    
    Suppose the sequence $\{(\vx_{k, t}, \vy_{k, t})\}_{t = 0}^{K_2}$ is obtained by APFB for solving the subproblem (\ref{prob:sub2}) where $(\vx_{k, 0}, \vy_{k, 0}) = (\vx_{k -1}, \vy_{k-1}),  (\vx_{k, K_2}, \vy_{k, K_2}) = (\vx_k, \vy_k)$.
    
    Then by Theorem \ref{thm:apfb}, we have
    \begin{align*}
        \norm{\vx_k - \vx_k^*}^2 + \norm{\vy_k - \vy_k^*}^2 &\le \left( \frac{\norm{\mA}}{\norm{\mA} + \sqrt{L \mu}} \right)^{K_2 - 1} \left(\norm{\vx_{k-1} - \vx_k^*}^2 + \norm{\vy_{k-1} - \vy_k^*}^2\right) \\
        &\le C_0 \left(4 C (1 - \rho)^k + (1 - \rho)^{k+1}\right) \exp\left( -\frac{K_2 - 1}{\frac{\norm{\mA}}{\sqrt{L \mu}} + 1 }  \right) \\
        &\le 5 C_0 C (1 - \rho)^k \frac{L\mu (1 - \rho)}{20 C (1 + \sqrt{\kappa}) (L \mu + \norm{\mA}^2) } \le \delta_k.
    \end{align*}
\end{proof}

\section{Accelerated Inexact Proximal Forward Backward Algorithm}
\label{sec:aipfb}
In this section, we provide an ineaxct version of APFB, called Accelerated Inexact Proximal Forward Backward, in Algorithm \ref{algo:aipfb} for completeness. Similar to DIPPA, we employ AGD to solve subproblems. And a theoretical guarantee is given in following theorem.

\begin{thm}\label{thm:aipfb}
    Assume that $g(\vx)$ is $L_x$-smooth and $\mu_x$-strongly convex and $h(\vy)$ is $L_y$-smooth and $\mu_y$-strongly convex.
    The total queries to Oracle (\ref{eq:oracle}) needed by Algorithm \ref{algo:aipfb} to produce $\eps$-saddle point of $f(\vx, \vy) = g(\vx) + \inner{\vx}{\mA \vy} - h(\vy)$ is at most
    \begin{align*}
        \tilde\gO\left(\left( \frac{\norm{\mA}}{\sqrt{\mu_x \mu_y}} + \frac{\sqrt{\norm{\mA} (\kappa_x + \kappa_y)}}{\sqrt[4]{\mu_x \mu_y}} + \sqrt{\kappa_x + \kappa_y}\right)\log\left( \frac{\mu_x \norm{\vx_{0} - \vx^*}^2 + \mu_y \norm{\vy_{0} - \vy^*}^2}{\eps} \right)\right),
    \end{align*}
    where $\kappa_x = L_x / \mu_x$, $\kappa_y = L_y / \mu_y$ and the notation $\tilde\gO$ have omitted some logarithmic factors depending on $L_x$, $L_y$, $\norm{\mA}$, $\mu_x$ and $\mu_y$. 
\end{thm}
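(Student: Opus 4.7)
The argument proceeds in three stages: an outer Catalyst reduction, an inner application of the balanced DIPPA, and complexity bookkeeping.

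Stage 1 (outer Catalyst). Following \citet{lin2018catalyst, yang2020catalyst}, I would apply Catalyst acceleration to the $\vx$-block only, giving the iterates of Algorithm \ref{algo:catalyst-dippa}. The auxiliary $f_k$ is $(\mu_x+\beta)$-strongly convex in $\vx$ and $\mu_y$-strongly concave in $\vy$. With $q = \mu_x/(\mu_x+\beta)$, the momentum step $\tilde\vx_{k} = \vx_k + \theta(\vx_k - \vx_{k-1})$ combined with a geometrically decaying tolerance sequence $\{\eps_k\}$ produces, after $K = \tilde\gO(\sqrt{(\mu_x+\beta)/\mu_x})$ outer iterations, an $\eps$-saddle point of the original $f$. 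Using the identity $\mu_x + \beta = \mu_y(L_x - \mu_x)/(L_x - \mu_y)$ together with the mild condition $L_x \ge 2\mu_y$ (otherwise the condition numbers are already $O(1)$), one obtains $(\mu_x + \beta)/\mu_x = O(\mu_y/\mu_x)$, so the outer count is $\tilde\gO(\sqrt{\mu_y/\mu_x})$. Without loss of generality I assume $\mu_x \le \mu_y$; the symmetric case is handled by swapping the roles of $\vx$ and $\vy$.

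Stage 2 (balanced, rescaled inner problem). The algebraic identity displayed right before the theorem shows the $\vx$-condition number of $f_k$ equals $\kappa_y$, matching the $\vy$-block. To line up the smoothness and strong-convexity constants so that Theorem \ref{thm:dippa:0} applies verbatim, I rescale $\hat f_k(\vx,\vy) = f_k(c\vx,\vy)$ with $c = \sqrt{L_x/(L_x+\beta)} \le 1$. Computing the Hessian blocks confirms $\hat f_k$ is $L_y$-smooth and $\mu_y$-strongly convex in each block, with coupling matrix $c\mA$. Since $c$ is a fixed positive scalar, an $\eps'$-saddle point of $\hat f_k$ corresponds to an $O(\eps')$-saddle point of $f_k$, so Theorem \ref{thm:dippa:0} bounds the work per inner solve by
\begin{align*}
\tilde\gO\!\left(\frac{c\,\norm{\mA}}{\mu_y} + \kappa_y^{3/4}\right).
\end{align*}

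Stage 3 (multiply and simplify). Composing the outer and inner counts, using $c \le 1$, gives a total of
\begin{align*}
\tilde\gO\!\left(\frac{\norm{\mA}}{\sqrt{\mu_x\mu_y}} + \sqrt{\frac{\mu_y}{\mu_x}}\,\kappa_y^{3/4}\right) \log(1/\eps).
\end{align*}
Under $\mu_x \le \mu_y$ one has $\kappa_x \ge \kappa_y$, hence $\kappa_x \le \kappa_x+\kappa_y \le 2\kappa_x$, so
$\sqrt[4]{\kappa_x\kappa_y(\kappa_x+\kappa_y)} = \Theta(\sqrt{\kappa_x}\,\kappa_y^{1/4}) = \Theta(\sqrt{\mu_y/\mu_x}\,\kappa_y^{3/4})$,
matching the second term in the theorem. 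Combined with the coupling term this yields the stated bound.

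Main obstacle. The delicate technical piece is threading the inner-solve tolerance $\eps_k$ through the Catalyst analysis so that (i) the geometric outer rate is preserved and (ii) each inner DIPPA call needs only $\tilde\gO(1)$ times the cost in Theorem \ref{thm:dippa:0}, rather than picking up a harmful $\log(1/\eps_k)$ that accumulates across outer iterations. The usual Catalyst warm-start argument supplies a bound of the form $\norm{(\vx_{k-1},\vy_{k-1}) - (\vx_k^*,\vy_k^*)}^2 \lesssim (1-\sqrt{q})^{k-1}$ for the distance to the auxiliary saddle, which controls the initial error for DIPPA; however, the original Catalyst framework is stated for pure minimization, and the main verification I would need to do carefully is that the same warm-start and tolerance schedule work for the saddle-point variant used here (only the $\vx$-block is regularized), so that the two-sided distance to $(\vx_k^*, \vy_k^*)$ is controlled. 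Once that is in place, the complexity calculation above is routine.
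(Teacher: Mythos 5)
Your proposal does not address the theorem you were asked to prove. Theorem~\ref{thm:aipfb} concerns Algorithm~\ref{algo:aipfb} (AIPFB), which is the inexact version of APFB: at each outer iteration one takes a single APFB step, solving the two proximal subproblems $g_k$ and $h_k$ approximately with AGD. The paper's proof has nothing to do with Catalyst or DIPPA; it combines Lemma~\ref{thm:aipfb-out} (showing the outer AIPFB loop contracts at rate $\rho = \Theta\bigl(1/(1+\tilde\kappa)\bigr)$ with $\tilde\kappa = \norm{\mA}/\sqrt{\mu_x\mu_y}$, provided $\eps_k$ decays geometrically) with Lemma~\ref{lem:aipfb:inner} (showing each inner AGD solve costs $\tilde\gO\bigl(\sqrt{(\kappa_x+\tilde\kappa)/(1+\tilde\kappa)}\bigr)$ resp.\ $\tilde\gO\bigl(\sqrt{(\kappa_y+\tilde\kappa)/(1+\tilde\kappa)}\bigr)$ iterations, because the subproblems $g_k, h_k$ have condition number $(\kappa_x + \tilde\kappa)/(1 + \tilde\kappa)$ and $(\kappa_y+\tilde\kappa)/(1+\tilde\kappa)$), yielding the product $(1+\tilde\kappa)\cdot\sqrt{(\kappa_x+\kappa_y+\tilde\kappa)/(1+\tilde\kappa)} = \sqrt{\tilde\kappa^2 + \kappa_x+\kappa_y + \tilde\kappa(\kappa_x+\kappa_y)}$, which expands to the three-term bound in the theorem.

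What you instead sketch is the proof of Theorem~\ref{thm:dippa} for Algorithm~\ref{algo:catalyst-dippa} (Catalyst wrapped around balanced DIPPA). This is evident both from your references to Algorithm~\ref{algo:catalyst-dippa} and Theorem~\ref{thm:dippa:0} and from your final expression $\tilde\gO\bigl(\norm{\mA}/\sqrt{\mu_x\mu_y} + \sqrt[4]{\kappa_x\kappa_y(\kappa_x+\kappa_y)}\bigr)$, which is Theorem~\ref{thm:dippa}'s complexity, not Theorem~\ref{thm:aipfb}'s. The two bounds are genuinely different: Theorem~\ref{thm:aipfb} contains the middle term $\sqrt{\norm{\mA}(\kappa_x+\kappa_y)}/\sqrt[4]{\mu_x\mu_y}$ and the additive $\sqrt{\kappa_x+\kappa_y}$, and there is no algebraic identity that converts one into the other. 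So the argument, even if fully fleshed out, would establish a different result for a different algorithm, not the statement at hand.
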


\begin{algorithm}[t]
    \caption{AIPFB}\label{algo:aipfb}
    \begin{algorithmic}[1]
        \STATE \textbf{Input:} function $g, h$, coupling matrix $\mA$, initial point $\vx_0, \vy_0$, strongly convex module $\mu_x, \mu_y$, run-time $T$, tolerance sequence $\{ \eps_k \}_{k \ge 1}$. \\[0.15cm]
        \STATE \textbf{Initialize:} $\tilde\vx_0 = \vx_0$, $\gamma = \frac{1}{\norm{\mA}}\sqrt{\frac{\mu_y}{\mu_x}}$, $\sigma = \frac{1}{\norm{\mA}}\sqrt{\frac{\mu_x}{\mu_y}}$ and $\theta = \frac{\norm{\mA}}{\sqrt{\mu_x \mu_y} + \norm{\mA}}$. \\[0.15cm]
        \STATE \textbf{for} $k = 1, \cdots, T$ \textbf{do}\\[0.15cm]
        \STATE\quad Let $h_k(\vy) = h(\vy) + \frac{1}{2 \sigma} \norm{\vy - \vy_{k-1} - \sigma \mA^{\top} \tilde\vx_{k-1}}^2 $. \\[0.15cm]
        \STATE\quad Find $\vy_k$ such that $h_k(\vy_{k}) - \min_{\vy} h_k(\vy) \le \eps_k.$ \\[0.15cm]
        \STATE\quad Let $g_k(\vx) = g(\vx) + \frac{1}{2\gamma} \norm{\vx - \vx_{k-1} + \gamma \mA \vy_k}^2 $. \\[0.15cm]
        \STATE\quad Find $\vx_k$ such that $g_k(\vx_{k}) - \min_{\vx} g_k(\vx) \le \eps_k.$ \\[0.15cm]
        \STATE\quad $\tilde\vx_k = \vx_k + \theta (\vx_k - \vx_{k-1})$. \\[0.15cm]
        \STATE\textbf{end for} \\[0.1cm]
        \STATE \textbf{Output:} $\vx_T, \vy_T$.
    \end{algorithmic}
\end{algorithm}

We first present the convergence rate of the outer loop of Algorithm \ref{algo:aipfb}.

\begin{lemma}\label{thm:aipfb-out}
    Assume that $g(\vx)$ is $L_x$-smooth and $\mu_x$-strongly convex and $h(\vy)$ is $L_y$-smooth and $\mu_y$-strongly convex.
    Set
    \begin{align*}
        \theta = \frac{\norm{\mA} }{ \sqrt{\mu_x \mu_y} + \norm{\mA} }, ~~~
        & \rho = \frac{ \sqrt{\mu_x \mu_y} }{ 2 \sqrt{\mu_x \mu_y} + 4 \norm{\mA} }, \\
        C_0 = \mu_x \norm{\vx_{0} - \vx^*}^2 + \mu_y \norm{\vy_{0} - \vy^*}^2, ~~~
        & \eps_k = \frac{C_0 \rho (1 - \theta)}{16}  (1 - \rho)^{k-1}.
    \end{align*}
    For $T \ge 2$, the output of Algorithm \ref{algo:aipfb} satisfies
    \begin{align*}
        \mu_x \norm{\vx_T - \vx^*}^2 + \mu_y \norm{\vy_T - \vy^*} 
        \le C (1 - \rho)^{T} \left( \mu_x \norm{\vx_{0} - \vx^*}^2 + \mu_y \norm{\vy_{0} - \vy^*}^2 \right) ,
    \end{align*}
    where
    \begin{align*}
    %\label{eq:aipfb:ct}
    C = \frac{\norm{\mA}}{\sqrt{\mu_x \mu_y}}  + 1,
    \end{align*}
    and $(\vx^*, \vy^*)$ is the saddle point of the function $f(\vx, \vy) = g(\vx) + \vx^{\top} \mA \vy - h(\vy)$.
\end{lemma}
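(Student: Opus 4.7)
The plan is to emulate the proof of Theorem \ref{thm:apfb} given in Appendix \ref{app:apfb}, but carefully track the slack introduced at each step by the inexact subproblem solves. The core observation is that since $H_k(\vy) \triangleq h(\vy) + \frac{1}{2\sigma}\norm{\vy - \vy_{k-1} - \sigma \mA^{\top}\tilde\vx_{k-1}}^2$ is $(\mu_y + 1/\sigma)$-strongly convex, an $\eps_k$-optimal point $\vy_k$ satisfies $\norm{\vy_k - \vy_k^*}^2 \le 2\eps_k/(\mu_y + 1/\sigma) \le 2\sigma \eps_k$, where $\vy_k^* \triangleq \argmin_{\vy} H_k(\vy)$ is the exact proximal step; an analogous bound holds for $\vx_k$ versus $\vx_k^* \triangleq \argmin_{\vx} g_k(\vx)$.

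The first step is to derive inexact analogues of (\ref{apfb:proof:1}) and (\ref{apfb:proof:2}). Using strong convexity of $H_k$ centered at $\vy_k^*$ together with $H_k(\vy_k^*) \ge H_k(\vy_k) - \eps_k$, we obtain
\begin{align*}
H_k(\vy^*) \ge H_k(\vy_k) + \frac{\mu_y + 1/\sigma}{2}\norm{\vy_k^* - \vy^*}^2 - \eps_k,
\end{align*}
and a perturbed version of (\ref{apfb:proof:1}) then follows by expanding $H_k$; we re-express the residual in terms of $\norm{\vy_k - \vy^*}^2$ at the cost of a $\norm{\vy_k - \vy_k^*}^2 \le 2\sigma\eps_k$ slack via Young's inequality. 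The symmetric argument gives a perturbed (\ref{apfb:proof:2}) with a $2\gamma\eps_k$ slack.

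Next I would sum the two perturbed inequalities as in (\ref{apfb:proof:3}), form the bilinear remainder via the identity (\ref{apfb:proof:4}), and proceed through the re-writing (\ref{apfb:proof:6})--(\ref{apfb:proof:8}). The inexactness also enters the inner product $\inner{\vx_k - \tilde\vx_{k-1}}{\mA(\vy_k - \vy^*)}$; splitting $\vy_k = \vy_k^* + (\vy_k - \vy_k^*)$ and likewise for $\vx_k$, one absorbs the cross-terms into the quadratic potentials using Young's inequality, calibrating scales so that the resulting coefficients match those of the exact proof. After bookkeeping, the telescoping structure of the exact argument survives with an additive error on the right: there exists a potential $\Phi_k$ satisfying $c_1(\mu_x\norm{\vx_k - \vx^*}^2 + \mu_y\norm{\vy_k - \vy^*}^2) \le \Phi_k \le c_2(\mu_x\norm{\vx_k - \vx^*}^2 + \mu_y\norm{\vy_k - \vy^*}^2)$ plus momentum terms, and obeying a recurrence
\begin{align*}
\Phi_k \le \theta\,\Phi_{k-1} + C'\eps_k.
\end{align*}

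The last step is routine: unrolling yields $\Phi_T \le \theta^T \Phi_0 + C' \sum_{k=1}^T \theta^{T-k}\eps_k$. The prescribed choice $\eps_k = \frac{C_0 \rho(1-\theta)}{16}(1-\rho)^{k-1}$ with $\rho = \frac{\sqrt{\mu_x\mu_y}}{2\sqrt{\mu_x\mu_y} + 4\norm{\mA}}$ ensures $1 - \rho > \theta$ and in fact $\rho \le (1-\theta)/2$, so the geometric series $\sum_k \theta^{T-k}(1-\rho)^{k-1}$ is dominated by a constant multiple of $(1-\rho)^{T-1}/(1-\theta)$, which together with the prefactor $\rho(1-\theta)$ in $\eps_k$ gives an $O(C_0(1-\rho)^T)$ bound. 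Combining with the leading $\theta^T\Phi_0 \le (1-\rho)^T \Phi_0$ term and the potential-equivalence factor $C = \norm{\mA}/\sqrt{\mu_x\mu_y} + 1$ delivers the stated contraction. The main obstacle will be keeping the error bookkeeping clean through (\ref{apfb:proof:3})--(\ref{apfb:proof:8}): inexactness on both $\vx$ and $\vy$ appears in several terms that cancelled in the exact argument, and a careful, coupled choice of Young's-inequality parameters is required to ensure that the aggregate slack is $O(\eps_k)$ rather than $O(\eps_k/(1-\theta))$, which would otherwise spoil the final rate.
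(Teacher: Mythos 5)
Your high-level plan is the same as the paper's: introduce the exact proximal points $\vx_k^*, \vy_k^*$, control $\norm{\vy_k - \vy_k^*}^2$ via strong convexity of $h_k$, derive perturbed versions of (\ref{apfb:proof:1})--(\ref{apfb:proof:2}), and telescope with a geometric tolerance schedule. However, one central claim is off: you assert the potential obeys $\Phi_k \le \theta\,\Phi_{k-1} + C'\eps_k$. In fact the inexactness forces a strictly weaker per-step contraction. The Young's-inequality step that absorbs the cross-term $(\mu_y + 1/\sigma)\inner{\vy_k - \vy^*}{\vy_k - \vy_k^*}$ into the quadratic potential must cannibalize part of the strong-convexity coefficient; the paper sacrifices exactly half, turning $\frac{\mu_y}{2}$ into $\frac{\mu_y}{4}$ (and likewise for $\vx$). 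This degrades the contraction factor from $\theta = \frac{1/(2\gamma)}{\mu_x/2 + 1/(2\gamma)}$ to $1 - 2\rho = \frac{1/(2\gamma)}{\mu_x/4 + 1/(2\gamma)}$, and since $1 - 2\rho > \theta$ the recurrence you wrote cannot be derived with only an $O(\eps_k)$ residual. Trying to ``calibrate scales so that the resulting coefficients match those of the exact proof'' is exactly the move that fails: shaving less than half of $\mu_y/2$ makes the $\norm{\vy_k - \vy_k^*}^2$ coefficient blow up, and in the limit the slack is unbounded. You should instead accept the degraded factor $1-2\rho$; the choice $\rho = \frac{\sqrt{\mu_x\mu_y}}{2\sqrt{\mu_x\mu_y}+4\norm{\mA}}$ is engineered precisely so that $1-2\rho$ is the resulting contraction and $1-\rho$ sits halfway between $1-2\rho$ and $1$, which is what makes the geometric sum $\sum_k (1-2\rho)^{T-k}(1-\rho)^{k-1}$ collapse to $O((1-\rho)^T/\rho)$.

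Two smaller points. First, your fear that an $O(\eps_k/(1-\theta))$ slack ``would spoil the final rate'' is misplaced: the paper's slack is exactly of that order (it equals $\frac{4}{1-\theta}\eps_k$), and this is harmless because the lemma's tolerance $\eps_k$ carries a compensating $(1-\theta)$ factor by design, so the per-step residual simplifies to $\frac{C_0\rho}{4}(1-\rho)^{k-1}$. Second, you do not need to split the bilinear cross-term $\inner{\vx_k - \tilde\vx_{k-1}}{\mA(\vy_k - \vy^*)}$ into exact and inexact parts: the paper reuses (\ref{apfb:proof:6})--(\ref{apfb:proof:8}) verbatim, since all the inexactness has already been pushed into the perturbed versions of (\ref{apfb:proof:1})--(\ref{apfb:proof:2}). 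That extra splitting would only add bookkeeping without benefit.
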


\begin{proof}
    Denote $\vy_k^* = \argmin_{\vy} h_k (\vy)$,
    $\vx_k^* = \argmin_{\vx} g_k (\vx)$, 
    $\vw_k = \vy_{k-1} + \sigma \mA^{\top} \tilde{\vx}_{k-1}$ and
    $\vz_k = \vx_{k-1} - \gamma \mA \vy_k$.
    Since $h_k(\vy)$ is $(\mu_y + 1 / \sigma)$-strongly convex and $h_k(\vy_k) - h_k(\vy_k^*) \le \eps_k$,
    we know that
    \begin{align*}
        h_k(\vy^*) 
        \ge h_k(\vy_k^*) + \left( \frac{\mu_y}{2} + \frac{1}{2 \sigma} \right) \norm{\vy^* - \vy_k^*}^2
        \ge h_k(\vy_k) + \left( \frac{\mu_y}{2} + \frac{1}{2 \sigma} \right) \norm{\vy^* - \vy_k^*}^2 - \eps_k.
    \end{align*}
    Equivalently, we have
    \begin{align*}
        h(\vy^*) 
        & \ge h(\vy_k) + \frac{1}{2 \sigma} \norm{\vy_k - \vw_k}^2 - \frac{1}{2 \sigma} \norm{\vy^* - \vw_k} +  \left( \frac{\mu_y}{2} + \frac{1}{2 \sigma} \right) \norm{\vy^* - \vy_k^*}^2 - \eps_k \\
        & = h(\vy_k) + \frac{1}{\sigma} \inner{\vy_k - \vw_k}{\vy_k - \vy^*} - \frac{1}{2 \sigma} \norm{\vy_k - \vy^*}^2 + \left( \frac{\mu_y}{2} + \frac{1}{2 \sigma} \right) \norm{\vy^* - \vy_k^*}^2 - \eps_k.
    \end{align*}
    On the other hand, note that
    \begin{align*}
        & \quad \left( \frac{\mu_y}{2} + \frac{1}{2 \sigma} \right) \norm{\vy^* - \vy_k^*}^2 - \frac{1}{2 \sigma} \norm{\vy_k - \vy^*}^2 \\
        & = \frac{\mu_y}{2} \norm{\vy_k - \vy^*}^2 - \left(\mu_y + \frac{1}{\sigma} \right) \inner{\vy_k - \vy^*}{\vy_k - \vy_k^*} + \left( \frac{\mu_y}{2} + \frac{1}{2 \sigma} \right) \norm{\vy_k - \vy_k^*}^2.
    \end{align*}
    Using Young's inequality yields
    \begin{align*}
        \left(\mu_y + \frac{1}{\sigma} \right) \inner{\vy_k - \vy^*}{\vy_k - \vy_k^*}
        \le \frac{\mu_y}{4} \norm{\vy_k - \vy^*}^2 + \left(\mu_y + \frac{1}{\sigma} \right) \left( 1 + \frac{1}{\mu_y \sigma} \right) \norm{\vy_k - \vy_k^*}^2.
    \end{align*}
    It follows that
    \begin{align*}
        h(\vy^*) 
        \ge h(\vy_k) + \frac{1}{\sigma} \inner{\vy_k - \vw_k}{\vy_k - \vy^*}  + \frac{\mu_y}{4} \norm{\vy_k - \vy^*}^2  - \left(\mu_y + \frac{1}{\sigma} \right) \left( \frac{1}{2} + \frac{1}{\mu_y \sigma} \right) \norm{\vy_k - \vy_k^*}^2 - \eps_k.
    \end{align*}
    By $(\mu_y + 1 / \sigma)$-strongly convexity of $h_k$, we have 
    \begin{align*}
        \left( \frac{\mu_y}{2} + \frac{1}{2 \sigma} \right) \norm{\vy_k - \vy_k^*}^2
        \le h_k(\vy_k) - h_k (\vy_k^*) \le \eps_k.
    \end{align*}
    Putting these pieces together yields
    \begin{align}
    \label{aipfb:proof:1}
        h(\vy^*) 
        \ge h(\vy_k) + \frac{1}{\sigma} \inner{\vy_k - \vw_k}{\vy_k - \vy^*}  + \frac{\mu_y}{4} \norm{\vy_k - \vy^*}^2 - \left( 2 + \frac{2}{\mu_y  \sigma}\right) \eps_k.
    \end{align}
    Plugging $\vw_k = \vy_{k-1} + \sigma \mA^{\top} \tilde{\vx}_{k-1}$ and 
    $ % \begin{align*}
        2 \inner{\vy_k - \vy_{k-1}}{\vy_k - \vy^*} = \norm{\vy_k - \vy_{k-1}}^2 + \norm{\vy_k - \vy^*}^2 - \norm{\vy_{k-1} - \vy^*}^2
    $ % \end{align*}
    into Inequality (\ref{aipfb:proof:1}),
    we have
    \begin{equation}
    \label{aipfb:proof:2}
    \begin{aligned}
        & \quad h(\vy^*) + \frac{1}{2 \sigma} \norm{\vy_{k-1} - \vy^*}^2 + \inner{\vy_k - \vy^*}{\mA^{\top} \tilde{\vx}_{k-1}} \\
        & \ge h(\vy_k) + \frac{1}{2 \sigma} \norm{\vy_k - \vy_{k-1}}^2 + \left( \frac{\mu_y}{4} + \frac{1}{2 \sigma} \right) \norm{\vy_k - \vy^*}^2 - \left( 2 + \frac{2}{\mu_y  \sigma}\right) \eps_k.
    \end{aligned}
    \end{equation}
    Similarly, we can obtain
    \begin{equation}
    \label{aipfb:proof:3}
    \begin{aligned}
        & \quad g(\vx^*) + \frac{1}{2 \gamma} \norm{\vx_{k-1} - \vx^*}^2 - \inner{\vx_k - \vx^*}{\mA \vy_{k}} \\
        & \ge g(\vx_k) + \frac{1}{2 \gamma} \norm{\vx_k - \vx_{k-1}}^2 + \left( \frac{\mu_x}{4} + \frac{1}{2 \gamma} \right) \norm{\vx_k - \vx^*}^2 - \left( 2 + \frac{2}{\mu_x  \gamma}\right) \eps_k.
    \end{aligned}
    \end{equation}
    Observe that $\mu_x \gamma = \mu_y \sigma = \frac{\sqrt{\mu_x \mu_y} }{ \norm{\mA} } = 1 / \theta - 1$.
    Adding both sides of Inequalities (\ref{aipfb:proof:2}) and (\ref{aipfb:proof:3}) yields
    \begin{equation}\label{aipfb:proof:4}
    \begin{aligned}
        &\quad \frac{1}{2\gamma} \norm{\vx_{k-1} - \vx^*}^2 + \frac{1}{2\sigma} \norm{\vy_{k-1} - \vy^*}^2 \\
        &\ge \left( \frac{\mu_x}{4} + \frac{1}{2\gamma}\right) \norm{\vx_k - \vx^*}^2 + \left( \frac{\mu_y}{4} + \frac{1}{2\sigma}\right) \norm{\vy_k - \vy^*}^2 
        + \frac{1}{2\gamma} \norm{\vx_k - \vx_{k-1}}^2 + \frac{1}{2\sigma} \norm{\vy_k - \vy_{k-1}}^2 \\
        & \quad + g(\vx_k) + h(\vy_k) - g(\vx^*) - h(\vy^*) + \inner{\vx_k - \vx^*}{\mA \vy_k} - \inner{\vy_k - \vy^*}{\mA^{\top} \tilde\vx_{k-1}} - \frac{4}{1 - \theta}\, \eps_k.
    \end{aligned}
    \end{equation}
    Plugging Equations (\ref{apfb:proof:4}), (\ref{apfb:proof:6}) and (\ref{apfb:proof:7}) into Inequality (\ref{aipfb:proof:4}), we have
        \begin{align*}
        &\quad \frac{1}{2\gamma} \norm{\vx_{k-1} - \vx^*}^2 + \frac{1}{2\sigma} \norm{\vy_{k-1} - \vy^*}^2 + \frac{\theta}{2\gamma}\norm{\vx_{k-1} - \vx_{k-2}}^2 + \theta \inner{\vx_{k-1} - \vx_{k-2}}{\mA(\vy_{k-1} - \vy^*)} \\
        &\ge \left( \frac{\mu_x}{4} + \frac{1}{2\gamma}\right) \norm{\vx_k - \vx^*}^2 + \left( \frac{\mu_y}{4} + \frac{1}{2\sigma}\right) \norm{\vy_k - \vy^*}^2 
        + \frac{1}{2\gamma} \norm{\vx_k - \vx_{k-1}}^2 \\
        &\quad + f(\vx_k, \vy^*) - f(\vx^*, \vy_k) + \inner{\vx_k - \vx_{k-1}}{\mA(\vy_k - \vy^*)}  - \frac{4}{1 - \theta}\, \eps_k.
    \end{align*}
    By Definition \ref{def:saddle}, we have $ f(\vx_k, \vy^*) - f(\vx^*, \vy_k) \ge 0$. Recall that $\eps_k = \frac{C_0 \rho (1 - \theta)}{16}  (1 - \rho)^{k-1}$ where $ \rho = \frac{ \sqrt{\mu_x \mu_y} }{ 2 \sqrt{\mu_x \mu_y} + 4 \norm{\mA} } $.
    Denoting
    \[
    a_k = \left( \frac{\mu_x}{4} + \frac{1}{2\gamma}\right) \norm{\vx_k - \vx^*}^2 + \left( \frac{\mu_y}{4} + \frac{1}{2\sigma}\right) \norm{\vy_k - \vy^*}^2 + \frac{1}{2\gamma} \norm{\vx_k - \vx_{k-1}}^2 + \inner{\vx_k - \vx_{k-1}}{\mA(\vy_k - \vy^*)},
    \]
    we have
    \begin{align*}
        \frac{a_k}{ (1 - 2 \rho)^k } & \le \frac{ a_{k-1}}{ (1 - 2 \rho)^{k-1} } + \frac{C_0 \rho}{4 (1 - \rho) } \left( \frac{1 - \rho}{1 - 2 \rho} \right)^k, ~~i.e., \\
        \frac{a_k}{ (1 - 2 \rho)^k } & \le a_0 + \frac{C_0 \rho}{4 (1 - \rho) } \sum_{i=1}^k \left( \frac{1 - \rho}{1 - 2 \rho} \right)^i = a_0 + \frac{C_0 \rho}{4 (1 - \rho) }  \frac{1 - \rho}{1 - 2 \rho}  \frac{\left( \frac{1 - \rho}{1 - 2 \rho} \right)^{k} - 1}{ \left( \frac{1 - \rho}{1 - 2 \rho} \right) - 1} \\
        & \le a_0 + \frac{ C_0 }{ 4 } \left( \frac{1 - \rho}{1 - 2 \rho} \right)^{k}. 
    \end{align*}
    Moreover, Inequality (\ref{apfb:proof:8}) implies $a_T \ge \frac{\mu_x}{4} \norm{\vx_T - \vx^*}^2 + \frac{\mu_y}{4} \norm{\vy_T - \vy^*}^2 $. 
    Consequently, for $T \ge 2$ we have
    \begin{align*}
        & \quad \mu_x \norm{\vx_T - \vx^*}^2 + \mu_y \norm{\vy_T - \vy^*} \\
        & \le 4 a_0 (1 - 2 \rho)^T + C_0 (1 - \rho)^{T} \\
        & = (1 - 2 \rho)^T \left[ \left( {\mu_x} + \frac{2}{\gamma}\right) \norm{\vx_0 - \vx^*}^2 + \left( {\mu_y} + \frac{2}{\sigma}\right) \norm{\vy_0 - \vy^*}^2 \right] + C_0 (1 - \rho)^{T} \\
        %\tilde{\theta}^T \left[ \left( {\mu_x} + \frac{2}{\gamma}\right) \norm{\vx_k - \vx^*}^2 + \left( {\mu_y} + \frac{2}{\sigma}\right) \norm{\vy_k - \vy^*}^2 + T \right] \\
        & \le (1 - 2 \rho)^{T-1} \frac{\norm{\mA}}{\sqrt{\mu_x \mu_y}} C_0 + C_0 (1 - \rho)^{T} \\
        & \le  C_0 (1 - \rho)^T \left( \frac{\norm{\mA}}{\sqrt{\mu_x \mu_y}}  + 1 \right).
        % & \le \tilde{\theta}^{T-1} \left[ T + \frac{2 \norm{\mA} + 4 \sqrt{\mu_x \mu_y} }{2 \norm{\mA} + \sqrt{\mu_x \mu_y} } \left( \mu_x \norm{\vx_0 - \vx^*}^2 + \mu_y \norm{\vy_0 - \vy^*} \right) \right].
    \end{align*}
    where the last inequality is according to $(1 - 2 \rho)^{T-1} \le (1 - \rho)^{T}$ for $T \ge 2$.
\end{proof}
For the inner loop, we have the following lemma.
\begin{lemma}\label{lem:aipfb:inner}
    Consider the same assumption and the same definitions of $\eps_k$, $\theta$, $\rho$, $C$ and $C_0$ in Theorem \ref{thm:aipfb-out}. 
    Denote $\kappa_x = L_x / \mu_x$, $\kappa_y = L_y / \mu_y$ and $\tilde{\kappa} = \norm{\mA} / \sqrt{\mu_x \mu_y}$.
    In order to find $\eps_k$-optimal points $\vx_k$ of $g_k$, we need to run $\mathrm{AGD}$ $K_1$ steps, where
    \begin{align*}
        K_1 = \floor{\sqrt{ \frac{\kappa_y + \tilde{\kappa} }{1 + \tilde{\kappa} }}
        \log\left( \frac{ 320 C (1 - \rho) (\kappa_y + 2 \tilde{\kappa} + 1) }{ \rho (1 - \theta)  } \right)} + 1.
    \end{align*}
    And in order to obtain $\eps_k$-optimal point $\vy_k$ of $h_k$, we need to run $\mathrm{AGD}$ $K_2$ steps, where 
    \begin{align*}
        K_2 = \floor{\sqrt{ \frac{\kappa_x + \tilde{\kappa} }{1 + \tilde{\kappa} }}
        \log\left( \frac{80 C (\kappa_x  + 2 \tilde{\kappa} + 1) }{ \rho (1 - \theta) } \right)} + 1 .
    \end{align*}
\end{lemma}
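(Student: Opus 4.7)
The plan is to apply Theorem~\ref{thm:agd} separately to each of the two smooth strongly convex subproblems $g_k$ and $h_k$, and to control the warm-start error using Lemma~\ref{lem:prox:Lipschitz} together with the outer-loop convergence provided by Theorem~\ref{thm:aipfb-out}.

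First I would identify the parameters of the two subproblems. Since $g_k$ adds to $g$ the quadratic $\frac{1}{2\gamma}\|\vx-\vx_{k-1}+\gamma\mA\vy_k\|^2$, it is $(L_x+1/\gamma)$-smooth and $(\mu_x+1/\gamma)$-strongly convex; likewise $h_k$ is $(L_y+1/\sigma)$-smooth and $(\mu_y+1/\sigma)$-strongly convex. Using $1/(\mu_x\gamma)=1/(\mu_y\sigma)=\tilde\kappa$, which is immediate from the definitions of $\gamma$ and $\sigma$, the two condition numbers simplify to $(\kappa_x+\tilde\kappa)/(1+\tilde\kappa)$ and $(\kappa_y+\tilde\kappa)/(1+\tilde\kappa)$ respectively. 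Theorem~\ref{thm:agd} applied to $h_k$ started at $\vy_{k-1}$ for $T$ iterations then gives
\[
h_k(\vy_k)-\min h_k\le\tfrac{1}{2}(L_y+\mu_y+2/\sigma)\,\|\vy_{k-1}-\vy_k^*\|^2\exp\!\bigl(-T\sqrt{(1+\tilde\kappa)/(\kappa_y+\tilde\kappa)}\bigr),
\]
and symmetrically for $g_k$.

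Next I would bound the warm-start distance. Writing $\vy_k^*=\prox_{\sigma h}(\vy_{k-1}+\sigma\mA^\top\tilde\vx_{k-1})$ and observing that the saddle condition gives $\vy^*=\prox_{\sigma h}(\vy^*+\sigma\mA^\top\vx^*)$, Lemma~\ref{lem:prox:Lipschitz} yields $\|\vy_k^*-\vy^*\|\le\|\vy_{k-1}-\vy^*\|+\sigma\|\mA\|\,\|\tilde\vx_{k-1}-\vx^*\|$; combining this with the triangle inequality and $\sigma\|\mA\|=\sqrt{\mu_x/\mu_y}$ gives $\|\vy_{k-1}-\vy_k^*\|^2\le O\!\bigl(\|\vy_{k-1}-\vy^*\|^2+(\mu_x/\mu_y)\|\tilde\vx_{k-1}-\vx^*\|^2\bigr)$. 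Because $\tilde\vx_{k-1}=\vx_{k-1}+\theta(\vx_{k-1}-\vx_{k-2})$, the last term is itself controlled by $\|\vx_{k-1}-\vx^*\|^2$ and $\|\vx_{k-2}-\vx^*\|^2$, and an analogous derivation handles $\|\vx_{k-1}-\vx_k^{**}\|^2$ with $\vx_k^{**}=\argmin g_k$ and the factor $\gamma\|\mA\|=\sqrt{\mu_y/\mu_x}$.

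I would then invoke Theorem~\ref{thm:aipfb-out} at the iterates $j\in\{k-2,k-1\}$ to obtain $\mu_x\|\vx_j-\vx^*\|^2+\mu_y\|\vy_j-\vy^*\|^2\le CC_0(1-\rho)^j$, which upgrades the previous inequalities to $\mu_y\|\vy_{k-1}-\vy_k^*\|^2\le O(C)\cdot C_0(1-\rho)^{k-1}$ and the symmetric statement for $g_k$. Substituting back into the AGD bound and demanding its right-hand side be at most $\eps_k=C_0\rho(1-\theta)(1-\rho)^{k-1}/16$, the $(1-\rho)^{k-1}$ factors cancel, so the required iteration count is at most
\[
\sqrt{(\kappa_y+\tilde\kappa)/(1+\tilde\kappa)}\cdot\log\!\bigl(O(C(\kappa_y+\tilde\kappa+1))/(\rho(1-\theta))\bigr)+1,
\]
which, after absorbing the ceiling into a floor-plus-one, matches the stated expression; the corresponding count for the other subproblem arises identically by exchanging $\kappa_x\leftrightarrow\kappa_y$.

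The main obstacle is purely bookkeeping: matching the explicit numerical constants ($320$, $80$, and the coefficient of $2$ in front of $\tilde\kappa$ inside the logarithm) requires carefully tracking the $2$'s picked up from each triangle-inequality step, the multiplicative $C$ inherited from Theorem~\ref{thm:aipfb-out}, and the precise ratio between $\eps_k$ and the AGD prefactor $L+\mu+2/\sigma$ (or $2/\gamma$). The factor $(1-\rho)$ appearing in one of the logarithmic arguments but not the other reflects the fact that $h_k$ is built from the momentum iterate $\tilde\vx_{k-1}$ (so two consecutive outer iterates enter the warm-start bound) whereas $g_k$ only depends on $\vy_k$ from the current iteration.
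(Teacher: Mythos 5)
Your plan matches the paper's own proof step for step: identify the subproblem condition numbers $(\kappa_x+\tilde\kappa)/(1+\tilde\kappa)$ and $(\kappa_y+\tilde\kappa)/(1+\tilde\kappa)$, use Lemma~\ref{lem:prox:Lipschitz} to reduce the warm-start distances $\norm{\vy_{k-1}-\vy_k^*}$ and $\norm{\vx_{k-1}-\vx_k^*}$ to outer-iterate distances, invoke the outer-loop rate of Lemma~\ref{thm:aipfb-out} so the $(1-\rho)^{k}$ factors cancel against $\eps_k$, and then read off the AGD iteration count from Theorem~\ref{thm:agd}. Your explanation of why one logarithmic argument carries an extra $(1-\rho)$ (because $h_k$ inherits $\vx_{k-2}$ through the momentum iterate $\tilde\vx_{k-1}$, giving $(1-\rho)^{k-2}$, whereas $g_k$ involves only $\vx_{k-1}$ and $\vy_k$) is exactly the paper's reasoning, and it also correctly reveals that the words ``of $g_k$'' and ``of $h_k$'' in the statement of Lemma~\ref{lem:aipfb:inner} are swapped relative to the condition numbers shown in $K_1$ and $K_2$; the paper's own proof assigns $K_1$ (with $\kappa_y$) to $h_k$ and $K_2$ (with $\kappa_x$) to $g_k$.
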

\begin{proof}
    By Lemma \ref{lem:prox:Lipschitz} and Cauchy-Schwarz inequality, we have
    \begin{align*}
        \norm{\vy_k^* - \vy^*}^2
        &\le \norm{\vw_k - \vw^*}^2 \\
        & \le 2 \left( \norm{\vy_{k-1} - \vy^*}^2 + \sigma^2 \norm{\mA}^2 \norm{\tilde{\vx}_{k-1} - \vx^*}^2 \right) \\
        & = 2 \left( \norm{\vy_{k-1} - \vy^*}^2 +  2 (1 + \theta)^2 \frac{\mu_x}{\mu_y} \norm{\vx_{k-1} - \vx^*}^2 + 2 \theta^2 \frac{\mu_x}{\mu_y} \norm{\vx_{k-2} - \vx^*}^2 \right) \\
        & \le 2 \norm{\vy_{k-1} - \vy^*}^2 + \frac{16 \mu_x}{\mu_y} \norm{\vx_{k-1} - \vx^*}^2 + \frac{4 \mu_x}{\mu_y} \norm{\vx_{k-2} - \vx^*}^2.
    \end{align*}
    It follows that
    \begin{align*}
        \norm{\vy_{k-1} - \vy_k^*}^2
        & \le 2 \norm{\vy_{k-1} - \vy^*}^2 + 2 \norm{\vy_k^* - \vy^*}^2 \\
        & \le 6 \norm{\vy_{k-1} - \vy^*}^2 + \frac{32 \mu_x}{\mu_y} \norm{\vx_{k-1} - \vx^*}^2 + \frac{8 \mu_x}{\mu_y} \norm{\vx_{k-2} - \vx^*}^2. % \\
        % & \le \frac{40}{\mu_y} C_T \tilde{\theta}^{k-3},
    \end{align*}
    By Theorem \ref{thm:aipfb-out}, we have $ \norm{\vy_{k-1} - \vy_k^*}^2 \le 40 C C_0 (1 - \rho)^{k-2} / \mu_y $.
    Denote $\tilde{\kappa} = \frac{\norm{\mA}}{\sqrt{\mu_x \mu_y}}$.
    Then the condition number of function $h_k$ is 
    \begin{align*}
        \frac{L_y + 1/\sigma}{\mu_y + 1/\sigma} = \frac{\kappa_y + \tilde{\kappa} }{1 + \tilde{\kappa} }.
    \end{align*}
    Thus, by Theorem \ref{thm:agd}, the first subproblem in step $k$ will need to run AGD with initial point $\vy_{k-1}$ at most $K_1$ steps where $K_1$ satisfies 
    \begin{align*}
        K_1 
        & = \floor{\sqrt{ \frac{\kappa_y + \tilde{\kappa} }{1 + \tilde{\kappa}}}
        \log\left( \frac{ 20 C C_0 (1 - \rho)^{k-2} (L_y + \mu_y + 2 / \sigma) }{\mu_y \eps_k} \right)} + 1 \\
        & = \floor{\sqrt{ \frac{\kappa_y + \tilde{\kappa} }{1 + \tilde{\kappa} }}
        \log\left( \frac{ 320 C (1 - \rho) (\kappa_y + 2 \tilde{\kappa} + 1) }{ \rho (1 - \theta)  } \right)} + 1.
    \end{align*}
    On the other hand, Lemma \ref{lem:prox:Lipschitz} and Cauchy-Schwarz inequality also imply
    \begin{align*}
        \norm{\vx_k^* - \vx^*}^2
        &\le \norm{\vz_k - \vz^*}^2 \\
        & \le 2 \left( \norm{\vx_{k-1} - \vx^*}^2 + \gamma^2 \norm{\mA}^2 \norm{\vy_k - \vy^*}^2 \right) \\
        & = 2 \left( \norm{\vx_{k-1} - \vx^*}^2 + \frac{\mu_y}{\mu_x} \norm{\vy_k - \vy^*}^2 \right).
    \end{align*}
    It follows that
    \begin{align*}
        \norm{\vx_{k-1} - \vx_k^*}^2
        \le 2 \norm{\vx_{k-1} - \vx^*}^2 + 2 \norm{\vx_k^* - \vx^*}^2
        \le 6 \norm{\vx_{k-1} - \vx^*}^2 +  \frac{4 \mu_y}{\mu_x} \norm{\vy_k - \vy^*}^2
        \le \frac{10}{\mu_x} C C_0 (1 - \rho)^{k-1}.
    \end{align*}
    The condition number of function $g_k$ is 
    \begin{align*}
        \frac{L_x + 1/\gamma}{\mu_x + 1/\gamma} = \frac{\kappa_x + \tilde{\kappa} }{1 + \tilde{\kappa} }.
    \end{align*}
    Thus, by Theorem \ref{thm:agd}, the second subproblem in step $k$ will need to run AGD with initial point $\vx_{k-1}$ at most $K_2$ steps where $K_2$ satisfies 
    \begin{align*}
        K_2 
        & = \floor{\sqrt{ \frac{\kappa_x + \tilde{\kappa} }{1 +\tilde{\kappa} }}
        \log\left( \frac{5 C C_0 (1 - \rho)^{k-1} (L_x + \mu_x + 2 / \gamma) }{\mu_x \eps_k} \right)} + 1 \\
        & = \floor{\sqrt{ \frac{\kappa_x + \tilde{\kappa} }{1 + \tilde{\kappa} }}
        \log\left( \frac{80 C (\kappa_x  + 2 \tilde{\kappa} + 1) }{ \rho (1 - \theta) } \right)} + 1.
    \end{align*}
\end{proof}

Then we can provide the proof of Theorem \ref{thm:aipfb}.
\begin{proof}[Proof of Theorem \ref{thm:aipfb}]
    Denote $\tilde{\kappa} = \frac{\norm{\mA}}{ \sqrt{\mu_x \mu_y}}$. \\
    By Lemma \ref{thm:aipfb-out}, in order to find $\eps$-saddle point of $f$, we need to run AIPFB  
    \begin{align*}
        K = \floor{ ( 4\tilde{\kappa} + 2) \log\left( \frac{C C_0}{\eps} \right)} + 1
    \end{align*}
    steps.
    Therefore, the number of total queries to Oracle (\ref{eq:oracle}) is upper bounded by
    \begin{align*}
        K( K_1 + K_2) 
        &= \tilde{\gO} \left( (1 + \tilde{\kappa}) \sqrt{ \frac{ \kappa_x + \kappa_y + \tilde{\kappa}}{1 + \tilde{\kappa}} } \log \left( \frac{\mu_x \norm{\vx_{0} - \vx^*}^2 + \mu_y \norm{\vy_{0} - \vy^*}^2}{\eps} \right) \right) \\
        &= \tilde{\gO} \left( \left( \sqrt{\tilde\kappa^2 + \kappa_x + \kappa_y + \tilde\kappa(\kappa_x + \kappa_y) } \right) \log \left( \frac{\mu_x \norm{\vx_{0} - \vx^*}^2 + \mu_y \norm{\vy_{0} - \vy^*}^2}{\eps} \right) \right).
    \end{align*}
\end{proof}

\subsection{An Improved upper bound for AIPFB}
For $L_x = L_y$ and $\mu_x \le \mu_y$, the complexity in Theorem \ref{thm:aipfb} becomes
\begin{align*}
    \tilde\gO\left(\frac{\norm{\mA}}{\sqrt{\mu_x \mu_y}} + \sqrt{\frac{\norm{\mA} L}{\mu_x^{3/2} \mu_y^{1/2}}} + \sqrt{\kappa_x + \kappa_y}\right),
\end{align*}
where $L = \max\{\norm{\mA}, L_x\}$. In this section, we improve term $\sqrt{\frac{\norm{\mA} L}{\mu_x^{3/2} \mu_y^{1/2}}}$ to be $\sqrt{\frac{\norm{\mA} L}{\mu_x \mu_y}}$ by Catalyst framework.

Without loss of generality, we can assume that $L_x = L_y$. Otherwise, one can rescale the variables and take $\hat{f} = f( \sqrt[4]{L_y / L_x} \vx, \sqrt[4]{L_x/L_y} \vy)$. It is not hard to check that this rescaling will not change condition numbers $\kappa_x, \kappa_y$, $\mu_x \mu_y$, the coupling matrix $A$ and increase $\max \{ \norm{\mA}, L_x , L_y \}$. 

We first consider the special case where $\mu_x = \mu_y$. The total queries to Oracle (\ref{eq:oracle}) needed by Algorithm \ref{algo:aipfb} to produce $\eps$-saddle point is at most
\begin{align*}
    \tilde\gO\left(\left( \frac{\sqrt{\norm{\mA} L}}{\mu_y} + \sqrt{\kappa_y}\right)\log\left( \frac{\mu_x \norm{\vx_{0} - \vx^*}^2 + \mu_y \norm{\vy_{0} - \vy^*}^2}{\eps} \right)\right).
\end{align*}

\begin{algorithm}[tb]
    \caption{Catalyst-AIPFB }\label{algo:catalyst-aipfb}
    \begin{algorithmic}[1]
        \STATE \textbf{Input:} function $f$, initial point $(\vx_0, \vy_0)$, smoothness $L_x, L_y$, strongly convex module $\mu_x < \mu_y$, run-time $T$, accuracy sequence $\{\eps_k\}_{k\ge 1}$. \\[0.1cm]
        \STATE \textbf{Initialize:} $\beta = \frac{L_x(\mu_y - \mu_x)}{L_x - \mu_y}$, $q = \frac{\mu_x}{\mu_x + \beta}$, $\theta = \frac{1 - \sqrt{q}}{1 + \sqrt{q}}$ and $\tilde\vx_0 = \vx_0$. \\[0.1cm]
        \STATE \textbf{for} $k = 1, \cdots, K$ \textbf{do}\\[0.1cm]
        \STATE\quad Let $f_k(\vx, \vy) = f(\vx, \vy) + \frac{\beta}{2} \norm{\vx - \tilde\vx_k}^2$.\\[0.15cm]
        \STATE\quad Obtain $\eps_k$-saddle point $(\vx_{k}, \vy_{k})$ of $f_k(\vx, \vy)$ by applying AIPFB.
        \STATE\quad $\tilde\vx_{k} = \vx_{k} + \theta (\vx_{k} - \vx_{k-1})$.
        \STATE\textbf{end for} \\[0.1cm]
        \STATE \textbf{Output:} $\vx_T, \vy_T$.
    \end{algorithmic}
\end{algorithm}

For the general case, without loss of generality we assume $\mu_x \le \mu_y$. 
Similar to Catalyst-DIPPA, We can apply Catalyst framework to accelerate the Algorithm \ref{algo:aipfb}. The details are presented in Algorithm \ref{algo:catalyst-aipfb}. 
Again we remark that the function $f_k$ in each subproblem is balanced: the condition number corresponding to $\vy$ is $\kappa_y$ and the condition number related to $\vx$ is
\begin{align*}
    \frac{L_x + \beta}{\mu_x + \beta} = \frac{L_x(L_x - \mu_y) + L_x (\mu_y - \mu_x)}{\mu_x(L_x - \mu_y) + L_x (\mu_y - \mu_x)} = \kappa_y,
\end{align*}
where we have recalled that $L_x = L_y$. 

By results of Catalyst \cite{yang2020catalyst}, the number of total queries to Oracle (\ref{eq:oracle}) is upper bounded by
\begin{align*}
    &\quad \tilde\gO\left( \sqrt{\frac{\mu_x + \beta}{\mu_x}} \right) \tilde
    \gO\left( \sqrt{\frac{\norm{\mA} L}{\mu_y^2}} + \sqrt{\kappa_y} \right)  \\
    &=  \tilde{\gO} \left(  \sqrt{\frac{\norm{\mA} L}{\mu_x \mu_y} } + \sqrt{\kappa_x + \kappa_y} \right).
\end{align*}
    
A formal statement of theoretical guarantee for Catalyst-AIPFB is presented as follows. 
\begin{thm}\label{thm:catalyst-aipfb}
    Assume that $g(\vx)$ is $L_x$-smooth and $\mu_x$-strongly convex and $h(\vy)$ is $L_y$-smooth and $\mu_y$-strongly convex. The total queries to Oracle (\ref{eq:oracle}) needed by Algorithm \ref{algo:catalyst-aipfb} to produce $\eps$-saddle point of $f(\vx, \vy) = g(\vx) + \inner{\vx}{A \vy} - h(\vy)$ is at most
    \begin{align*}
        \tilde\gO\left(\left(\sqrt{\frac{\norm{A} L}{\mu_x \mu_y}} + \sqrt{\kappa_x + \kappa_y}\right)\log\left( \frac{1}{\eps} \right)\right),
    \end{align*}
    where $L = \max \left\{ \norm{A}, L_x, L_y \right\}$, $\kappa_x = L_x / \mu_x$, $\kappa_y = L_y / \mu_y$ and the notation $\tilde\gO$ have omitted some logarithmic factors depending on $L_x$, $L_y$, $\norm{A}$, $\mu_x$ and $\mu_y$. 
\end{thm}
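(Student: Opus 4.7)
The plan is to combine the inner-loop guarantee of Theorem \ref{thm:aipfb} with the outer-loop acceleration provided by the Catalyst framework, organised exactly as the pseudocode of Algorithm \ref{algo:catalyst-aipfb} suggests.

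\textbf{Step 1 (reduction to $L_x=L_y$).} I would begin by observing that the rescaling $\hat f(\vx,\vy) = f(\sqrt[4]{L_y/L_x}\vx,\sqrt[4]{L_x/L_y}\vy)$ preserves $\kappa_x$, $\kappa_y$, $\mu_x\mu_y$, the spectral norm $\|\mA\|$, and $L=\max\{\|\mA\|,L_x,L_y\}$ up to constant factors, while forcing $L_x=L_y$. After this normalisation I may assume WLOG that $\mu_x\le\mu_y$ (otherwise swap the roles of the two players and relabel). Then I would dispense with the trivial case $\mu_x=\mu_y$ by invoking Theorem \ref{thm:aipfb} directly: in that case the second term $\sqrt{\|\mA\|(\kappa_x+\kappa_y)}/\sqrt[4]{\mu_x\mu_y}$ of Theorem \ref{thm:aipfb} is exactly $\sqrt{\|\mA\|L/(\mu_x\mu_y)}$ up to constants, and the first term $\|\mA\|/\sqrt{\mu_x\mu_y}$ is dominated by it because $\|\mA\|\le L$.

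\textbf{Step 2 (balanced subproblems).} For the genuine unbalanced case I would analyse each outer iteration of Algorithm \ref{algo:catalyst-aipfb}. With $\beta=L_x(\mu_y-\mu_x)/(L_x-\mu_y)$, the subproblem $f_k(\vx,\vy)=g(\vx)+\tfrac{\beta}{2}\|\vx-\tilde\vx_k\|^2+\inner{\vx}{\mA\vy}-h(\vy)$ has strong convexity $\mu_x+\beta$ and smoothness $L_x+\beta$ in $\vx$. The algebra performed already in the excerpt shows $(L_x+\beta)/(\mu_x+\beta)=\kappa_y$, so $f_k$ is a balanced bilinear saddle point problem with common condition number $\kappa_y$ and coupling matrix $\mA$. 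A direct computation that I would include gives $\mu_x+\beta=\mu_y(L_x-\mu_x)/(L_x-\mu_y)$ and $L_x+\beta=L_x(L_x-\mu_x)/(L_x-\mu_y)$, both of which are $\Theta(\mu_y)$ and $\Theta(L_x)$ respectively (assuming $L_x\ge 2\mu_y$; the remaining regime is easy and handled separately). Consequently the effective parameters for $f_k$ satisfy $(\mu_x+\beta)\mu_y=\Theta(\mu_y^2)$ and $L^{(k)}=\max\{\|\mA\|,L_x+\beta,L_y\}=\Theta(L)$.

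\textbf{Step 3 (inner AIPFB cost).} Applying Theorem \ref{thm:aipfb} to $f_k$ in the balanced regime $\kappa_x^{(k)}=\kappa_y^{(k)}=\kappa_y$, the three terms collapse as in Step 1, so each inner call to AIPFB that produces an $\eps_k$-saddle point of $f_k$ costs at most
\begin{align*}
\tilde\gO\left(\left(\sqrt{\frac{\|\mA\|L}{\mu_y^2}}+\sqrt{\kappa_y}\right)\log(1/\eps_k)\right).
\end{align*}

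\textbf{Step 4 (outer Catalyst iteration count and assembly).} The outer loop is the Catalyst acceleration for the $\mu_x$-strongly convex variable $\vx$ using a proximal smoothing of strength $\beta$. Invoking the standard Catalyst guarantee of \citet{yang2020catalyst} with $q=\mu_x/(\mu_x+\beta)$ and a geometrically decaying tolerance sequence $\{\eps_k\}$, the outer loop needs $\tilde\gO(\sqrt{(\mu_x+\beta)/\mu_x}\,\log(1/\eps))=\tilde\gO(\sqrt{\mu_y/\mu_x}\,\log(1/\eps))$ iterations (using $\mu_x+\beta=\Theta(\mu_y)$). Multiplying the inner and outer costs,
\begin{align*}
\tilde\gO\!\left(\sqrt{\tfrac{\mu_y}{\mu_x}}\right)\cdot\tilde\gO\!\left(\sqrt{\tfrac{\|\mA\|L}{\mu_y^2}}+\sqrt{\kappa_y}\right)=\tilde\gO\!\left(\sqrt{\tfrac{\|\mA\|L}{\mu_x\mu_y}}+\sqrt{\tfrac{L_y}{\mu_x}}\right),
\end{align*}
and since $L_x=L_y$ the second summand equals $\sqrt{\kappa_x}\le\sqrt{\kappa_x+\kappa_y}$, giving the stated rate.

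\textbf{Main obstacle.} The routine parts are the algebraic verifications that the subproblem is balanced and that $(\mu_x+\beta)\mu_y=\Theta(\mu_y^2)$. The main delicate point is the rigorous invocation of the Catalyst framework: one must show that the $\eps_k$-saddle-point accuracy returned by AIPFB in the distance metric $\|\cdot\|$ is strong enough to feed the Catalyst outer-loop analysis, which is normally phrased in terms of functional suboptimality of the proximal subproblem in $\vx$. Converting between these notions uses only strong convexity/smoothness of $f_k$ in $\vx$ for fixed $\vy=\vy_k$, combined with the saddle-point inequality, but spelling out the resulting logarithmic overhead in $\{\eps_k\}$ (hidden inside the $\tilde\gO$) is where care is required.
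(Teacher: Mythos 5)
Your proposal follows the same route as the paper: WLOG rescale so that $L_x = L_y$, regularize with $\beta = L_x(\mu_y - \mu_x)/(L_x - \mu_y)$ to make each subproblem $f_k$ balanced with condition number $\kappa_y$, apply the AIPFB bound (Theorem \ref{thm:aipfb}) to the balanced subproblem to get an inner cost of $\tilde\gO\bigl(\sqrt{\|\mA\|L/\mu_y^2} + \sqrt{\kappa_y}\bigr)$, and multiply by the Catalyst outer-loop count $\tilde\gO\bigl(\sqrt{(\mu_x+\beta)/\mu_x}\bigr)$. The paper is terser — it invokes the "special case $\mu_x = \mu_y$" form of the AIPFB bound for $f_k$ without writing out the justification that $\mu_x + \beta = \Theta(\mu_y)$ (or an equivalent variable rescaling of $\vx$), whereas you make that step explicit and correctly flag both the degenerate regime $L_x$ close to $\mu_y$ and the need to convert between distance-to-saddle accuracy and the functional-suboptimality tolerance that the Catalyst analysis expects; the paper glosses over both.
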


\end{document}